\documentclass[english,american]{article}


\usepackage[a4paper, 
includeheadfoot, margin = 2.54cm]{geometry}

\usepackage[T1]{fontenc}
\usepackage[latin9]{inputenc}
\usepackage{xcolor}
\usepackage[english]{babel}
\usepackage{float}
\usepackage{mathrsfs}
\usepackage{mathtools}
\usepackage{amsmath}
\usepackage{amsthm}
\usepackage{amssymb}
\usepackage{natbib}
\usepackage[unicode=true,pdfusetitle,
 bookmarks=true,bookmarksnumbered=false,bookmarksopen=false,
 breaklinks=false,pdfborder={0 0 1},backref=false,colorlinks=false]
 {hyperref}
\usepackage{capt-of}

\makeatletter

\floatstyle{ruled}
\newfloat{algorithm}{tbp}{loa}
\providecommand{\algorithmname}{Algorithm}
\floatname{algorithm}{\protect\algorithmname}

\numberwithin{equation}{section}
\newenvironment{lyxcode}
	{\par\begin{list}{}{
		\setlength{\rightmargin}{\leftmargin}
		\setlength{\listparindent}{0pt}
		\raggedright
		\setlength{\itemsep}{0pt}
		\setlength{\parsep}{0pt}
		\normalfont\ttfamily}%
	 \item[]}
	{\end{list}}
\theoremstyle{plain}
\newtheorem{thm}{\protect\theoremname}
\theoremstyle{plain}
\newtheorem{lem}[thm]{\protect\lemmaname}
\theoremstyle{plain}
\newcounter{assumptioncounter}
\setcounter{assumptioncounter}{1}
\newtheorem{assumption}{\protect\assumptionname}

\theoremstyle{remark}
\newtheorem{rem}[thm]{\protect\remarkname}
\theoremstyle{plain}
\newtheorem{cor}[thm]{\protect\corollaryname}
\theoremstyle{plain}
\newtheorem{prop}[thm]{\protect\propositionname}
\theoremstyle{plain}
\newtheorem{example}[thm]{\protect\examplename}

\@ifundefined{date}{}{\date{}}

\usepackage{bbm}

\usepackage{mathtools}

\allowdisplaybreaks

\usepackage{hyperref}

\usepackage[capitalise,nameinlink]{cleveref}

\AtBeginDocument{
\let\ref\cref
}

\usepackage{autonum}

\makeatletter
\newcommand{\restore@Environment}[1]{%
  \AtBeginDocument{%
    \csletcs{#1*}{#1}%
    \csletcs{end#1*}{end#1}%
  }%
}
\forcsvlist\restore@Environment{alignat,equation,gather,multline,flalign,align}
\makeatother

\usepackage{tikz}


\crefname{equation}{}{}

\crefalias{thm}{theorem}

\usepackage{etoolbox}

\ifcsmacro{defn}{}{}
\ifcsmacro{thm}{}{\newtheorem{thm}{test}}
\ifcsmacro{lem}{}{\newtheorem{lem}{test}}
\ifcsmacro{prop}{}{\newtheorem{prop}{test}}
\ifcsmacro{cor}{}{\newtheorem{cor}{test}}
\ifcsmacro{assumption}{}{\newtheorem*{assumption*}{test}}
\ifcsmacro{example}{}{\newtheorem{example}{Example}}

\usepackage{enumitem}
\setlist[enumerate,1]{label=(\alph*)}
\setlist[enumerate,2]{label=(\roman*)}


\let\oldthm\thm
\renewcommand{\thm}{%
\crefalias{thm}{theorem}
\oldthm
}

\let\oldlem\lem
\renewcommand{\lem}{%
\crefalias{thm}{lemma}
\oldlem
}

\let\oldprop\prop
\renewcommand{\prop}{%
\crefalias{thm}{proposition}
\oldprop
}

\let\oldcor\cor
\renewcommand{\cor}{%
\crefalias{thm}{corollary}
\oldcor
}

\let\oldexample\example
\renewcommand{\example}{%
\crefalias{thm}{example}
\oldexample
}

\let\oldassumption\assumption
\renewcommand{\assumption}{%
\crefalias{thm}{assumption}
\oldassumption
}

\let\oldrem\rem
\renewcommand{\rem}{%
\crefalias{thm}{remark}
\oldrem
}

\usepackage{crossreftools}
\let\ORGhypersetup\hypersetup
\protected\def\hypersetup{\ORGhypersetup}
\pdfstringdefDisableCommands{%
  \def\hypersetup#1{}%
  \let\Cref\crtCref
  \let\cref\crtcref
  \let\ref\crtcref
}

\crefname{assumption}{Assumption}{Assumptions}

\makeatother

\addto\captionsamerican{\renewcommand{\assumptionname}{Assumption}}
\addto\captionsamerican{\renewcommand{\corollaryname}{Corollary}}
\addto\captionsamerican{\renewcommand{\lemmaname}{Lemma}}
\addto\captionsamerican{\renewcommand{\propositionname}{Proposition}}
\addto\captionsamerican{\renewcommand{\remarkname}{Remark}}
\addto\captionsamerican{\renewcommand{\examplename}{Example}}
\addto\captionsamerican{\renewcommand{\theoremname}{Theorem}}
\addto\captionsenglish{\renewcommand{\algorithmname}{Algorithm}}
\addto\captionsenglish{\renewcommand{\assumptionname}{Assumption}}
\addto\captionsenglish{\renewcommand{\corollaryname}{Corollary}}
\addto\captionsenglish{\renewcommand{\lemmaname}{Lemma}}
\addto\captionsenglish{\renewcommand{\propositionname}{Proposition}}
\addto\captionsenglish{\renewcommand{\remarkname}{Remark}}
\addto\captionsenglish{\renewcommand{\theoremname}{Theorem}}
\addto\captionsenglish{\renewcommand{\examplename}{Example}}
\providecommand{\assumptionname}{Assumption}
\providecommand{\corollaryname}{Corollary}
\providecommand{\lemmaname}{Lemma}
\providecommand{\propositionname}{Proposition}
\providecommand{\remarkname}{Remark}
\providecommand{\theoremname}{Theorem}
\providecommand{\examplename}{Example}

\AtBeginDocument

\DeclareFontFamily{U}{mathx}{\hyphenchar\font45}
\DeclareFontShape{U}{mathx}{m}{n}{
	<5> <6> <7> <8> <9> <10>
	<10.95> <12> <14.4> <17.28> <20.74> <24.88>
	mathx10
}{}
\DeclareSymbolFont{mathx}{U}{mathx}{m}{n}
\DeclareFontSubstitution{U}{mathx}{m}{n}
\DeclareMathAccent{\widecheck}{0}{mathx}{"71}
\DeclareMathAccent{\wideparen}{0}{mathx}{"75}

\global\long\def\N{\mathbb{N}}%
\global\long\def\R{\mathbb{R}}%
\global\long\def\P{\mathbb{P}}%
\global\long\def\E{\mathbb{E}}%
\global\long\def\1{\mathbbm1}%
\global\long\def\as{\ a.s.}%
\global\long\def\d{\mathrm{d}}%
\global\long\def\e{\mathrm{e}}%
\global\long\def\diff#1#2{\frac{\mathrm{d}#1}{\mathrm{d}#2}}%
\global\long\def\lebesgue{\lambda\mkern-13mu  \lambda}%
\global\long\def\argmin#1{\operatorname*{arg\,\min}_{#1}}%
\global\long\def\eps{\varepsilon}%
\global\long\def\theta{\vartheta}%
\global\long\def\le{\leqslant}%
\global\long\def\ge{\geqslant}%
\global\long\def\KL{\operatorname{KL}}%
\foreignlanguage{english}{}
\global\long\def\subset{\subseteq}%
\global\long\def\diam{\operatorname{diam}}%
\global\long\def\comp{\mathsf{c}}%
\foreignlanguage{english}{}
\global\long\def\tilde#1{\widetilde{#1}}%
\foreignlanguage{english}{}
\global\long\def\hat#1{\widehat{#1}}%
\renewcommand{\Xi}{Q}

\newcommand{\rate}{\varrho_n}
\newcommand{\id}[1]{I_{#1}}
\newcommand{\pardim}{Q}
\newcommand{\inputdim}{d}

\newcommand{\Cf}{C_f} 
\newcommand{\Ceps}{C_\varepsilon}  
\newcommand{\cshallow}{C_\mathrm{shallow}}  

\newcommand{\hdepth}{p} 

\newcommand{\qnorm}{p} 

\newcommand{\Cxi}{C_\tau} 

\newcommand{\K}[2]{K_{#1}#2}

\newcommand{\cholder}{C_0}

\begin{document}
\title{The surrogate Gibbs-posterior of a corrected stochastic MALA: Towards uncertainty quantification for neural networks}
\author{\!\!\!Sebastian Bieringer$^{1,*}$\!, Gregor Kasieczka$^{1,*}$\!, Maximilian F.\ Steffen$^{2,*}$ and Mathias
Trabs$^{2,}$\thanks{The authors would like to thank Botond Szab\'o and two anonymous referees for helpful comments.
SB is supported by DASHH (Data Science in Hamburg - HELMHOLTZ Graduate School for the Structure of Matter) with the grant HIDSS-0002.
SB and GK acknowledge support by the Deutsche Forschungsgemeinschaft (DFG) under Germany's Excellence Strategy - EXC 2121 Quantum Universe - 390833306. MS and MT acknowledge  support by the DFG through project TR 1349/3-1.
The empirical studies were enabled by the Maxwell computational resources operated at Deutsches Elektronen-Synchrotron DESY, Hamburg, Germany.
}}
\date{$^{1}$Universit\"at Hamburg and $^{2}$Karlsruhe Institute of Technology}
\maketitle

\begin{abstract}
\noindent 
MALA is a popular gradient-based Markov chain Monte Carlo method to access the Gibbs-posterior distribution. Stochastic MALA (sMALA) scales to large data sets, but changes the target distribution from the Gibbs-posterior to a surrogate posterior which only exploits a reduced sample size. We introduce a corrected stochastic MALA (csMALA) with a simple correction term for which distance between the resulting surrogate posterior and the original Gibbs-posterior decreases in the full sample size while retaining scalability. In a nonparametric regression model, we prove a PAC-Bayes oracle inequality for the surrogate posterior. Uncertainties can be quantified by sampling from the surrogate posterior. Focusing on Bayesian neural networks, we analyze the diameter and coverage of credible balls for shallow neural networks and we show optimal contraction rates for deep neural networks. Our credibility result is independent of the correction and can also be applied to the standard Gibbs-posterior.
A simulation study in a high-dimensional parameter space demonstrates that an estimator drawn from csMALA based on its  surrogate Gibbs-posterior indeed exhibits these advantages in practice.
\end{abstract}

\noindent\textbf{Keywords}: Gibbs-posterior, Stochastic neural
network, optimal contraction rate, credible sets, oracle inequality

\smallskip{}

\noindent\textbf{MSC 2020:} 68T07, 62F15, 62G08, 68T37

\section{Introduction\label{sec:intro}}

An essential feature in modern data science, especially in machine learning as well as high-dimensional statistics, are large sample sizes and large parameter space dimensions. As a consequence, the design of methods for uncertainty quantification is characterized by a tension between numerically feasible and efficient algorithms and approaches which satisfy theoretically justified statistical properties. Motivated by this tension, we introduce a simple correction to stochastic MALA to achieve both: the method is scalable, \emph{i.e.}, it is computationally feasible for large samples, and we can prove an optimal bound for the prediction risk as well as uncertainty statements for the underlying posterior distribution. While the focus of this work is on the statistical behavior of the posterior distribution of the corrected stochastic MALA (csMALA), our simulation study demonstrates the practical benefits of our algorithm.

\medskip{}

Bayesian methods enjoy high popularity for quantifying uncertainties in complex models. The classical approach to sample from the posterior distribution are Markov Chain Monte Carlo methods (MCMC). For large parameter spaces gradient-based Monte Carlo methods are particularly useful with, \emph{e.g.},  Langevin dynamics serving as a prototypical example. State-of-the-art methods such as Metropolis adjusted Langevin (MALA) \cite{Besag1994,roberts1996b} and Hamiltonian Monte Carlo \cite{duaneEtAl1987,neal2011} equip a Metropolis-Hastings (MH) step to accept or reject the proposed next state of the chain. From the practical point of view, the MH step improves robustness with respect to the choice of the tuning parameters and in theory MH speeds up the convergence of the Markov chain.

If the sample size is large, the computational costs of gradient-based MCMC methods can be reduced by replacing the gradient of the full loss over all observations by a stochastic gradient. This is standard in empirical risk minimization and has been successfully applied for Langevin dynamics as well \citep{ssgMCMC2022, Li2016SGLD, Patterson2013SGRLD, Welling2011bayesianSGLD}. In this case, the MH steps remain as a computational bottleneck: Since the target distribution depends on the full dataset, we have to compute the loss on the full sample to calculate the acceptance probabilities. Among the approaches to circumvent this problem, see \citet{BardenetEtAl2017} for a review, a \emph{stochastic MH} step is presumably the most natural one. There, the full loss in the acceptance probability is replaced by a (mini-)batch approximation which reduces the computational cost of the resulting \emph{stochastic MALA} (sMALA) considerably, see \citet{wu2022}. 

\citet[Section 6.1]{BardenetEtAl2017} have argued heuristically that the naive stochastic MH step reduces the effective sample size, which determines, for instance, contraction rates of the posterior distribution, to the size of the batch. To rigorously understand the statistical consequences of a stochastic MH step, we apply the pseudo-marginal Metro\-polis-Hastings perspective by \citet{AndrieuRoberts2009} and \citet{MaclaurinAdams2014}. It turns out that a Markov Chain with a stochastic MH step does not converge to the original target posterior distribution, but a different distribution, which we call \emph{surrogate posterior} and whose statistical performance is indeed determined to the batch size only. However, we show that there is a simple correction term in the risk such that the resulting stochastic MH chain converges to a surrogate posterior which achieves the full statistical power in terms of optimal contraction rates. In combination with the MALA methodology and stochastic gradients in the proposal distribution we obtain our \emph{corrected stochastic MALA} (csMALA).

\medskip{}

In a nonparametric regression problem under a quadratic loss and under classical assumptions, we investigate the distance of the surrogate posterior associated to the stochastic MH algorithm and the corrected stochastic MH algorithm to the original posterior distribution in terms of the Kullback-Leibler divergence. While these approximation results could be used to analyze the surrogate posteriors based on properties of the original posterior as done for variational Bayes methods, see \citet{raySzabo2022}, we will instead investigate the surrogate posteriors directly which will allow for sharp rates. Still, our bounds for the Kullback-Leibler divergences indicate that our correction of the surrogate posterior is beneficial since our corrected surrogate posterior is closer to the classical Gibbs-posterior than the surrogate posterior without the correction.

We prove oracle inequalities for the surrogate posteriors of the stochastic MH method and its corrected modification. Based on that we can conclude contraction rates as well as rates of convergence for the surrogate posterior mean. These findings reveal that indeed the surrogate posterior of csMALA has a high concentration around the true regression function compared to the surrogate posterior of sMALA. Moreover, we investigate the size and coverage of credible balls from the surrogate posterior. These results apply also to the original Gibbs-posterior as a special case, which might be of interest independent of the discussion of the surrogate posterior. 

We apply the oracle inequality and the credibility theorem in the context of shallow as well as deep neural networks. 
For shallow neural networks the oracle inequality yields optimal convergence rates and credible ball diameters for H\"older-regular functions up to a logarithmic factor. Due to the complex and non-invertible relationship between a Bayesian neural network and its parameters, establishing coverage guarantees remains a longstanding open problem. Towards this aim, we show that credibility can be verified when the critical value is computed in the parameter space rather than in the prediction space. For deep neural networks we show that the contraction rate of the corrected stochastic MH procedure coincides with the minimax rate by \citet{schmidthieber2020} (up to a logarithmic factor) for H\"older-regular hierarchical regression functions. While the latter paper has analyzed sparse deep neural networks with ReLU activation function, similar results for fully connected networks are given by \citet{KohlerLanger2021} and we exploit their main approximation theorem. A mixing approach in the prior distribution, compatible with, \emph{e.g.}, \citet{alquier2013} and \citet{Guedj2013}, leads to a fully adaptive method.  

\medskip{}

A simulation study demonstrates the merit of the correction term for sampling from a $10401$ dimensional parameter space for a low-dimensional regression task.
The approximate samples from the surrogate posterior of csMALA, as well as their mean, show a significant improvement in terms of the empirical prediction risk and size of credible balls over those taken from the surrogate posterior of the naive sMALA. 
The correction term cancels the bias on the size of accepted batches introduced by the stochastic setting.
The Python code of the numerical example is available on GitHub.\footnote{ \url{https://github.com/sbieringer/csMALA.git}}

\paragraph{Related literature.}

In view of possibly better scaling properties, variational Bayes methods have been studied intensively in recent years. Instead of sampling from  the posterior distribution itself, variational Bayes methods approximate the posterior within a parametric distribution class which can be easily sampled from, see \citet{BleiEtal2017} for a review. The theoretical understanding of variational Bayes methods is a current research topic, see \cite{ZhangZhou2020,ZhangGao2020,raySzabo2022} and references therein.

\medskip{}

Our oracle inequalities rely on PAC-Bayes theory which provides \emph{probably approximately correct} error bounds and goes back to \citet{ShaweTaylor1997}
and \citet{McAllester1999a,McAllester1999b}. We refer to the review papers by \citet{Guedj2019} and \citet{Alquier2021}, the monograph by \cite{hellstroem2025} and the pioneering works by \cite{catoni2004,catoni2007}. PAC-Bayes bounds in a regression setting have been studied, see, \emph{e.g.},  \citet{Audibert2004,Audibert2009,Audibert2011} and the references therein. Note that applying PAC-Bayes techniques is not straightforward, as the underlying loss of the surrogate posterior differs from the standard $L^2$-loss with respect to which our oracle inequality is formulated.

While we focus on oracle inequalities, PAC-Bayes bounds could be used to derive numerical risk certificates and to derive cost functions for training, as demonstrated, \emph{e.g.}, by \cite{Dziugaite2017}, \cite{Zhou2018}, \cite{perezEtAl2021}, \cite{biggs2021,biggs2022,biggs2023} in the contexts of deep and shallow learning.

Our analysis of the Bayesian procedure from a frequentist point of view embeds into the nonparametric Bayesian inference, see \citet{GhosalvanderVaart2017}.
Coverage of credible sets has been studied, for instance, by \citet{SzaboEtAl2015}
and \citet{RousseauSzabo2020} and based on the Bernstein-von Mises
theorem in \citet{CastilloNickl2014} among others. We would like to point out that our proof strategy to verify credibility with PAC-Bayes techniques seems novel and provides an alternative and possibly simpler way compared to the previously mentioned results.

While contraction
rates for Bayes neural networks have been studied by \citet{polsonRockova2018}, \citet{cherief2020} and \cite{castillo2024posterior}, the theoretical properties of credible sets
are not well understood so far. \citet{franssenSzabo2022} have studied an empirical Bayesian approach where only the last layer of the network is Bayesian while the remainder of the network remains fixed.

\medskip{}

For an introduction to neural networks see, \emph{e.g.},   \citet{goodfellowEtAl2016} and \citet{schmidhuber2015}. While early theoretical foundations for neural nets are summarized by \citet{AnthonyBartlett1999}, the excellent approximation properties of deep neural nets, especially with the ReLU activation function, have been discovered in recent years, see, \emph{e.g.},  \citet{yarotsky2017} and the review paper \citet{devoreEtAl2021}. In addition to these approximation properties, an  explanation of the empirical capabilities of neural networks has been given by \citet{schmidthieber2020} as well as \citet{BauerKohler2019}: While classical regression methods suffer from the curse of dimensionality, deep neural network estimators can profit from a hierarchical structure of the regression function and a possibly much smaller intrinsic dimension. 

\medskip{}

Tailoring Markov chains to the needs of current neural network applications is a field of ongoing investigation. 
Different efforts to improve efficiency by improve mixing, that is transitioning between modes of the posterior landscape, exist.
\citet{Zhang2020cyclicSGMCMC} employ a scheduled step-size to help the algorithm move between different modes of the posterior, while contour stochastic gradient MCMC \cite{Deng2020contourSGMCMC, Deng2022ISCGLD} uses a piece-wise continuous function to flatten the posterior landscape which is itself determined through MCMC sampling or from parallel chains.
Parallel chains of different temperature are employed by \citet{Deng2020replicaexchangeSGMCMC} at the cost of memory space during computation.
Only limited research on scaling MCMC for large data has been done. Most recently, \citet{cobb2020scaling} introduced a splitting scheme for Hamiltonian Monte Carlo maintaining the full Hamiltonian.

\paragraph{Organization.}
The paper is organized as follows: In \ref{sec:estimation}, we derive the stochastic MH procedure, introduce the stochastic MH correction and study the Kullback-Leibler divergences of the corresponding surrogate posteriors from the Gibbs posterior. In \ref{sec:Oracle-inequality}, we state the oracle inequality and the resulting contraction rates in a general regression setting and we investigate credible sets. In \ref{sec:NN} we apply the methodology to the setting of Bayesian neural networks.
 The numerical performance of the method is studied in \ref{sec:Numerics} and our key takeaways are summarized in  \ref{sec:conclusion}.
 All proofs have been postponed to \ref{sec:Proofs} which starts with an overview of how the main results and the auxiliary results relate to each other in their proofs.

\section{Surrogate Gibbs-posteriors for regression\label{sec:estimation}}

The aim is to estimate a regression function $f\colon\R^{\inputdim}\to\R$, $\inputdim\in\N$, under a quadratic loss and under classical assumptions
based on a training sample $\mathcal{D}_{n}\coloneqq(\mathbf{X}_{i},Y_{i})_{i=1,\dots,n}\subset\R^{\inputdim}\times\R$
given by $n\in\N$ i.i.d.\ copies of generic random variables $(\mathbf{X},Y)\in\R^{\inputdim}\times\R$
on some probability space $(\Omega,\mathcal{A},\P)$ with $Y=f(\mathbf{X})+\eps$ and
observation error $\eps$ satisfying $\E[\eps\mid\mathbf{X}]=0$
almost surely (a.s.). Equivalently, $f(\mathbf{X})=\E[Y\mid\mathbf{X}]$
a.s. For any estimator $\hat f$, the prediction risk and its empirical
counterpart are given by
\begin{equation}
R(\hat f)\coloneqq\E_{(\mathbf{X},Y)}\big[\big(Y-\hat f(\mathbf{X})\big)^{2}\big]\qquad\text{and}\qquad R_{n}(\hat f)=\frac{1}{n}\sum_{i=1}^{n}\big(Y_{i}-\hat f(\mathbf{X}_{i})\big)^{2},\label{eq:R}
\end{equation}
respectively, where $\E$ denotes the expectation under $\P$ and $\E_Z$ is the (conditional) expectation only with respect to a random variable $Z$. The
accuracy of the estimation procedure will be quantified in terms of
the excess risk 
\begin{equation}
\mathcal{E}(\hat f)\coloneqq R(\hat f)-R(f)=\E_{\mathbf{X}}\big[\big(\hat f(\mathbf{X})-f(\mathbf{X})\big)^{2}\big]=\Vert \hat f-f\Vert_{L^{2}(\P^{\mathbf{X}})}^{2},\label{eq:excess}
\end{equation}
where $\P^{\mathbf{X}}$ denotes the distribution of $\mathbf{X}$. A common alternative from the aforementioned literature on risk certificates would be to consider the generalization gap which allows for the use of PAC-Bayes generalization bounds as training objectives, see, \emph{e.g.}, \cite{Guedj2019}, \cite{Alquier2021}, \cite{hellstroem2025}.

We consider a parametric class of potential estimators $\mathcal{F}=\{f_{\theta}:\theta\in\Theta\}$, where the finite dimensional parameter space is fixed as $\Theta=[-B,B]^{\pardim}$ for simplicity
with some $B\ge1$ and a potentially large parameter dimension $\pardim\in\N$. For $f_{\theta}\in\mathcal{F}$
we abbreviate $R(\theta)=R(f_{\theta})$ and 
\[
R_{n}(\theta)=R_{n}(f_{\theta})=\frac{1}{n}\sum_{i=1}^{n}\ell_{i}(\theta)\qquad\text{with}\qquad\ell_{i}(\theta)=\big(Y_{i}-f_{\theta}(\mathbf{X}_{i})\big)^{2}.
\]
Throughout, $\vert x\vert_{\qnorm}$ denotes the $\ell^{\qnorm}$-norm of a vector $x\in\R^{\inputdim},\,\qnorm\in[1,\infty]$. For brevity, $\vert\cdot\vert\coloneqq \vert \cdot\vert_{2}$ is the Euclidean norm. We write $a\lor b\coloneqq \max\{a,b\}$ and $a\land b\coloneqq\min\{a,b\}$ for $a,b\in\R$. The identity
matrix in $\R^{d\times d}$ is denoted by $\id{d}$ and $\mathcal{O}_\P$ is the stochastic big $O$ Landau notation.

\subsection{Prior and posterior distribution}

As prior on the parameter set of the class $\mathcal{F}$ we choose a uniform distribution $\Pi=\mathcal{U}([-B,B]^{ \pardim})$. The corresponding
\emph{Gibbs posterior} $\Pi_{\lambda}(\cdot\mid\mathcal{D}_{n})$
is defined as the solution to the minimization problem
\[
\inf_{\nu}\Big(\int R_{n}(\theta)\,\nu(\d\theta)+\frac{1}{\lambda}\KL(\nu\mid \Pi)\Big)
\]
where the infimum is taken over all probability distributions $\nu$
on $\R^{ \pardim}$. Hence, $\Pi_{\lambda}(\cdot\mid\mathcal{D}_{n})$ will
concentrate at parameters $\theta$ with a small empirical risk $R_{n}(\theta)$,
but it takes into account a regularization term determined by the
Kullback-Leibler divergence (denoted by $\KL$, see \ref{eq:KL} for a definition) to
the prior distribution $\Pi$ and weighted via the \emph{inverse temperature
parameter} $\lambda>0$. Owing to the change of measure inequality, or, the Legendre transform of the Kullback-Leibler divergence, this optimization problem has a unique solution given by 
\begin{equation}
\Pi_{\lambda}(\mathrm{d}\theta\mid\mathcal{D}_{n})\propto\exp\big(-\lambda R_{n}(\theta)\big)\Pi(\mathrm{d}\theta),\label{eq:posterior}
\end{equation}
see \citet{csiszar1975}, \citet{donsker1976}, \cite{catoni2004,catoni2007} or \ref{lem:classicallemma} below. While
\ref{eq:posterior} coincides with the classical Bayesian posterior
distribution if $Y_{i}=f_{\theta}(\mathbf{X}_{i})+\eps_{i}$ with
i.i.d.\ $\eps_{i}\sim\mathcal{N}(0,\frac{n}{2\lambda})$, the so-called tempered likelihood, see, \emph{e.g.}, \citet{dalalyan2007}, \citet{Alquier2011}, 
\citet{Bissiri2016}, \citet{Guedj2018}, \citet{Guedj2019}, 
$\exp(-\lambda R_{n}(\theta))$ serves as a proxy for the unknown
distribution of the observations given $\theta$. As we will see,
the method is indeed applicable under quite general assumptions on
the regression model. 

Based on the Gibbs posterior distribution the regression function can be estimated via a random draw from the posterior
\begin{equation}
\widehat{f}_{\lambda}\coloneqq f_{\widehat{\theta}_{\lambda}}\qquad\text{for}\qquad\widehat{\theta}_{\lambda}\mid\mathcal{D}_{n}\sim\Pi_{\lambda}(\cdot\mid\mathcal{D}_{n}),\label{eq:Estimator}
\end{equation}
or via the posterior mean
\begin{equation}
\bar{f}_{\lambda}\coloneqq\E\big[f_{\hat{\theta}_{\lambda}}\,\big\vert\,\mathcal{D}_{n}\big]=\int f_{\theta}\,\Pi_{\lambda}(\d\theta\mid\mathcal{D}_{n}).\label{eq:postMean}
\end{equation}
Another popular approach is to use the maximum a posteriori (MAP)
estimator, but we will focus on the previous two estimators. 

\medskip{}

To apply the estimators $\hat{f}_\lambda$ and $\bar{f}_\lambda$ in practice, we need to sample from the Gibbs posterior. The MCMC approach is to construct
a Markov chain $(\theta^{(k)})_{k\in\N_{0}}$ with stationary distribution
$\Pi_{\lambda}(\cdot\mid\mathcal{D}_{n})$, see \cite{RobertCasella2004}. In particular, the \emph{Langevin} MCMC sampler is given by
\begin{equation}
\theta^{(k+1)}=\theta^{(k)}-\gamma\nabla_{\theta}R_{n}(\theta^{(k)})+s W_{k},\label{eq:Langevin}
\end{equation}
where $\nabla_{\theta}R_{n}(\theta)$ denotes the  gradient of $R_{n}(\theta)$ with respect to $\theta$,  $\gamma>0$ is the learning rate and $sW_{k}\sim\mathcal{N}(0,s^2\id{ \pardim})$ is i.i.d.\ white noise with noise level $s>0$. This
approach can also be interpreted as a noisy version of the gradient
descent method commonly used to train neural networks. In practice
this approach requires careful tuning of the procedural parameters
and Langevin-MCMC suffers from relatively slow polynomial convergence
rates of the distribution of $\theta^{(k)}$ to the target distribution
$\Pi_{\lambda}(\cdot\mid\mathcal{D}_{n})$, see \cite{nickl2022,cheng18a}.
Only in special cases, the convergence
rates are faster, see, \emph{e.g.},  \citet{Freund2022} for an overview and \citet{Dalalyan2020} for the case of log-concave densities. This convergence rate
can be considerably improved by adding an MH step
resulting in the \emph{Metropolis-adjusted Langevin algorithm} (MALA), see \cite{roberts1996b}.

Applying the generic MH algorithm to $\Pi_{\lambda}(\cdot\mid \mathcal{D}_{n})$
and taking into account that the prior $\Pi$ is uniform, we obtain
the following iterative method: Starting with some initial choice
$\theta^{(0)}\in\R^{ \pardim}$, we successively generate $\theta^{(k+1)}$
given $\theta^{(k)}$, $k\in\N_{0}$, by 
\[
\theta^{(k+1)}=\begin{cases}
\theta' & \text{with probability }\alpha(\theta'\mid \theta^{(k)})\\
\theta^{(k)} & \text{with probability }1-\alpha(\theta'\mid \theta^{(k)})
\end{cases},
\]
where $\theta'$ is a random variable drawn from some conditional
proposal density $q(\cdot\mid\theta^{(k)})$ and the \emph{acceptance
probability} is chosen as
\begin{equation}
\alpha(\theta'\mid \theta)=\exp\big(-\lambda R_{n}(\theta')+\lambda R_{n}(\theta)\big)\1_{[-B,B]^{ \pardim}}(\theta')\frac{q(\theta\mid\theta')}{q(\theta'\mid\theta)}\land 1.\label{eq:alpha}
\end{equation}
In view of \ref{eq:Langevin} the probability density $q$ of the
proposal distribution is given by 
\begin{equation}
q(\theta'\mid\theta)=\frac{1}{(2\pi s^{2})^{ \pardim/2}}\exp\Big(-\frac{1}{2s^{2}}\big\vert\theta'-\theta+\gamma\nabla_{\theta}R_{n}(\theta)\big\vert^{2}\Big).\label{eq:proposal}
\end{equation}
The standard deviation $s$ should not be too large as otherwise
the acceptance probability might be too small. As a result the proposal
would rarely be accepted, the chain might not be sufficiently randomized
and the convergence to the invariant target distribution would be
too slow in practice. On the other hand, $s$ should not be smaller
than the shift $\gamma\nabla_{\theta}R_{n}(\theta)$ in the mean,
since otherwise $q(\theta\mid\theta')$ might be too small. The MH step ensures that $(\theta^{(k)})_{k\in\N_{0}}$ is a Markov chain
with invariant distribution $\Pi_{\lambda}(\cdot\mid\mathcal{D}_{n})$
(under rather mild conditions on $q$). The convergence to the invariant
distribution follows from \citet[Theorem 2.2]{Roberts1996} with geometric
rate. 

To calculate the estimators $\hat f_{\lambda}$ and $\bar{f}_{\lambda}$
from \ref{eq:Estimator} and \ref{eq:postMean}, respectively, one
chooses a \emph{burn-in} time $b\in\N$ to let the distribution of
the Markov chain stabilize at its invariant distribution and then
approximates
\[
\hat f_{\lambda}\approx f_{\theta^{(b)}}\qquad\text{and}\qquad\bar{f}_{\lambda}\approx \frac{1}{N}\sum_{k=1}^{N}f_{\theta^{(b+ck)}}.
\]
In practice, $b$ can be calibrated by plotting the training loss against the number of iterations and looking for the point at which the training loss stabilizes.

A sufficiently large \emph{gap length} $c\in\N$ ensures the necessary
variability and reduced dependence between $\theta^{(b+ck)}$
and $\theta^{(b+c(k+1))}$, whereas $N\in\N$ has to be large enough
for a good approximation of the expectation by the empirical mean. 

\subsection{Surrogate posterior of stochastic MALA}

The gradient has to be calculated only once in each MALA iteration.
Hence, using the full gradient $\nabla_{\theta}R_{n}(\theta)=\frac{1}{n}\sum_{i=1}^{n}\nabla_{\theta}\ell_{i}(\theta)$,
the additional computational price of MALA compared to training a
standard neural network by empirical risk minimization only comes
from a larger number of necessary iterations due to the rejection
with probability $1-\alpha(\theta'\mid \theta^{(k)})$. For large datasets
however the standard training of a neural network would rely on a
stochastic gradient method, where the gradient $\frac{1}{m}\sum_{i\in \mathcal{B}}\nabla_{\theta}\ell_{i}(\theta)$
is only calculated on (mini-)batches $\mathcal{B}\subset\{1,\dots,n\}$ of size
$m<n$. While we could replace $\nabla_{\theta}R_{n}(\theta)$ in
\ref{eq:proposal} by a stochastic approximation without any additional
obstacle, the MH step still requires the calculation of the loss $\ell_{i}(\theta')$
for all $1\le i\le n$ in \ref{eq:alpha}. 

To avoid a full evaluation of the empirical risk $R_{n}(\theta)$,
a natural approach is to replace the empirical risks in $\alpha(\theta'\mid \theta)$
by a batch-wise approximation, too. To study the consequences
of this approximation we follow a pseudo-marginal MH 
approach, see \cite{AndrieuRoberts2009,MaclaurinAdams2014,BardenetEtAl2017,wu2022}. 

We augment our target distribution by a set of auxiliary random variables
$Z_{1},\dots,Z_{n}\overset{\mathrm{i.i.d.}}{\sim}\mathrm{Ber}(\rho)$ with some $\rho\in(0,1]$
and aim for a reduction of the empirical risk $R_{n}(\theta)$ to
the stochastic approximation 
\[
R_{n}(\theta,Z)\coloneqq\frac{1}{n\rho}\sum_{i=1}^{n}Z_{i}\ell_{i}(\theta)
\]
in the algorithm. Hence, we define the joint target distribution by
\begin{align*}
\bar{\Pi}_{\lambda,\rho}(\theta,z\mid\mathcal{D}_{n}) & \propto\prod_{i=1}^{n}\rho^{z_{i}}(1-\rho)^{1-z_{i}}\exp\big(-\lambda R_{n}(\theta,z)\big)\Pi(\d\theta)\\
 & \propto\exp\Big(-\lambda R_{n}(\theta,z)+\log\Big(\frac{\rho}{1-\rho}\Big)\sum_{i=1}^{n}z_{i}\Big)\Pi(\d\theta),\qquad z\in\{0,1\}^{n}.
\end{align*}
The marginal distribution in $\theta$ is then given by
\begin{align}
\bar{\Pi}_{\lambda,\rho}(\theta\mid\mathcal{D}_{n})=\sum_{z\in\{0,1\}^{n}}\bar{\Pi}_{\lambda,\rho}(\theta,z\mid\mathcal{D}_{n}) & \propto\prod_{i=1}^{n}\Big(\rho \e^{-\frac{\lambda}{n\rho}\ell_{i}(\theta)}+1-\rho\Big)\Pi(\d\theta).\label{eq:PiBar}
\end{align}
As proposal for the MH algorithm we use
\begin{align}
\bar{q}(\theta',z'\mid\theta,z) & =q_{\mathrm{s}}(\theta'\mid\theta,z)\prod_{i=1}^{n}\rho^{z'_{i}}(1-\rho)^{1-z'_{i}}\qquad\text{with}\label{eq:qBar}\\
q_{\mathrm{s}}(\theta'\mid\theta,z) & =\frac{1}{(2\pi s^{2})^{ \pardim/2}}\exp\Big(-\frac{1}{2s^{2}}\big\vert\theta'-\theta+\gamma\nabla_{\theta}R_{n}(\theta,z)\big\vert^{2}\Big).\nonumber 
\end{align}
Hence, the proposed $Z'=z'$ is indeed a vector of independent $\mathrm{Ber}(\rho)$-random
variables and $q_{\mathrm{s}}(\theta'\mid\theta,z)$ is the stochastic
analogue to $q$ from \ref{eq:proposal} with a stochastic gradient.
The resulting acceptance probabilities are given by 
\begin{align*}
\alpha(\theta',z'\mid\theta,z) & =\frac{\bar{q}(\theta,z\mid\theta',z')\bar{\Pi}_{\lambda,\rho}(\theta',z'\mid\mathcal{D}_{n})}{\bar{q}(\theta',z'\mid\theta,z)\bar{\Pi}_{\lambda,\rho}(\theta,z\mid\mathcal{D}_{n})}\wedge1\\
 & =\frac{q_{\mathrm{s}}(\theta\mid\theta',z')}{q_{\mathrm{s}}(\theta'\mid\theta,z)}\1_{[-B,B]^{ \pardim}}(\theta')\e^{-\lambda R_{n}(\theta',z')+\lambda R_{n}(\theta,z)}\wedge1.
\end{align*}
We observe that $\alpha(\theta',z'\mid\theta,z)$ corresponds to a stochastic
MH step where we have to evaluate the loss $\ell_{i}(\theta')$
for the new proposal $\theta'$ only if $z_{i}'=Z'_{i}\sim\mathrm{Ber}(\rho)$
is one, \emph{i.e.}, with probability $\rho$. Calculating $\alpha(\theta',z'\mid\theta,z)$
thus requires only few evaluations of $\ell_{i}(\theta)$ for small
values of $\rho$. The expected number of data points on which the
gradient and the loss have to be evaluated is $n\rho$ and corresponds
to a batch size of $m=n\rho$. 

Generalizing \ref{eq:Estimator}, we define the stochastic MH estimator
\begin{equation}
\widehat{f}_{\lambda,\rho}\coloneqq f_{\widehat{\theta}_{\lambda,\rho}}\qquad\text{for}\qquad\widehat{\theta}_{\lambda,\rho}\mid\mathcal{D}_{n}\sim\bar{\Pi}_{\lambda,\rho}(\cdot\mid\mathcal{D}_{n}).\label{eq:EstimatorRhoStochMH}
\end{equation}
For $\rho=1$ we recover the standard MALA.

As discussed by \citet{BardenetEtAl2017}, the previous derivation
reveals that the stochastic MH step leads to a different
invariant distribution of the Markov chain, namely \ref{eq:PiBar}
instead of the Gibbs posterior from \ref{eq:posterior}. Writing
\begin{equation}
\bar{\Pi}_{\lambda,\rho}(\theta\mid\mathcal{D}_{n})\propto\exp\big(-\lambda\bar{R}_{n,\rho}(\theta)\big)\Pi(\d\theta)\qquad\text{with}\qquad\bar{R}_{n,\rho}(\theta)\coloneqq-\frac{1}{\lambda}\sum_{i=1}^{n}\log\big(\rho \e^{-\frac{\lambda}{n\rho}\ell_{i}(\theta)}+1-\rho\big),\label{eq:rBar}
\end{equation}
we observe that $\bar{\Pi}_{\lambda,\rho}(\cdot\mid\mathcal{D}_{n})$
is itself a Gibbs posterior distribution, the \emph{surrogate posterior}, corresponding to the modified
risk $\bar{R}_{n,\rho}(\theta)$. Note that $\bar{\Pi}_{\lambda,\rho}(\cdot\mid\mathcal{D}_{n})$
coincides with $\Pi_{\lambda}(\cdot\mid\mathcal{D}_{n})$ for $\rho=1$
and thus $\hat f_{\lambda}=\hat f_{\lambda,1}$ and $\bar{f}_{\lambda}=\bar{f}_{\lambda,1}$
in distribution. Whether $\bar{\Pi}_{\lambda,\rho}(\cdot\mid\mathcal{D}_{n})$ also 
behaves as our original target distribution $\Pi_{\lambda}(\cdot\mid\mathcal{D}_{n})$ for $\rho<1$ depends on the choice of $\lambda$ and $\rho$:
\begin{lem}
\label{lem:KLApprox}If $f$ and all $f_{\theta}$ are bounded by
some constant $\Cf>0$, then we have
\[
\frac{1}{n\rho}\KL\big(\bar{\Pi}_{\lambda,\rho}(\cdot\mid\mathcal{D}_{n})\,\big\vert\,\Pi_{\lambda}(\cdot\mid\mathcal{D}_{n})\big)\le\Big(\frac{\lambda}{n\rho}\Big)^{2}\Big(64\Cf^{4}+\frac{4}{n}\sum_{i=1}^{n}\eps_{i}^{4}\Big).
\]
For $\rho<1$ and the probability distribution $\varpi_{\lambda,\rho}(\theta\mid\mathcal{D}_{n}):\propto\exp\big(\rho\sum_{i=1}^{n}\e^{-\frac{\lambda}{n\rho}\ell_{i}(\theta)}\big)\Pi(\d\theta)$
we moreover have
\[
\frac{1}{n\rho}\KL\big(\bar{\Pi}_{\lambda,\rho}(\cdot\mid\mathcal{D}_{n})\,\big\vert\,\varpi_{\lambda,\rho}(\cdot\mid\mathcal{D}_{n})\big)\le\frac{\rho}{1-\rho}.
\]
\end{lem}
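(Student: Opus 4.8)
The plan is to exploit that all three distributions are Gibbs posteriors against the \emph{same} base measure $\Pi$. For probability measures $p\propto\e^{g}\,\Pi$ and $q\propto\e^{h}\,\Pi$ with $\Pi$-integrable exponents, computing $\log(\d p/\d q)=g-h+\log(Z_h/Z_g)$ and using $Z_g/Z_h=\E_q[\e^{g-h}]$ for the normalising constants yields the identity
\[
\KL(p\mid q)=\E_{p}[g-h]-\log\E_{q}\big[\e^{\,g-h}\big].
\]
Both bounds then reduce to plugging in the relevant exponents and controlling the per-coordinate log-ratio; the term $-\log\E_{q}[\,\cdot\,]$ will in each case either have a convenient sign or be uniformly bounded.

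For the first bound I would take $p=\bar{\Pi}_{\lambda,\rho}(\cdot\mid\mathcal{D}_{n})$ and $q=\Pi_{\lambda}(\cdot\mid\mathcal{D}_{n})$. From \ref{eq:rBar} and the definition of $R_n$ one reads off directly that $g-h=\lambda\big(R_n-\bar{R}_{n,\rho}\big)=\sum_{i=1}^{n}\phi(x_i)$ with $x_i=\frac{\lambda}{n\rho}\ell_i(\theta)\ge0$ and $\phi(x)=\rho x+\log(\rho\e^{-x}+1-\rho)$. Since $\phi\ge0$ on $[0,\infty)$, we have $\e^{\sum_i\phi(x_i)}\ge1$, so $-\log\E_{q}[\e^{g-h}]\le0$ and hence $\KL(\bar{\Pi}_{\lambda,\rho}\mid\Pi_{\lambda})\le\E_{\bar{\Pi}_{\lambda,\rho}}[\sum_i\phi(x_i)]$. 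The analytic core is the elementary estimate $\phi(x)\le\frac{\rho}{2}x^{2}$ for $x\ge0$, which I would obtain from $\log(1+\rho(\e^{-x}-1))\le\rho(\e^{-x}-1)$ followed by $x+\e^{-x}-1\le\frac12x^{2}$. This gives the pointwise bound $\sum_i\phi(x_i)\le\frac{\lambda^{2}}{2n^{2}\rho}\sum_i\ell_i(\theta)^{2}$. Finally, boundedness of $f$ and the $f_\theta$ by $C$ gives $|f(\mathbf{X}_i)-f_\theta(\mathbf{X}_i)|\le2C$, so $\ell_i(\theta)^{2}=(Y_i-f_\theta(\mathbf{X}_i))^{4}\le 128C^{4}+8\eps_i^{4}$ by $(a+b)^{4}\le8a^{4}+8b^{4}$; dividing by $n\rho$ produces exactly $\big(\frac{\lambda}{n\rho}\big)^{2}\big(64C^{4}+\frac4n\sum_i\eps_i^{4}\big)$.

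For the second bound I would take $p=\bar{\Pi}_{\lambda,\rho}$ and $q=\varpi_{\lambda,\rho}$, so that with $a_i=\e^{-\frac{\lambda}{n\rho}\ell_i(\theta)}\in(0,1]$ the exponents are $g=\sum_i\log(\rho a_i+1-\rho)$ and $h=\rho\sum_i a_i$. Writing $w_i=\rho(1-a_i)\in[0,\rho)$, each summand of $g-h$ equals $\delta_i-\rho$ with $\delta_i=\log(1-w_i)+w_i$, hence $g-h=\sum_i\delta_i-n\rho$. The additive constant $-n\rho$ factors out of $\e^{g-h}$ and cancels against the $\E_p[\,\cdot\,]$ term, leaving $\KL(\bar{\Pi}_{\lambda,\rho}\mid\varpi_{\lambda,\rho})=\E_p[\sum_i\delta_i]-\log\E_q[\e^{\sum_i\delta_i}]$. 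Now $\delta_i\le0$ makes the first term nonpositive, while $\delta_i=-\sum_{k\ge2}w_i^{k}/k$ together with the geometric bound $\sum_{k\ge2}w_i^{k}/k\le\frac{w_i^{2}}{2(1-w_i)}\le\frac{\rho^{2}}{2(1-\rho)}$ (using $w_i\le\rho<1$) gives $-\log\E_q[\e^{\sum_i\delta_i}]\le\frac{n\rho^{2}}{2(1-\rho)}$. Dividing by $n\rho$ yields $\frac{\rho}{2(1-\rho)}\le\frac{\rho}{1-\rho}$.

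The routine parts are the two Taylor-type inequalities for $\phi$ and for $\delta_i$. The step needing the most care is the second bound: one must verify that the constant $n\rho$ cancels cleanly between the two terms of the KL identity, so that no uncontrolled $\mathcal{O}(n)$ contribution survives, and that the correction series is summed uniformly in $\theta$ through $w_i\le\rho$ — this is exactly where the hypothesis $\rho<1$ is essential, since it guarantees $1-w_i\ge1-\rho>0$. By contrast, boundedness of $f$ and the $f_\theta$ is needed only for the first bound, where it converts $\sum_i\ell_i(\theta)^{2}$ into the stated constant.
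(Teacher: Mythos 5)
Your proof is correct and follows essentially the same route as the paper's: both write $\KL$ as the $\bar{\Pi}_{\lambda,\rho}$-expectation of the exponent difference plus the log-ratio of normalizing constants, use a sign argument to discard the normalizer term in the first bound (your $\E_q[\e^{g-h}]\ge1$ is exactly the paper's $D_\lambda\le\bar D_\lambda$ via concavity of $\log$), and finish with the same second-order bounds, your $\phi(x)\le\frac{\rho}{2}x^2$ reproducing the paper's estimate of $S_n=R_n-\bar R_{n,\rho}$ and $(2C+\vert\eps_i\vert)^4\le128C^4+8\eps_i^4$ giving the stated constants. For the second inequality your bookkeeping differs only in where the remainder lands — you center at $n\rho$ and absorb the series bound $-\log(1-w_i)-w_i\le\frac{w_i^2}{2(1-w_i)}\le\frac{\rho^2}{2(1-\rho)}$ into the normalizer term, while the paper centers at $-n\log(1-\rho)$ and places the remainder in the expectation term — which is the same idea and even yields the slightly sharper constant $\frac{\rho}{2(1-\rho)}$.
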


On the one hand, if $\frac{\lambda}{n\rho}$ is sufficiently small,
then the surrogate posterior $\bar{\Pi}_{\lambda,\rho}(\cdot\mid\mathcal{D}_{n})$ is indeed
a good approximation for the Gibbs posterior $\Pi_{\lambda}(\cdot\mid\mathcal{D}_{n})$.
On the other hand, for $\rho\to0$ the distribution $\bar{\Pi}_{\lambda,\rho}(\cdot\mid\mathcal{D}_{n})$
behaves as the distribution $\varpi_{\lambda,\rho}(\cdot\mid\mathcal{D}_{n})$
with density proportional to 
\[
\exp\Big(\rho\sum_{i=1}^{n}\e^{-\frac{\lambda}{n\rho}\ell_{i}(\theta)}\Big)\Pi(\d\theta).
\]
For large $\frac{\lambda}{n\rho}$ the terms $\e^{-\frac{\lambda}{n\rho}\ell_{i}(\theta)}$
rapidly decay for all $\theta$ with $\ell_{i}(\theta)>0$, \emph{i.e.},  $\varpi_{\lambda,\rho}(\cdot\mid\mathcal{D}_{n})$
emphasizes interpolating parameter choices. For all $\theta$ where
$\frac{\lambda}{n\rho}\ell_{i}(\theta)$ is relatively large the density
converges to a constant. Therefore, in the extreme case $\text{\ensuremath{\rho}}\to0$
and $\frac{\lambda}{n\rho}\to\infty$ the distribution $\varpi_{\lambda,\rho}(\cdot\mid\mathcal{D}_{n})$
and thus $\bar{\Pi}_{\lambda,\rho}(\cdot\mid\mathcal{D}_{n})$ converge
to the uninformative prior with interpolating spikes at parameters
where $\ell_{i}(\theta)$ are zero.

We illustrate \ref{lem:KLApprox} in a simple setting where $Y_{i}=\mathcal{N}(0,0.5)$
and $f_{\theta}(x)=\theta$ for $\theta\in[-1,1]$. The densities
of the measures $\Pi(\cdot\mid\mathcal{D}_{n})$, $\bar{\Pi}(\cdot\mid\mathcal{D}_{n})$
and $\varpi(\cdot\mid\mathcal{D}_{n})$ are shown in \ref{fig:posteriors}
for different choices of $\lambda$ and $\rho$. \ref{fig:posteriors}
confirms the predicted approximation properties: $\bar\Pi_{\lambda,\rho}(\cdot\mid\mathcal D_n)$ behaves similarly to $\Pi_\lambda(\cdot\mid\mathcal D_n)$ if $\lambda$ is not too large (orange lines) or $\rho$ is not too small (left figure). Additionally, we observe
that $\bar{\Pi}_{\lambda,\rho}(\cdot\mid\mathcal{D}_{n})$ is still informative
if $\lambda$ is in the order $n\rho$ even if it is not close to
the Gibbs posterior at all. 

\begin{figure}
\centering{}\input{posteriors.tex}\caption{\label{fig:posteriors}\emph{Points}: $Y_{1},\dots,Y_{n}\sim\mathcal{N}(0,0.5)$
for $n=10$. \emph{Solid lines}: densities of $\bar{\Pi}_{\lambda,\rho}(\cdot\mid\mathcal{D}_{n})$
with $\lambda=10n$ (blue) and $\lambda=n/2$ (orange) and $\rho=0.9$
(left) and $\rho=0.1$ (right). \emph{Dashed lines}: corresponding
densities of $\Pi_{\lambda}(\cdot\mid\mathcal{D}_{n})$. Dotted lines:
corresponding densities of $\varpi_{\lambda,\rho}(\cdot\mid\mathcal{D}_{n})$.}
\end{figure}

The scaling of the Kullback-Leibler distance with $n\rho$ in \ref{lem:KLApprox}
is quite natural in this setting. In particular, applying an approximation
result from the variational Bayes literature by \citet[Theorem 5]{raySzabo2022}
we obtain for the two reference measures $\mathbb{Q}\in\{\Pi_{\lambda}(\cdot\mid\mathcal{D}_{n}),\varpi_{\lambda,\rho}(\cdot\mid\mathcal{D}_{n})\}$
and a high probability parameter set $\Theta_{n}$ with $\mathbb{Q}(\Theta_{n}^{\comp})\le C\e^{-n\rho}$
for some constant $C>0$ that
\begin{equation}\label{eq:RaySzabo}
\E\big[\bar{\Pi}_{\lambda,\rho}(\Theta_{n}\mid\mathcal{D}_{n})\big]\le\frac{2}{n\rho}\E\big[\KL\big(\bar{\Pi}_{\lambda,\rho}(\cdot\mid\mathcal{D}_{n})\,\big\vert\,\mathbb{Q}\big)\big]+C\e^{-n\rho/2}.
\end{equation}
Hence, for $\frac{\lambda}{n\rho}\to0$ we could analyze the surrogate posterior
via the Gibbs posterior itself at the cost of the approximation error
$\frac{1}{n\rho}\KL\big(\bar{\Pi}_{\lambda,\rho}(\cdot\mid\mathcal{D}_{n})\,\big\vert\,\Pi_{\lambda}(\cdot\mid\mathcal{D}_{n})\big)$.
Instead of this route, we will directly investigate $\bar{\Pi}_{\lambda,\rho}(\cdot\mid\mathcal{D}_{n})$
which especially allows for $\lambda$ in the order of $n\rho$.

\subsection{Surrogate posterior of corrected stochastic MALA}\label{ssec:csMALA}
The computational advantage of the stochastic MH algorithm due to the reduction of the information parameter from $n$ to $\rho n$ comes at the cost of a worse convergence rate of an estimator based on its surrogate posterior, as we will demonstrate with \ref{thm:oracleinequalityStochMH} in \ref{subsec:oracle}. This motivates the introduction of a corrected stochastic MALA (csMALA) and the analysis of its surrogate posterior.

To remedy the loss of information while retaining scalability, we define another joint
target distribution as
\begin{align*}
\tilde{\Pi}_{\lambda,\rho}(\theta,z\mid\mathcal{D}_{n}) & \propto\prod_{i=1}^{n}\big(\e^{-\frac{\lambda}{n}\ell_{i}(\theta)z_{i}}(1-\rho)^{1-z_{i}}\big)\Pi(\d\theta)\\
 & \propto\exp\Big(-\frac{\lambda}{n}\sum_{i=1}^{n}z_{i}\ell_{i}(\theta)-\log(1-\rho)\sum_{i=1}^{n}z_{i}\Big)\Pi(\d\theta),\qquad z\in\{0,1\}^{n},
\end{align*}
with marginal distribution in $\theta$ given by 
\begin{align*}
\tilde{\Pi}_{\lambda,\rho}(\theta\mid\mathcal{D}_{n})=\sum_{z\in\{0,1\}^{n}}\tilde{\Pi}_{\lambda,\rho}(\theta,z\mid\mathcal{D}_{n}) & \propto\prod_{i=1}^{n}\Big(\rho\frac{\e^{-\frac{\lambda}{n}\ell_{i}(\theta)}}{\rho}+1-\rho\Big)\Pi(\d\theta)=\exp\big(-\lambda\tilde R_{n,\rho}(\theta)\big)\Pi(\d\theta)
\end{align*}
with
\begin{equation}
\tilde R_{n,\rho}(\theta)\coloneqq-\frac{1}{\lambda}\sum_{i=1}^{n}\log\big(\e^{-\frac{\lambda}{n}\ell_{i}(\theta)}+1-\rho\big).\label{eq:rTilde}
\end{equation}
\begin{rem}
As with sMALA, we recover MALA for $\rho=1$. Therefore, all of our upcoming results, especially those regarding uncertainty quantification, also hold for MALA.
\end{rem}

Compared to $\bar{R}_{n,\rho}$ from \ref{eq:rBar} there is no $\rho$
in the first term in the logarithm. In line with \ref{eq:Estimator} and \ref{eq:postMean}, we obtain the
estimators
\begin{equation}
\tilde f_{\lambda,\rho}\coloneqq f_{\tilde{\theta}_{\lambda,\rho}}\qquad\text{for}\qquad\tilde{\theta}_{\lambda,\rho}\mid\mathcal{D}_{n}\sim\tilde{\Pi}_{\lambda,\rho}(\cdot\mid\mathcal{D}_{n})\label{eq:EstimatorRho}
\end{equation}
and
\begin{equation}
\bar{f}_{\lambda,\rho}\coloneqq\E\big[f_{\tilde{\theta}_{\lambda,\rho}}\,\big\vert\,\mathcal{D}_{n}\big]=\int f_{\theta}\,\tilde{\Pi}_{\lambda,\rho}(\d\theta\mid\mathcal{D}_{n}).\label{eq:postMeanRho}
\end{equation}

To sample from $\tilde\Pi_{\lambda,\rho}(\cdot\mid\mathcal D_n)$ the MH algorithm with proposal density $q(\theta',z'\mid\theta,z)=q_{\mathrm{s}}(\theta'\mid\theta,z)\prod_{i=1}^{n}\rho^{z'_{i}}(1-\rho)^{1-z'_{i}}$
as in \ref{eq:qBar} leads to the acceptance probabilities
\begin{align*}
\alpha(\theta',z'\mid\theta,z) & =\frac{q_{\mathrm{s}}(\theta\mid\theta',z')}{q_{\mathrm{s}}(\theta'\mid\theta,z)}\1_{[-B,B]^{ \pardim}}(\theta')\exp\Big(-\sum_{i=1}^{n}z_{i}'\big(\tfrac{\lambda}{n}\ell_{i}(\theta')+\log\rho\big)+\sum_{i=1}^{n}z_{i}\big(\tfrac{\lambda}{n}\ell_{i}(\theta)+\log\rho\big)\Big)\wedge1.
\end{align*}
To take the randomized batches into account, we thus introduce a small \emph{correction term} $\frac{\log\rho}{\lambda}\vert Z\vert_1=\mathcal{O}_{\P}(\frac{n}{\lambda}\rho\log\rho)$
in the empirical risks. 
The resulting surrogate posterior
$\tilde{\Pi}_{\lambda,\rho}(\theta\mid\mathcal{D}_{n})$ achieves a considerably improved
approximation of the Gibbs distribution $\Pi_\lambda(\cdot\mid\mathcal D_n)$: 
\begin{lem}\label{lem:KLtilde}
If $f$ and all $f_{\theta}$ are bounded by some constant $\Cf>0$, then we have
\[
\frac{1}{n}\KL\big(\tilde{\Pi}_{\lambda,\rho}(\cdot\mid\mathcal{D}_{n})\mid\Pi_{\lambda/(2-\rho)}(\cdot\mid\mathcal{D}_{n})\big)\le\Big(\frac{\lambda}{n}\Big)^{2}\Big(32\Cf^{4}+\frac{2}{n}\sum_{i=1}^{n}\eps_{i}^{4}\Big).
\]
\end{lem}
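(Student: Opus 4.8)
The plan is to exploit that both $\tilde{\Pi}_{\lambda,\rho}(\cdot\mid\mathcal{D}_{n})$ and $\Pi_{\lambda/(2-\rho)}(\cdot\mid\mathcal{D}_{n})$ are Gibbs posteriors sharing the same prior $\Pi$, so their log-density ratio is an explicit function of $\theta$ up to a normalising constant. Writing the two posteriors as $\propto\e^{-g_{1}}\Pi$ and $\propto\e^{-g_{2}}\Pi$ with $g_{1}=\lambda\tilde R_{n,\rho}$ and $g_{2}=\frac{\lambda}{2-\rho}R_{n}$, a direct computation gives $\KL(\tilde{\Pi}_{\lambda,\rho}\mid\Pi_{\lambda/(2-\rho)})=\E_{\tilde{\Pi}_{\lambda,\rho}}[h]+\log(Z_{2}/Z_{1})$ for $h\coloneqq g_{2}-g_{1}$, and the analogous identity for the reverse divergence. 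Since both Kullback--Leibler divergences are nonnegative and the normalising constants cancel upon adding them, I obtain the Jeffreys-type bound $\KL(\tilde{\Pi}_{\lambda,\rho}\mid\Pi_{\lambda/(2-\rho)})\le\E_{\tilde{\Pi}_{\lambda,\rho}}[h]-\E_{\Pi_{\lambda/(2-\rho)}}[h]$. This reduces everything to controlling $h$ pointwise on $[-B,B]^{P}$.

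Next I would analyse $h$ summand by summand. With $x_{i}\coloneqq\frac{\lambda}{n}\ell_{i}(\theta)\ge0$, the definition of $\tilde R_{n,\rho}$ in \ref{eq:rTilde} gives $h(\theta)=\sum_{i=1}^{n}\phi(x_{i})$ for $\phi(x)=\frac{x}{2-\rho}+\log(\e^{-x}+1-\rho)$. The decisive point is that the reference temperature $\lambda/(2-\rho)$ is chosen precisely so that the first-order term of $\phi$ vanishes: one checks $\phi(0)=\log(2-\rho)$ and $\phi'(0)=\frac{1}{2-\rho}-\frac{1}{2-\rho}=0$, so that $\psi(x)\coloneqq\phi(x)-\log(2-\rho)$ is nonnegative and purely second order. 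Differentiating twice yields $\phi''(s)=\frac{(1-\rho)\e^{-s}}{(\e^{-s}+1-\rho)^{2}}$, and the AM--GM inequality $(\e^{-s}+1-\rho)^{2}\ge4(1-\rho)\e^{-s}$ bounds this by $\tfrac14$ uniformly in $s\ge0$ and $\rho\in(0,1)$. Two integrations then give the clean pointwise bound $0\le\psi(x)\le x^{2}/8$, and since the constant $n\log(2-\rho)$ cancels in $h-\E h$, it suffices to bound $\sum_{i}\psi(x_{i})$.

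Finally I would insert the deterministic envelope for the losses: because $|f|,|f_{\theta}|\le C$, we have $\ell_{i}(\theta)=(f(\mathbf{X}_{i})-f_{\theta}(\mathbf{X}_{i})+\eps_{i})^{2}$ with $|f(\mathbf{X}_{i})-f_{\theta}(\mathbf{X}_{i})|\le2C$, so $\ell_{i}(\theta)^{2}\le8((2C)^{4}+\eps_{i}^{4})$ uniformly in $\theta$. Combining with $\psi(x_{i})\le\frac18(\frac{\lambda}{n})^{2}\ell_{i}(\theta)^{2}$ gives a bound on $\sum_{i}\psi(x_{i})$ that is deterministic and independent of $\theta$; taking expectations under either posterior and dividing by $n$ then produces the stated estimate with room to spare in the constants $32C^{4}$ and $2$. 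The main obstacle is the second step: verifying exactly that the $2-\rho$ scaling annihilates the linear term and then obtaining a second-derivative bound that remains finite as $\rho\to1$---the AM--GM estimate is what makes the constant uniform and keeps the final bound free of any $(1-\rho)^{-1}$ blow-up. The remaining work (the two Taylor integrations and the moment bookkeeping for the $\eps_{i}^{4}$ term) is routine.
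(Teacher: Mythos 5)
Your proof is correct and takes essentially the same route as the paper: a second-order Taylor expansion of $x\mapsto-\log(\e^{-x}+1-\rho)$ at $0$, where $\psi_\rho'(0)=\tfrac{1}{2-\rho}$ explains the tempered reference posterior $\Pi_{\lambda/(2-\rho)}(\cdot\mid\mathcal D_n)$, the uniform curvature bound $\tfrac14$ gives the quadratic remainder $0\le\psi(x)\le x^2/8$, and the envelope $\ell_i(\theta)^2\le 8\big((2C)^4+\eps_i^4\big)$ finishes the bookkeeping (with room to spare, as you note). The only cosmetic difference is that you control $\log(Z_2/Z_1)$ via the Jeffreys symmetrization (nonnegativity of the reverse divergence), whereas the paper integrates the pointwise inequality $\lambda\tilde R_{n,\rho}(\theta)\le\frac{\lambda}{2-\rho}R_{n}(\theta)-n\log(2-\rho)$ directly inside the normalizing constant --- both are the same use of $\psi\ge0$.
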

Compared to \ref{lem:KLApprox}, the approximation error of $\tilde\Pi_{\lambda,\rho}(\cdot\mid\mathcal D_n)$ in terms of the Kullback-Leibler distance is now determined by the full sample size $n$ instead of the possibly much smaller batch size $\rho n$ as for the surrogate posterior of sMALA. The only price to pay is a reduction of the inverse temperature parameter $\lambda$ by the factor $(2-\rho)^{-1}\in[\frac12,1]$. As already mentioned in \ref{eq:RaySzabo}, we can conclude contraction and coverage results for $\tilde\Pi_{\lambda,\rho}(\cdot\mid\mathcal D_n)$ by combining \citet[Theorem 5]{raySzabo2022} with \ref{lem:KLtilde} if $\lambda/n\to0$. A direct analysis of $\tilde\Pi_{\lambda,\rho}(\cdot\mid\mathcal D_n)$ will even allow for $\lambda$ of the order $n$ in our main results and thus lead to results as good as we can hope for the Gibbs measure itself.

\medskip{}

The corrected stochastic MALA (csMALA) is summarized in \ref{alg:MCMCV2}.
The implementation omits
the restriction of the proposed network weights to $[-B,B]^{ \pardim}$
which is practically negligible for sufficiently large constant $B$ and the correction term $\frac{\log\rho}{\lambda}\vert Z\vert=\mathcal{O}_{\P}(\frac{n}{\lambda}\rho\log\rho)$ in the empirical risk  is weighted by some tuning parameter $\zeta\ge0$. For $\zeta=0$ we recover the uncorrected method. In theory we always set $\zeta=1$, but in practice the flexibility gained from choosing $\zeta$ was beneficial. The convergence of the algorithm for general $\zeta>0$ and its dependence on $\rho$ is an open research question which could be studied using state-of-the-art Markov chain Monte Carlo theory, see, \emph{e.g.}, \cite{chewi2024}.

\begin{algorithm}
\begin{lyxcode}
\textbf{Input:~}inverse~temperature~~\mbox{$\lambda>0$},~learning~rate~\mbox{$\gamma>0$},~standard~deviation~\mbox{$s>0$},\\
\hspace*{0.2cm}correction~parameter~\mbox{$\zeta\ge0$},~average~batch~proportion~\mbox{$\rho\in(0,1]$}~of~data~used,\\
\hspace{0.2cm}burn-in~\mbox{$b\in\N$}, gap~length~\mbox{$c\in\N$},~number~of~draws~$N\in\N$.
\begin{enumerate}
\item Initialize $\theta^{(0)}\in\R^{ \pardim}$ and
$Z^{(0)}\sim\mathrm{Ber}(\rho)^{\otimes n}$.
\item Calculate $R_{n}^{(0)}=\frac{1}{n}\sum_{i=1}^{n}Z_{i}^{(0)}\ell_{i}(\theta^{(0)})+\zeta\frac{\log\rho}{\lambda}\vert Z^{(0)}\vert_1$
and $\nabla R_{n}^{(0)}=\nabla_{\theta}R_{n}(\theta^{(0)},Z^{(0)})$.
\item For $k=0,\dots,b+cN$ do:
\begin{enumerate}
\item Draw $Z'\sim\mathrm{Ber}(\rho)^{\otimes n}.$
\item Draw $\theta'\sim\mathcal{N}(\theta^{(k)}-\gamma\nabla R_{n}^{(k)},s^{2}\id{\pardim})$
and calculate $R_{n}'=\frac{1}{n}\sum_{i=1}^{n}Z_{i}'\ell_{i}(\theta')+\zeta\frac{\log\rho}{\lambda}\vert Z'\vert_1$
and $\nabla R_{n}'=\nabla_{\theta}R_{n}(\theta',Z')$.
\item Calculate acceptance probability~\\
\parbox[t]{0.8\textwidth}{
\begin{align*}
\alpha^{(k+1)} & =\exp\Big(\lambda R_{n}^{(k)}+\frac{1}{2s^{2}}\big\vert\theta'-\theta^{(k)}+\gamma\nabla R_{n}^{(k)}\big\vert^{2}-\lambda R_{n}'-\frac{1}{2s^{2}}\big\vert\theta^{(k)}-\theta'+\gamma\nabla R_{n}'\big\vert^{2}\Big).
\end{align*}
}
\item Draw $u\sim\mathcal{U}([0,1])$. If $u\le\alpha^{(k+1)}$, ~\\
then set $\theta^{(k+1)}=\theta',R_{n}^{(k+1)}=R_{n}',\nabla R_{n}^{(k+1)}=\nabla R_{n}^{'},$~\\
else set $\theta^{(k+1)}=\theta^{(k)},R_{n}^{(k+1)}=R_{n}^{(k)},\nabla R_{n}^{(k+1)}=\nabla R_{n}^{(k)}.$
\end{enumerate}
\end{enumerate}
\textbf{Output:~$\tilde f_{\lambda,\rho}=f_{\theta^{(b)}}$,}~$\bar{f}_{\lambda,\rho}=\frac{1}{N}\sum_{k=1}^{N}f_{\theta^{(b+ck)}}$

\caption{\label{alg:MCMCV2}csMALA - corrected stochastic Metropolis adjusted Langevin Algorithm}
\end{lyxcode}
\end{algorithm}

\section{Oracle inequality and uncertainty quantification\label{sec:Oracle-inequality}}

In this section we state the statistical guarantees for the estimators
defined in terms of the surrogate posterior distributions. It is worth noting that our analysis is independent of the choice of the proposal distribution. We will derive
oracle inequalities for the estimators $\widehat{f}_{\lambda,\rho}$ (\ref{thm:oracleinequalityStochMH}) 
and $\tilde f_{\lambda,\rho}$ (\ref{thm:oracleinequality}) and as a consequence an analogous oracle inequality for $\bar{f}_{\lambda,\rho}$ (\ref{cor:mean}), which
verify that these estimators are not much worse than the optimal choice
for $\theta$. Afterwards, we discuss the properties of credible balls with respect to the posterior distributions.

\subsection{Oracle inequality}\label{subsec:oracle}

Our first main result compares the performance of the estimator $\tilde f_{\lambda,\rho}$
from \ref{eq:EstimatorRho} to the best possible estimator $f_{\theta^{*}}\in \mathcal{F}=\{f_{\theta}:\theta\in[-B,B]^{\pardim}\}$
for the \emph{oracle choice} 
\begin{equation}
\theta^{\ast}\in\argmin{\theta\in[-B,B]^{ \pardim}}R(\theta)=\argmin{\theta\in[-B,B]^{ \pardim}}\mathcal{E}(\theta).\label{eq:oracle}
\end{equation}
The oracle is not accessible to the practitioner because $R(\theta)$
depends on the unknown distribution of $(\mathbf{X},Y)$. Instead, the oracle serves as a benchmark against which the performance of the estimators can be assessed. A solution
to the minimization problem in \ref{eq:oracle} always exists since
$[-B,B]^{ \pardim}$ is compact and we will assume $\theta\mapsto R(\theta)$ to be continuous. Throughout we assume w.l.o.g.\@ that $\theta^*\in(-B,B)^\pardim$.
If there is more than one solution, we choose one of them. We need
some mild assumption on the regression model.
\begin{assumption}
\label{assu:bounded}~
\begin{enumerate}
\item \textbf{Bounded regression function: }For some $\Cf\ge1$ we have $\Vert f\Vert_{\infty}\le \Cf$.
\item \textbf{Conditional sub-Gaussianity of observation noise: }There are
constants $\sigma,\Ceps>0$ such that 
\[
\E[\vert\eps\vert^{k}\mid\mathbf{X}]\le\frac{k!}{2}\sigma^{2}\Ceps^{k-2}\as,\qquad\text{for all }k\ge2.
\]
\item \textbf{Conditional symmetry of observation noise: }$\eps$ is conditionally
on $\mathbf{X}$ symmetric.
\end{enumerate}
\end{assumption}

Note that neither the loss function nor the data are assumed to be
bounded. For the function class we require the following:
\setcounter{assumptioncounter}{2}
\begin{assumption}\label{assu:F} 
Let $\mathcal F=\{f_{\theta}:\theta\in\Theta\}$ with $\Theta=[-B,B]^\pardim$ satisfy:
\begin{enumerate}
  \item\label{assu:bounded_functions} \textbf{Bounded functions:} There is some $\Cf\ge1$ such that $\|f_\theta\|_\infty\le \Cf$ for all $f_\theta\in\mathcal F$.
  \item\label{assu:lipschitz} \textbf{Lipschitz dependence on the parameters:} There is a norm $|\cdot|_\Theta$ on $\Theta$ and some constant $\Delta>0$ such that 
  \begin{equation}
  \| f_{\theta}-f_{\tilde{\theta}}\|_{L^2(\P^{\mathbf X})}\le \Delta\vert \theta-\tilde{\theta}\vert_\Theta\qquad\text{ for all }\qquad\theta,\tilde\theta\in\Theta.
  \end{equation}
\end{enumerate}
\end{assumption}
\noindent We obtain the following non-asymptotic oracle inequality for our estimator $\tilde f_{\lambda,\rho}$ from \ref{eq:EstimatorRho}:
\begin{thm}[PAC-Bayes oracle inequality for the surrogate posterior of csMALA]
\label{thm:oracleinequality} Under \ref{assu:bounded} and \ref{assu:F} there are
constants $\K{0},\K{1}>0$ depending only on $\Cf,\Ceps,\sigma$ such that for $\lambda=n/\K{0}$ and sufficiently
large $n$ we have for all $\delta\in(0,1)$ with probability at least $1-\delta$ that
\begin{equation}
\mathcal{E}(\tilde f_{\lambda,\rho})\le12\mathcal{E}(f_{\theta^{*}})+\frac{\K{1}}{n}\big( \pardim \log(B \Delta n)-\log\operatorname{vol}(\mathcal B_1)+\log(2/\delta)\big),\label{eq:oracleIneq}
\end{equation}
where $\operatorname{vol}(\mathcal B_1)$ denotes the volume of the $|\cdot|_\Theta$-unit ball in $\R^\pardim$.
\end{thm}

\begin{rem}
For $\rho=1$ we do not need the conditional symmetry condition in
\ref{assu:bounded}. The remaining parts of \ref{assu:bounded} are in line with the literature, see, \emph{e.g.}, \citet{alquier2013,Guedj2013}. An explicit admissible choice for $\lambda$
is 
$\lambda=n/\big(2^5\Cf(\Ceps\lor (2\Cf))+2^7(\Cf^2+\sigma^2)+2^3(\sigma \Cf+\sigma^2)\big)$.
The dependence of $\K{1}$ on $\Cf,\Ceps,\sigma$ is at most quadratic. While the oracle inequality is not sharp (the leading constant $12$ in front of $\mathcal{E}(f_{\theta^{*}})$ is larger than $1$), it leads to optimal contraction rates (up to logarithms) in terms of the sample size as we will demonstrate in the upcoming \ref{prop:rate}.
\end{rem}

The term $\log\operatorname{vol}(\mathcal B_1)$ depends on the geometry induced by the $|\cdot|_\Theta$-norm. If $|\cdot|_\Theta$=$|\cdot|_\qnorm$ for some $\qnorm\in[1,\infty]$ and with the gamma function $\Gamma$, we have
\begin{equation}
  -\log\operatorname{vol}(\mathcal B_1)=\log(\Gamma(1+\pardim/\qnorm))-\pardim\log\big(2\Gamma(1+1/\qnorm)\big),
\end{equation}
which is $-\pardim\log 2$ for $\qnorm=\infty$ and of order $\pardim\log \pardim$ for $\qnorm\in[1,\infty)$. Hence, as long as $\pardim$ does not grow faster than polynomially in $n$, this term does not affect the order of the stochastic error term.

The right-hand side of \ref{eq:oracleIneq} can be interpreted similarly
to the classical bias-variance decomposition in nonparametric statistics.
The first term $\mathcal{E}(f_{\theta^{\ast}})=\E[(f_{\theta^{\ast}}(\mathbf{X})-f(\mathbf{X}))^{2}]$
quantifies the approximation error while the second term is an upper bound
for the stochastic error. A key implication is that the excess risk is small if the methodology is applied to a setting where an oracle choice of $\theta$ balances the order of corresponding the oracle risk and the model complexity. For this, we will later consider neural networks, see \ref{sec:NN}. 
\ref{thm:oracleinequality}
is in line with classical PAC-Bayes oracle inequalities, see \citet{Bissiri2016},
\citet{Guedj2013}, \citet{zhang2006}. In particular, \citet{cherief2020} has obtained
a similar oracle inequality for a variational approximation of the
Gibbs posterior distribution. 
A main step in the proof of \ref{thm:oracleinequality}
is to verify the compatibility between the risk $\tilde R_{n,\rho}$
from \ref{eq:rTilde} and the empirical risk $R_{n}$ as we will establish
in \ref{prop:compatibility}.

We obtain a similar result for $\widehat{f}_{\lambda,\rho}$ from
\ref{eq:EstimatorRhoStochMH}. The key difference is that the stochastic error
term is of order $\mathcal{O}(\frac{\pardim }{n\rho})$ instead of $\mathcal{O}(\frac{\pardim}{n})$
as in \ref{thm:oracleinequality} (up to the logarithm). Consequently, an estimator based on the surrogate Gibbs-posterior of csMALA benefits from the full sample size thanks to the correction terms, whereas without the correction, it does not.
\begin{thm}[PAC-Bayes oracle inequality for the surrogate posterior of sMALA]
\label{thm:oracleinequalityStochMH} Under \ref{assu:bounded} and \ref{assu:F} there
are constants $\K{0}{'},\K{1}{'}>0$ depending only on 
$\Cf,\Ceps,\sigma$ such that for $\lambda=n\rho/\K{0}{'}$ and sufficiently large $n$ we have for all $\delta\in(0,1)$ with probability at least $1-\delta$ that
\[
\mathcal{E}(\hat f_{\lambda,\rho})\le4\mathcal{E}(f_{\theta^{*}})+\frac{\K{1}{'}}{n\rho}\big( \pardim \log(B\Delta n)-\log\operatorname{vol}(\mathcal B_1)+\log(2/\delta)\big).
\]

\end{thm}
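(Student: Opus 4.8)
The plan is to mirror the proof of \ref{thm:oracleinequality} for csMALA, since $\hat f_{\lambda,\rho}$ is a draw from the Gibbs-type surrogate posterior $\bar\Pi_{\lambda,\rho}(\cdot\mid\mathcal D_n)\propto\exp(-\lambda\bar R_{n,\rho})\,\Pi$ from \ref{eq:rBar}, only with the risk $\bar R_{n,\rho}$ in place of $\tilde R_{n,\rho}$. First I would invoke the variational characterization of the Gibbs posterior (\ref{lem:classicallemma}): $\bar\Pi_{\lambda,\rho}$ is the unique minimizer of $\nu\mapsto\int\bar R_{n,\rho}\,\d\nu+\tfrac1\lambda\KL(\nu\mid\Pi)$, so for every competitor measure $\nu$,
\[
\int\bar R_{n,\rho}\,\d\bar\Pi_{\lambda,\rho}+\tfrac1\lambda\KL(\bar\Pi_{\lambda,\rho}\mid\Pi)\le\int\bar R_{n,\rho}\,\d\nu+\tfrac1\lambda\KL(\nu\mid\Pi).
\]
The remainder of the argument turns the left-hand side into (a multiple of) the excess risk $\mathcal E(\hat f_{\lambda,\rho})$ and the right-hand side into $\mathcal E(f_{\theta^*})$ plus the stochastic error.

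The decisive ingredient is a compatibility estimate for $\bar R_{n,\rho}$ analogous to \ref{prop:compatibility}. Writing $h(a)=-\log(\rho\e^{-a}+1-\rho)$ with $a_i=\tfrac{\lambda}{n\rho}\ell_i(\theta)$, concavity of the logarithm gives the clean upper bound $h(a_i)\le\rho a_i$, hence $\bar R_{n,\rho}(\theta)\le R_n(\theta)$; a matching lower bound of the form $\bar R_{n,\rho}(\theta)-\bar R_{n,\rho}(\theta^*)\ge c\,(R_n(\theta)-R_n(\theta^*))-\text{correction}$ requires the arguments $a_i$ to remain in a bounded range. This is exactly why the scaling $\lambda=n\rho/\Xi'_0$ is forced: it makes $a_i=\ell_i(\theta)/\Xi'_0$, and under \ref{assu:bounded} the loss differences $\ell_i(\theta)-\ell_i(\theta^*)$ are sub-Gaussian (bounded clipped networks multiplied by a noise-linear factor), so the saturation of $h$ is controlled with high probability. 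Crucially the effective sample size emerging here is $n\rho$ rather than $n$, which is the source both of the $\tfrac1{n\rho}$ rate and of the different leading constant compared with \ref{thm:oracleinequality}.

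Next I would control the empirical process in a PAC-Bayes-averaged sense, replacing $R_n(\theta)-R_n(\theta^*)$ by the population excess risk $\mathcal E(\theta)-\mathcal E(\theta^*)$. Using the sub-Gaussian tail of \ref{assu:bounded} one bounds the exponential moments of the centered loss differences by a Bernstein-type quantity whose variance proxy is itself of order $\mathcal E(\theta)$; averaging against $\bar\Pi_{\lambda,\rho}$ followed by a Chernoff step yields, with probability $1-\delta$, the $\log(2/\delta)$ contribution. The conditional symmetry in \ref{assu:bounded} enters precisely here: for $\rho<1$ the nonlinearity of $h$ couples the noise to the risk, and symmetry annihilates the odd-order noise terms so that the expectation of the surrogate risk stays comparable to the true risk; for $\rho=1$ one has $\bar R_{n,1}=R_n$ exactly, so symmetry is superfluous, matching the Remark.

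Finally, I would take the competitor $\nu$ to be the uniform law on a small $\ell^\infty$-ball of radius $\eta$ around $\theta^*$, intersected with $[-B,B]^{P}$. Then $\KL(\nu\mid\Pi)\le P\log(2B/\eta)$, while the Lipschitz dependence of the clipped network $f_\theta$ on $\theta$, whose constant grows like a depth-$L$ product of the weight bounds, controls $\int\mathcal E\,\d\nu-\mathcal E(f_{\theta^*})$ by the perturbation error; choosing $\eta$ polynomially small in $n$ makes $\log(2B/\eta)=\mathcal O(L\log n)$ and produces the $PL\log(n)$ term. Assembling the variational inequality, the compatibility sandwich, the concentration bound, and this competitor choice gives the claimed inequality with constants $\Xi'_0,\Xi'_1$ depending only on $C,\Gamma,\sigma$. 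I expect the compatibility lower bound to be the main obstacle: the saturation of $h$ for large $a_i$ means the surrogate risk discards information exactly where the loss is large, so establishing a two-sided comparison that is robust to this saturation, while tracking the $\rho$-dependence so that $n\rho$ cleanly emerges as the effective sample size, is the technical heart of the proof.
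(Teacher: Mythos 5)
Your plan coincides with the paper's proof in every load-bearing ingredient. The paper's argument is literally: establish the analogue of \ref{prop:compatibility} for $\bar R_{n,\rho}$ (this is \ref{rem:stochMH}, using $\bar{\psi}_{\rho}'(x)=(\rho+(1-\rho)\e^{x/\rho})^{-1}\ge\tfrac13$ for $x\in[0,\rho\log 3]$ — precisely your saturation constraint, which forces $\lambda\lesssim n\rho$ and makes $n\rho$ the effective sample size), run the PAC-Bayes machinery of \ref{prop:mainpaclemma} with the modified condition $C_{n,\lambda}+\tfrac{\lambda}{n\rho}4(\sigma C+\sigma^{2})\le\tfrac14$, and then reuse the competitor from \ref{subsec:ProofOracle} — uniform on an $\eta$-ball around $\theta^{*}$ with \ref{lem:Lipschitz}, \ref{lem:aux_a} and $\eta\sim\big(K(2rB)^{L}pn\big)^{-1}$ — exactly as you propose. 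Your reading of the role of conditional symmetry is also the paper's: it kills the cross term via $\E[\eps_{1}\bar{\psi}_{\rho}'(\tfrac{\lambda}{n\rho}\eps_{1}^{2})\mid\mathbf{X}_{1}]=0$, and is superfluous at $\rho=1$. Two mechanical differences: the paper centers the compatibility at the true $f$, i.e.\ $\bar{\mathcal E}_{n}(\theta)=\bar R_{n,\rho}(\theta)-\bar R_{n,\rho}(f)$, not at $\theta^{*}$, which is what makes the Bernstein variance proxy of order $\mathcal E(\theta)$ and the symmetry cancellation exact; and the saturation of $\bar\psi_\rho$ is handled in expectation, by restricting to $\{\vert\eps_{1}\vert\le2\sigma\}$ inside the expectation and applying Chebyshev, rather than by a high-probability event over the sample.

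The one step that would fail as written is your opening move. The variational characterization from \ref{lem:classicallemma} bounds $\int\bar R_{n,\rho}\,\d\bar\Pi_{\lambda,\rho}+\tfrac1\lambda\KL(\bar\Pi_{\lambda,\rho}\mid\Pi)$, i.e.\ the \emph{posterior-averaged} risk, whereas \ref{thm:oracleinequalityStochMH} concerns a single draw $\hat\theta_{\lambda,\rho}\mid\mathcal D_{n}\sim\bar\Pi_{\lambda,\rho}(\cdot\mid\mathcal D_{n})$, with probability $1-\delta$ jointly over data and draw and an \emph{additive} $\log(2/\delta)$ term. There is no deterministic passage from the average to the draw, and Markov's inequality would only yield a multiplicative $1/\delta$. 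The paper never invokes the minimizer property at this stage: in \ref{prop:mainpaclemma} it integrates the pointwise exponential-moment bounds against the prior, rewrites the resulting expectation as one over $(\mathcal D_{n},\hat\theta)$ via the posterior density (the analogue of \ref{eq:posteriorDensity}), applies $\1_{[0,\infty)}(x)\le\e^{\lambda x}$ to obtain a $1-\delta$ statement for the draw itself, and only then uses \ref{lem:classicallemma} — as the dual formula evaluating $-\log\bar D_{\lambda}$ as an infimum over competitors. Your later ``Chernoff step'' addresses the data randomness (concentration of $\bar{\mathcal E}_{n}(\theta)$ around $\mathcal E(\theta)$) but not the draw randomness; inserting this change-of-measure argument repairs the plan, after which it reproduces the paper's proof, with the PAC-Bayes constants $\tfrac52$ and $\tfrac4\lambda$ and $\tfrac52\cdot\tfrac43\le4$ producing the stated factor $4\mathcal E(f_{\theta^{*}})$.
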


In view of \ref{thm:oracleinequalityStochMH} the following
results are also true for the stochastic MH estimator
if $n$ is replaced by $n\rho$. However, we subsequently focus only on the analysis
of $\tilde{\Pi}_{\lambda,\rho}(\cdot\mid\mathcal{D}_{n})$ for the sake of clarity.

\medskip{}

The $1-\delta$ probability in \ref{thm:oracleinequality} takes into account the randomness
of the data and of the estimate. Denoting
\begin{equation}
\rate^{2}\coloneqq 12\Vert f_{\theta^{*}}-f\Vert_{L^{2}(\P^{\mathbf{X}})}^{2}+\frac{\K{1}}{n}\big( \pardim\log(B\Delta n)-\log\operatorname{vol}(\mathcal B_1)\big),\label{eq:r2}
\end{equation}
we can rewrite \ref{eq:oracleIneq} as
\[
\E\big[\tilde{\Pi}_{\lambda,\rho}\big(\Vert f_{\tilde{\theta}_{\lambda,\rho}}-f\Vert_{L^{2}(\P^{\mathbf{X}})}^{2}>\rate^{2}+t^{2}\,\big\vert\,\mathcal{D}_{n}\big)\big]\le2\e^{-nt^{2}/\K{1}{} },\qquad t>0,
\]
which is a \emph{contraction rate} result in terms of a frequentist
analysis of the nonparametric Bayes method.

An immediate consequence is an oracle inequality for the posterior mean $\bar{f}_{\lambda,\rho}$ from \ref{eq:postMeanRho}.
\begin{cor}[Posterior mean]
\label{cor:mean}Under the conditions of \ref{thm:oracleinequality}
we have with probability at least $1-\delta$ that
\[
\mathcal{E}(\bar{f}_{\lambda,\rho})\le 12\mathcal{E}(f_{\theta^{\ast}})+\frac{\K{2}{}}{n}\big( \pardim \log(B\Delta n)-\log\operatorname{vol}(\mathcal B_1)+\log(2/\delta)\big)
\]
with a constant $\K{2}{}$ only depending on $\Cf,\Ceps,\sigma$ from \ref{assu:bounded}.
\end{cor}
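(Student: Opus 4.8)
The plan is to reduce the claim to a bound on the posterior-averaged excess risk and then to reuse the core estimate underlying \ref{thm:oracleinequality}. The starting observation is that $h\mapsto\mathcal E(h)=\Vert h-f\Vert_{L^{2}(\P^{\mathbf X})}^{2}$ is a convex functional of the estimator, while by \ref{eq:postMeanRho} the posterior mean $\bar f_{\lambda,\rho}=\int f_{\theta}\,\tilde\Pi_{\lambda,\rho}(\d\theta\mid\mathcal D_{n})$ is a $\tilde\Pi_{\lambda,\rho}(\cdot\mid\mathcal D_{n})$-average of the clipped, hence uniformly bounded, functions $f_{\theta}$, so that $\bar f_{\lambda,\rho}$ is a well-defined element of $L^{2}(\P^{\mathbf X})$. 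Jensen's inequality in $L^{2}(\P^{\mathbf X})$ therefore yields
\[
\mathcal E(\bar f_{\lambda,\rho})=\Big\Vert\int(f_{\theta}-f)\,\tilde\Pi_{\lambda,\rho}(\d\theta\mid\mathcal D_{n})\Big\Vert_{L^{2}(\P^{\mathbf X})}^{2}\le\int\mathcal E(f_{\theta})\,\tilde\Pi_{\lambda,\rho}(\d\theta\mid\mathcal D_{n}),
\]
so it suffices to control the posterior average of $\mathcal E$ by the right-hand side of \ref{eq:oracleIneq}. Note that the clipping causes no trouble here, since we are averaging the already-clipped functions and only use convexity of the squared norm.

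For the remaining bound I would not invoke the single-draw statement \ref{eq:oracleIneq} directly, but rather the exponential-moment estimate that its proof produces: on an event of probability at least $1-\delta$ over $\mathcal D_{n}$ one obtains a control of the form $\int\exp\big(n\mathcal E(f_{\theta})/\Xi_{1}\big)\,\tilde\Pi_{\lambda,\rho}(\d\theta\mid\mathcal D_{n})\le\exp\big(n\,b_{\delta}/\Xi_{1}\big)$ with $b_{\delta}\coloneqq12\mathcal E(f_{\theta^{\ast}})+\tfrac{\Xi_{1}}{n}(PL\log(n)+\log(2/\delta))$, the statement for the random draw following from this by a Chernoff/Markov step. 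A second application of Jensen's inequality, now to the convex map $t\mapsto\e^{t}$, gives on the same event
\[
\int\mathcal E(f_{\theta})\,\tilde\Pi_{\lambda,\rho}(\d\theta\mid\mathcal D_{n})\le\frac{\Xi_{1}}{n}\log\int\exp\big(n\mathcal E(f_{\theta})/\Xi_{1}\big)\,\tilde\Pi_{\lambda,\rho}(\d\theta\mid\mathcal D_{n})\le b_{\delta}.
\]
Combining this with the display of the first paragraph proves the corollary, with $\Xi_{2}$ of the same order as $\Xi_{1}$ and hence depending only on $C,\Gamma,\sigma$.

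The step to be careful about is the second one: the contraction reformulation stated after \ref{thm:oracleinequality} is an \emph{in-expectation over the data} bound, whereas \ref{cor:mean} asks for a \emph{high-probability over the data} statement, so one cannot simply integrate the contraction tail. This is precisely why I reuse the high-probability exponential-moment inequality that is internal to the proof of \ref{thm:oracleinequality} (equivalently, re-run the PAC--Bayes/Donsker--Varadhan argument with $\int(\cdot)\,\tilde\Pi_{\lambda,\rho}(\d\theta\mid\mathcal D_{n})$ in place of a single draw): the same compatibility estimate from \ref{prop:compatibility} and the same concentration event then deliver the posterior-average bound with no extra work, and both reductions above are elementary Jensen steps. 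The only genuine bookkeeping is to verify that the high-probability event and the admissible range of $\lambda=n/\Xi_{0}$ are identical to those in \ref{thm:oracleinequality}, so that no new dependence on $n$, $P$, $L$, or $\delta$ enters the constant $\Xi_{2}$.
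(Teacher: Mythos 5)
Your proof is correct, and its second half takes a genuinely different route from the paper's. The first step is shared: both you and the paper apply Jensen's inequality in $L^{2}(\P^{\mathbf X})$ to bound $\mathcal E(\bar f_{\lambda,\rho})$ by the posterior average $\int\mathcal E(f_\theta)\,\tilde\Pi_{\lambda,\rho}(\d\theta\mid\mathcal D_n)$. From there the paper treats \ref{thm:oracleinequality} as a black box: it recasts it as the in-expectation-over-data contraction bound $\E[\tilde\Pi_{\lambda,\rho}(\mathcal E>r_n^2+t^2\mid\mathcal D_n)]\le 2\e^{-nt^2/\Xi_1}$ and converts this into a high-probability-over-data statement by a layer-cake decomposition of the conditional expectation followed by Markov's inequality in the data --- so, contrary to your remark that one ``cannot simply integrate the contraction tail'', integrating that tail is exactly what the paper does, the Markov step over $\mathcal D_n$ being the price of the conversion and the source of the extra $\Xi_1/n$ terms absorbed into $\Xi_2$. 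You instead re-open the proof of \ref{prop:mainpaclemma}, and this does work: in the exponent there, the $\theta$-dependent terms $-\lambda\tilde{\mathcal E}_n(\theta)+\sum_{i}\psi_\rho(\tfrac{\lambda}{n}\ell_i(\theta))=\lambda\tilde R_{n,\rho}(f)$ cancel to a $\theta$-free quantity, so \ref{eq:BersteinConsequence} together with Markov over the data, the Donsker--Varadhan identity \ref{eq:constant}, and the second concentration event for $\int\tilde{\mathcal E}_n\,\d\varrho_\eta$ yield precisely your high-probability exponential-moment bound $\int\e^{\lambda\mathcal E(\theta)/8}\,\tilde\Pi_{\lambda,\rho}(\d\theta\mid\mathcal D_n)\le\e^{\lambda b_\delta/8}$, and a second Jensen step in $t\mapsto\e^{t}$ finishes the argument on the same event with the same admissible $\lambda=n/\Xi_0$. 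What each approach buys: yours shows that the PAC-Bayes indicator trick $\1_{[0,\infty)}(x)\le\e^{\lambda x}$ secretly proves the posterior-\emph{average} bound with essentially the single-draw constants, avoiding any Markov loss over the data; the paper's is shorter given the already-stated theorem, requires no re-derivation of internal estimates, and tolerates a slightly larger constant $\Xi_2$, which the corollary permits anyway.
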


\subsection{Credible sets}

In addition to the contraction rates, the Bayesian approach offers a gateway to uncertainty quantification. To this end, we will study \emph{credible balls} of the form
\begin{equation}\label{eq:credibleSet}
\widehat{C}(\tau_{\alpha}) \coloneqq\{h\in L^{2}:\Vert h-\bar{f}_{\lambda,\rho}\Vert_{L^{2}(\P^{\mathbf{X}})}\le\tau_{\alpha}\},\qquad\alpha\in(0,1),
\end{equation}
with critical values 
\[
\tau_{\alpha}\coloneqq\argmin{\tau>0}\big\{\tilde{\Pi}_{\lambda,\rho}(\theta:\Vert f_{\theta}-\bar{f}_{\lambda,\rho}\Vert_{L^{2}(\P^{\mathbf{X}})}\le\tau\mid\mathcal{D}_{n})>1-\alpha\big\}.
\]
By construction $\hat C(\tau_{\alpha})$ is the smallest $L^{2}$-ball around $\bar{f}_{\lambda,\rho}$ that contains $1-\alpha$ mass of
the surrogate posterior measure. 
While we assume that the distribution $\P^{\mathbf{X}}$ of $\mathbf{X}$ is known, to calculate this ball in practice, the distribution of $\mathbf{X}$ could be replaced by its empirical analog.

Despite the posterior belief,
it is not necessarily guaranteed that the true regression function
is contained in $\hat C(\tau_{\alpha})$. More precisely, the posterior
distribution might be quite certain, in the sense that the credible
ball is quite narrow, but suffers from a significant bias. In general,
it might happen that $\P(f\in\hat C(\tau_{\alpha}))\to0$, see, \emph{e.g.}, 
\citet[Theorem 4.2]{KnapiketAl2011} in a Gaussian model. To circumvent this, \citet{RousseauSzabo2020} have introduced
inflated credible balls where the critical value is multiplied with
a slowly diverging factor. 
While they proved that this method works in several classical nonparametric models with a sieve prior, aiming for a 
neural network setting causes an additional problem. 

In order to prove
coverage, \emph{i.e.}, to prove that the credible ball contains the true regression function with high probability, we would like to compare norms in the intrinsic parameter
space, \emph{i.e.}, the space of the network weights, with the norm of the
resulting predicted regression function. While the fluctuation of $f_\theta$ can be controlled via the fluctuation of $\theta$, more precisely we have $\Vert f_{\theta}-f_{\theta'}\Vert_{L^{2}(\P^{\mathbf{X}})}\le \Delta|\theta-\theta'|_\Theta$ by \ref{assu:F}, the converse direction does not hold in general. For instance, even locally around an
oracle choice $\theta^{*}$ we cannot hope to control $| \theta|_\qnorm $
via $\Vert f_{\theta}\Vert_{L^{2}(\P^{\mathbf{X}})}$ for any $1\le \qnorm\le\infty$ in view of the ambiguous
network parametrization. As a consequence, we define another critical
value at the level of the parameter space 
\[
\tau_{\alpha}^{\theta_0}\coloneqq\argmin{\tau>0}\big\{\tilde{\Pi}_{\lambda,\rho}(\theta:\vert \theta-\theta_0\vert_{\Theta}\le\Delta^{-1}\tau\mid\mathcal{D}_{n})>1-\alpha\big\}
\]
for some center $\theta_0\in(-B, B)^\pardim$. 
Both critical values measure the fluctuation of the posterior. The theoretical properties
of the credible ball are summarized in the following theorem:
\begin{thm}[Credible balls]\label{thm:credibility}
\label{thm:credibleSet} Under \ref{assu:bounded}, \ref{assu:F} and with constants $\K{0},\K{1},\K{2}>0$ from above we have for $\lambda=n/(\qnorm\K{0})$ with $\qnorm\ge2$, $\rate^2$ from \ref{eq:r2} and  sufficiently large $n$ that
\[
\P\Big(\diam\big(\widehat{C}(\tau_{\alpha})\big)\le4\sqrt{\qnorm\rate^{2}+\frac{2\qnorm(\K{1}\vee\K{2})}{n}\log\frac{2}{\alpha}}\Big)\ge1-\alpha.
\]
If $\mathcal E(f_{\theta^*})= \mathcal{O}( \pardim/\lambda)$,
then we have for sufficiently large $n\in\N$
\[
\P\big(f\in\widehat{C}(\xi\tau_{\alpha}^{\theta_0})\big)\ge1-\alpha,
\]
where $\xi=\Cxi\sqrt{\log(B\Delta n)}$ with some sufficiently large $\Cxi>0$ if $\theta_0$ does not depend on $\mathcal D_n$ and $\xi=(\rate/\Delta)^{-2/p}$ otherwise.
\end{thm}

Therefore, the order of the diameter of $\hat C(\tau_{\alpha})$ coincides with the contraction rate deduced from \Cref{thm:oracleinequality}. On the other hand, the larger
credible set $\widehat{C}(\xi\tau_{\alpha}^{\theta})$ defines an
honest confidence set for a fixed class of the regression functions if $\xi$ is chosen sufficiently large. That is, $f$
is contained in $\widehat{C}(\xi\tau_{\alpha}^{\theta_0})$ with probability at least $1-\alpha$. 
If $B$ and $\Delta$ grow at most polynomially in $n$ and $\theta_0$ is data independent, we can conclude from \ref{thm:credibleSet} that
\[
\P\big(f\in\widehat{C}(\tau_{\alpha}^{\theta_0}\log n)\big)\ge1-\alpha\qquad\text{for sufficiently large }n,
\]
which is in line with the inflation factor by \citet{RousseauSzabo2020}. 

The condition $\mathcal E(f_{\theta^*})= \mathcal{O}( \pardim/\lambda)=o(\pardim\log(B\Delta n)/\lambda)$ for
the coverage result means that the rate is dominated by the stochastic
error term, cf. \ref{eq:oracleIneq}. This can be achieved with a slightly larger parameter class than an optimal choice which balances the approximation error $\mathcal E(f_{\theta^*})$ and the stochastic error. This guarantees that the posterior is not underfitting
and that the posterior's bias is covered by its dispersal. The necessity of under-smoothing is well known for statistical inference.

\begin{example}[Linear model] We illustrate the above credibility statement for a linear model $f_\theta(\mathbf x)=\mathbf x^\top\theta $ with $\mathbf x\in[0,1]^\inputdim,\theta\in[-B,B]^\inputdim$ and true regression function $f=f_{\theta^*}$. Hence, the $L^2$-loss is 
\[
\mathcal E(f_\theta)=\E[(\mathbf X^\top\theta-\mathbf X^\top\theta^*)^2]=(\theta-\theta^*)^\top \E[\mathbf{X}\mathbf{X}^\top](\theta-\theta^*).
\]
In particular, \ref{assu:F} is satisfied for $|\cdot|_\Theta=|\cdot|_2$ and $\Delta^2=\lambda_{\max}(\Sigma)$ being the largest eigenvalue of the design matrix $\Sigma=\E[\mathbf{X}\mathbf{X}^\top]$. If $\Sigma$ is positive definite with smallest eigenvalue $\lambda_{\min}(\Sigma)>0$, the norms $\|f_\theta\|_{L^2(\P^{\mathbf X})}$ and $|\theta|_2$ are equivalent. \ref{thm:oracleinequality} then yields $\mathcal E(\tilde f_{\lambda,\rho})=\mathcal O(\frac \inputdim n\log n)$ with high probability.

Due to the linearity of $\theta\mapsto f_\theta$, the mean predictor from \ref{eq:postMeanRho} satisfies $\bar f_{\lambda,\rho}(\mathbf{X})=\mathbf{X}\bar\theta_{\lambda,\rho}$ with $\bar\theta:=\bar\theta_{\lambda,\rho}:=\E[\tilde\theta_{\lambda,\rho}\mid \mathcal D_n]$. Therefore,
\begin{align*}
   \tau_\alpha&=\argmin{\tau>0}\big\{\tilde{\Pi}_{\lambda,\rho}(\theta:\Vert \mathbf{X}(\theta-\bar{\theta}_{\lambda,\rho})\Vert_{L^{2}(\P^{\mathbf{X}})}\le\tau\mid\mathcal{D}_{n})>1-\alpha\big\}\\
  &\le\argmin{\tau>0}\big\{\tilde{\Pi}_{\lambda,\rho}(\theta:| \theta-\bar{\theta}_{\lambda,\rho}|\le\Delta^{-1}\tau\mid\mathcal{D}_{n})>1-\alpha\big\}\\
  &=\tau_\alpha^{{\bar\theta}_{\lambda,\rho}}\le\argmin{\tau>0}\Big\{\tilde{\Pi}_{\lambda,\rho}\Big(\theta:\Vert \mathbf{X}(\theta-\bar{\theta}_{\lambda,\rho})\Vert_{L^{2}(\P^{\mathbf{X}})}\le\frac{\lambda_{\min}(\Sigma)}{\Delta}\tau\,\Big|\,\mathcal{D}_{n}\Big)>1-\alpha\Big\}
  =\frac{\lambda_{\max}(\Sigma)}{\lambda_{\min}(\Sigma)}\tau_\alpha.
\end{align*}
\Cref{thm:credibleSet} thus yields  $\P\big(f\in\hat C\big(\big(\tfrac{\inputdim}{n}\big)^\kappa\tau_\alpha\big)\big)\ge 1-\alpha$ for an arbitrary small $\kappa>0$.

To illustrate the uncertainty quantification on both the function level and the parameter level, we consider a small simulation example with i.i.d.\ data $\mathcal{D}_n=(\mathbf{x}_i,y_i)_{1\le i\le n}$ for $n=200,\inputdim=2$ drawn from $Y=\mathbf{X}^\top\theta^\ast+\eps$ with $\theta^\ast=(0.5,0.5)^\top$, $\mathbf{X}=(1,U), U\sim\mathcal{U}(0,1),\eps\sim\mathcal{N}(0,0.1^2)$. We applied MALA with $\lambda=n,\gamma=0.01$, sMALA with $\rho=0.1,\lambda=n$ and csMALA with $\rho=0.1,\lambda=n/\rho,\gamma=0.01/\rho,\zeta=0.2$. We used a standard deviation of the proposals of $s=0.1$, a burn-in time of $b=1000$ steps and a gap length of $c=100$ to draw $N=100$ samples from each algorithm and calculate the critical values. \ref{fig:linearmodel} illustrates a typical run. All credible sets achieved a coverage of $100\%$ on both the function level and the parameter level, even without an inflating factor $\xi$ needed in theory. The radii of the credible sets generated by MALA and csMALA are comparable to each other and much smaller than that of sMALA, which suggests that the algorithm benefits from the correction term.

This impression is confirmed by a Monte Carlo simulation with $100$ iterations to approximate the coverage and radii of the credible sets. There, all credible sets achieve a coverage of $100\%$ and the average radii on the function level (parameter level) are $0.1176$ $(0.2867)$ for MALA, $0.1548$ $(0.3802)$ for sMALA and $0.1167$ $(0.2838)$ for csMALA.

\begin{figure}[h]
\includegraphics[width=\textwidth]{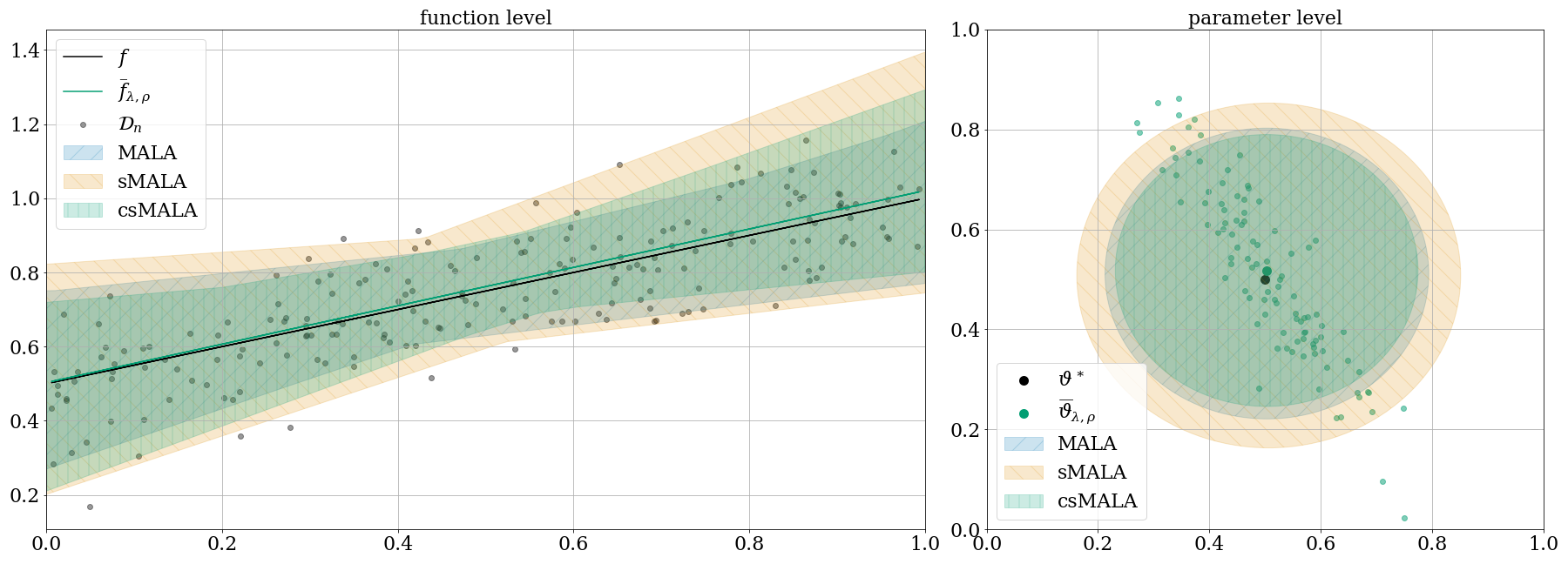}
\caption{Illustration of credible balls in a linear model. The left hand side displays the data, the true regression function and the (approximate) posterior mean of csMALA. The hatched areas are bounded by the pointwise minimum and maximum sampled prediction from MALA, sMALA and csMALA, respectively. The right hand side shows the credible balls on the parameter level, centered at their respective posterior means. The faint dots indicate $100$ parameters sampled using csMALA.}
\label{fig:linearmodel}
\end{figure}

\end{example}

\section{Application to Bayesian neural networks}\label{sec:NN}

In the sequel the estimator $\hat f$ is chosen as a neural network such that we provide statistical guarantees for stochastic neural networks.
More precisely, we consider a \emph{feedforward multilayer perceptron}
with $\inputdim\in\N$ inputs, $L\in\N$ hidden layers and constant width $r\in\N$. The latter restriction is purely for notational convenience.
The \emph{rectified linear unit} (ReLU) $\phi(x)\coloneqq\max\{x,0\},x\in\R,$ is used as activation
function. We write $\phi_{v}x\coloneqq\big(\phi(x_{i}+v_{i})\big)_{i=1,\dots,d}$
for vectors $x,v\in\R^{d}$. With this notation we can represent such neural networks as
\begin{equation}
g_{\theta}(\mathbf{x})\coloneqq W^{(L+1)}\phi_{v^{(L)}}W^{(L)}\phi_{v^{(L-1)}}\cdots W^{(2)}\phi_{v^{(1)}}W^{(1)}\mathbf{x}+v^{(L+1)},\qquad\mathbf{x}\in\R^{\inputdim},\label{eq:network}
\end{equation}
where the parameter vector $\theta$ contains all entries of the weight
matrices $W^{(1)}\in\R^{r\times \inputdim}, W^{(2)},\dots,W^{(L)}\in\R^{r\times r}, W^{(L+1)}\in\R^{1\times r}$ and
the shift (`bias') vectors $v^{(1)},\dots,v^{(L)}\in\R^r, v^{(L+1)}\in\R$.
We thus set throughout
\[
\Theta=[-B,B]^\pardim\qquad\text{with}\qquad \pardim\coloneqq(\inputdim+1)r+(L-1)(r+1)r+r+1.
\]
A layer-wise representation of $g_{\theta}$ is given by
\begin{align}
\mathbf{x}^{(0)} & \coloneqq\mathbf{x}\in\R^{\inputdim},\\
\mathbf{x}^{(l)} & \coloneqq\phi(W^{(l)}\mathbf{x}^{(l-1)}+v^{(l)}),\,l=1,\dots,L,\label{eq:neurons}\\
g_{\theta}(\mathbf{x})\coloneqq\mathbf{x}^{(L+1)} & \coloneqq W^{(L+1)}\mathbf{x}^{(L)}+v^{(L+1)},
\end{align}
where the activation function is applied coordinate-wise. We denote
the class of all such functions $g_{\theta}$ by $\mathcal{G}(\inputdim,L,r)$.
For some $\Cf\ge1$, we also introduce the class of clipped networks
\[
\mathcal{F}(\inputdim,L,r,\Cf)\coloneqq\big\{ f_{\theta}=(-\Cf)\lor(g_{\theta}\land \Cf)\,\big\vert\, g_{\theta}\in\mathcal{G}(\inputdim,L,r)\big\}.
\]

\subsection{Shallow networks}\label{subsec:shallownets}

We start with shallow neural networks where $L=1$ and thus $\pardim=(\inputdim+2)r+1$. The study of the approximation properties of shallow networks has a long history going back to \cite{barron1993universal} and is closely related to their variational spaces \cite{bach2017breaking}. Building on this branch of literature, the following result is a corollary of \cite[Theorem 2]{yang2024nonparametric}, where we formulate the approximation properties of shallow networks with uniformly bounded weights.
Let $\mathcal{C}_p^{\beta}([0,1]^{\inputdim},\cholder)$ denote classical H\"older balls of functions $f\colon[0,1]^\inputdim\to\R$ with H\"older regularity $\beta>0$ and H\"older norms bounded by $\cholder>0$.

\begin{lem}\label{lem:ApproxShallow} For any $0<\beta<(\inputdim+3)/2,\cholder>0$ there is a constant $\cshallow=\cshallow(\beta,\cholder)>0$ such that for any $f\in \mathcal C^\beta([0,1]^\inputdim,\cholder)$ there is a shallow network $g_\theta\in\mathcal G(\inputdim,1,r)$ with weights uniformly bounded by $B=1\vee r^{(3-\inputdim-2\beta)/(2 \inputdim)}$ such that
\[
\|g_\theta-f\|_{L^\infty([0,1]^\inputdim)}\le \cshallow r^{-\beta/\inputdim}.
\]
\end{lem}

If $\inputdim\ge3$ or $\beta>\frac{3-\inputdim}{2}$, all network weights can thus be uniformly bounded by one. In combination with a Lipschitz bound of the neural networks with respect to the parameters (see \ref{lem:Lipschitz} below), \ref{thm:oracleinequality} then yields the following result:
\begin{thm}[PAC-Bayes oracle inequality for shallow neural networks]
\label{thm:oracleinequalityShallow} Let $\E[\vert\mathbf{X}\vert^2]<\infty$. Under \ref{assu:bounded} there are
constants $\K{0},\K{1}>0$ depending only on $\Cf,\Ceps,\sigma$ such that for $\lambda=n/\K{0}$ and sufficiently large $n$ we have for all $\delta\in(0,1)$ with probability  at least $1-\delta$ that
\begin{equation}
\mathcal{E}(\tilde f_{\lambda,\rho})\le12\mathcal{E}(f_{\theta^{*}})+\frac{\K{1}}{n}\big( \pardim \log(rBn)+\log(2/\delta)\big).\label{eq:oracleIneq2}
\end{equation}
If $\mathbf{X}\in[0,1]^\inputdim$ and $f\in \mathcal C^\beta([0,1]^\inputdim,\cholder)$ for some $0<\beta<(\inputdim+3)/2,\cholder>0$ we obtain for some $\K{2}>0$
\begin{equation}
\mathcal{E}(\tilde f_{\lambda,\rho})\le \K{2}\big(r^{-2\beta/\inputdim}+\frac{r}{n}\log (rBn)+\log(2/\delta)\big).
\end{equation}
A shallow network with number of neurons $r$ of order $n^{\inputdim/(2\beta+\inputdim)}$ thus yields $\mathcal{E}(\tilde f_{\lambda,\rho})=\mathcal{O}_\P(n^{-2\beta/(2\beta+\inputdim)}\log n)$. All bounds hold true for $\bar{f}_{\lambda,\rho}$, too.
\end{thm}
Therefore, the shallow Bayesian neural network achieves the minimax optimal rate of convergence up to the factor $\log n$. The result is in line with \cite{tinsiDalalyan2022} who use generally aggregated shallow networks but with Gaussian priors. For empirical risk minimizers the rate has been verified in \cite{yang2024nonparametric}. Towards uncertainty quantification, we can deduce the following statement from \ref{thm:credibleSet}.

\begin{prop}[Credible balls for shallow neural networks]\label{prop:UQ}
    Let $\mathbf{X}\in[0,1]^\inputdim$. Under \ref{assu:bounded} let $f\in \mathcal C^\beta([0,1]^\inputdim,\cholder)$ for some $0<\beta<(\inputdim+3)/2,\cholder>0$. Then there are constants $\K{0},\K{1},\K{2}>0$ such that for $\lambda=n/\K{0}$, $r=\K{1}n^{\inputdim/(2\beta +\inputdim)}$ and any $\alpha\in(0,1)$ the credible set from \ref{eq:credibleSet} satisfies for sufficiently large $n$ that
\[
\P\big(\diam\big(\widehat{C}(\tau_{\alpha})\big)\le \K{2}n^{-\beta/(2\beta+\inputdim)}\big)\ge1-\alpha.
\]
For $\tau_{\alpha}^{\theta_0}=\argmin{\tau>0}\big\{\tilde{\Pi}_{\lambda,\rho}(\theta:\vert \theta-\theta_0\vert_{1}\le\tau/B\mid\mathcal{D}_{n})>1-\alpha\big\}$
for any (non-random) $\theta_0\in(-B,B)^\pardim$ we have for $n\to\infty$
\[
\P\big(f\in\widehat{C}\big(\log(n) \tau_{\alpha}^{\theta_0}\big)\big)\ge1-\alpha.
\]
\end{prop}

\subsection{Deep neural networks}

For deep neural networks with $L\ge2$ we obtain the following corollary of \ref{thm:oracleinequality}:
\begin{thm}[PAC-Bayes oracle inequality for deep neural networks]
\label{thm:oracleinequalityNN} Let $\E[\vert\mathbf{X}\vert^2]<\infty$. Under \ref{assu:bounded} there are
constants $\K{0},\K{1}>0$ depending only on $\Cf,\Ceps,\sigma$ such that for $\lambda=n/\K{0}$ and sufficiently
large $n$ we have for all $\delta\in(0,1)$ with probability at least $1-\delta$ that
\begin{equation}
\mathcal{E}(\tilde f_{\lambda,\rho})\le12\mathcal{E}(f_{\theta^{*}})+\frac{\K{1}}{n}\big( \pardim L\log(rBn)+\log(2/\delta)\big).\label{eq:oracleIneq3}
\end{equation}
Moreover, we have with probability at least $1-\delta$ that
\[
\mathcal{E}(\bar{f}_{\lambda,\rho})\le 12\mathcal{E}(f_{\theta^{\ast}})+\frac{\K{2}}{n}\big( \pardim L\log(rBn)+\log(2/\delta)\big)
\]
with a constant $\K{2}$ only depending on $\Cf,\Ceps,\sigma$ from \ref{assu:bounded}.
\end{thm}

Using the approximation properties of deep neural networks, the oracle
inequality yields the optimal rate of convergence (up to a logarithmic
factor) over the following class of hierarchical functions:
\begin{align*}
\mathcal{H}(\hdepth,\mathbf{d},\mathbf{t},\beta,\cholder) & \coloneqq\Big\{ g_{\hdepth}\circ\dots\circ g_{0}\colon[0,1]^{\inputdim}\to\R\,\Big\vert\,g_{i}=(g_{ij})_{j}^{\top}\colon[a_{i},b_{i}]^{d_{i}}\to[a_{i+1},b_{i+1}]^{d_{i+1}},\\
 & \qquad\qquad\qquad\qquad\qquad\qquad\qquad g_{ij}\text{ depends on at most \ensuremath{t_{i}} arguments,}\\
 & \qquad\qquad\qquad\qquad\qquad\qquad\qquad g_{ij}\in\mathcal{C}_{t_{i}}^{\beta_{i}}([a_{i},b_{i}]^{t_{i}},\cholder),\text{ for some }\vert a_{i}\vert,\vert b_{i}\vert\le \cholder\Big\},
\end{align*}
where $\mathbf{d}\coloneqq(\inputdim,d_{1},\dots,d_{\hdepth},1)\in\N^{\hdepth+2},\mathbf{t}\coloneqq(t_{0},\dots,t_{\hdepth})\in\N^{\hdepth+1},\beta\coloneqq(\beta_{0},...,\beta_{q})\in(0,\infty)^{\hdepth+1}$. For a detailed discussion of $\mathcal{H}(\hdepth,\mathbf{d},\mathbf{t},\beta,\cholder) $, see \cite{schmidthieber2020}.
\ref{thm:oracleinequalityNN} reveals the following convergence rate which is in line with the upper
bounds by \citet{schmidthieber2020} and \citet{KohlerLanger2021}:

\begin{prop}[Rates of convergence]
\label{prop:rate} Let $\mathbf{X}\in[0,1]^{\inputdim}$. In the situation of \ref{thm:oracleinequalityNN},
there exists a network architecture $(L,r)=(C_1\log n,C_2(n/(\log n)^3)^{t^\ast/(4\beta^\ast+2t^\ast)})$ with $C_{1},C_{2}>0$ only depending on upper bounds for $\hdepth,\vert \mathbf{d}\vert_\infty,\vert\beta\vert_\infty ,C_0$ such that the estimators $\tilde f_{\lambda,\rho}$ and
$\bar{f}_{\lambda,\rho}$ with $B=C_3n$ for some $C_3>0$ satisfy for sufficiently large
$n$ uniformly over all hierarchical functions $f\in\mathcal{H}(\hdepth,\mathbf{d},\mathbf{t},\beta,C_{0})$
\begin{align*}
\mathcal{E}(\tilde f_{\lambda,\rho}) & \le\K{3}\Big(\frac{(\log n)^3}{n}\Big)^{2\beta^{\ast}/(2\beta^{\ast}+t^{\ast})}+\K{3}\frac{\log(2/\delta)}{n}\qquad\text{and}\\
\mathcal{E}(\bar{f}_{\lambda,\rho}) & \le\K{4}\Big(\frac{(\log n)^3}{n}\Big)^{2\beta^{\ast}/(2\beta^{\ast}+t^{\ast})}+\K{4}\frac{\log(2/\delta)}{n}
\end{align*}
with probability  at least $1-\delta$, respectively, where
$\beta^{*}$ and $t^{*}$ are given by 
\[
\beta^\ast\coloneqq\beta^\ast_{i^\ast},\qquad t^\ast\coloneqq t^\ast_{i^\ast}\qquad\text{for}\qquad i^\ast\in \argmin{i=0,\dots,\hdepth}\frac{2\beta_{i}^{*}}{2\beta_{i}^{*}+t^{\ast}_{i}}\qquad\text{and}\qquad 
\beta_{i}^{\ast}\coloneqq\beta_{i}\prod_{l=i+1}^{\hdepth}(\beta_{l}\land 1).
\]
The constants $\K{3}$ and $\K{4}$ only depend on upper bounds for $q,\mathbf{d},\beta$
and $C_{0}$ as well as the constants from \ref{assu:bounded}.
\end{prop}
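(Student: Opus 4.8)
The plan is to combine the oracle inequality of \ref{thm:oracleinequality} (and \ref{cor:mean} for the posterior mean) with the approximation theory for deep ReLU networks, and then to balance the resulting approximation and stochastic error terms through the architecture $(L,r)$. From \ref{thm:oracleinequality} we have, with probability at least $1-\delta$,
\[
\mathcal{E}(\tilde f_{\lambda,\rho})\le 12\,\mathcal{E}(f_{\theta^{*}})+\frac{\Xi_{1}}{n}\big(PL\log n+\log(2/\delta)\big),
\]
so it remains to (i) bound the approximation error $\mathcal{E}(f_{\theta^{*}})=\inf_{\theta\in[-B,B]^{P}}\Vert f_{\theta}-f\Vert_{L^{2}(\P^{\mathbf{X}})}^{2}$ uniformly over $f\in\mathcal{H}(q,\mathbf{d},\mathbf{t},\beta,C_{0})$ and (ii) express the complexity $PL\log n$ in terms of $(L,r)$.

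For step (i) I would invoke the approximation theorem of \citet{KohlerLanger2021} for fully connected ReLU networks approximating hierarchical composition functions (as in \citet{schmidthieber2020}). For depth $L\asymp\log n$ and width $r$ this produces a network $g_{\theta}\in\mathcal{G}(p,L,r)$ whose sup-norm distance to $f$ is governed by the effective smoothness/dimension pair $(\beta^{\ast},t^{\ast})$ arising from the slowest composition level $i^{\ast}$, where $\beta_{i}^{\ast}=\beta_{i}\prod_{l>i}(\beta_{l}\land1)$. Two points need care here: the construction must yield weights bounded in absolute value so that the parameter vector lies in $[-B,B]^{P}$ for a fixed $B$ independent of $n$, and, since $\Vert f\Vert_{\infty}\le C$ by \ref{assu:bounded}(1), clipping the network to $[-C,C]$ only decreases the $L^{2}$-distance to $f$, so the clipped network belongs to $\mathcal{F}(p,L,r,C)$ and furnishes the bound $\mathcal{E}(f_{\theta^{*}})\lesssim r^{-4\beta^{\ast}/t^{\ast}}$ (up to logarithmic factors), uniformly over the class, with constants depending only on the stated upper bounds for $q,\vert\mathbf{d}\vert_{\infty},\vert\beta\vert_{\infty},C_{0}$.

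For step (ii), the middle weight matrices dominate the parameter count, so $P=(p+1)r+(L-1)(r+1)r+r+1\asymp Lr^{2}$, and with $L=C_{1}\log n$ this gives $PL\log n\asymp r^{2}(\log n)^{3}$; hence the stochastic term is of order $r^{2}(\log n)^{3}/n$. Balancing it against the approximation term $r^{-4\beta^{\ast}/t^{\ast}}$ yields $r\asymp\big(n/(\log n)^{3}\big)^{t^{\ast}/(4\beta^{\ast}+2t^{\ast})}$, at which both contributions equal $\big((\log n)^{3}/n\big)^{2\beta^{\ast}/(2\beta^{\ast}+t^{\ast})}$; this is exactly the asserted architecture and rate. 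A short substitution confirms the computation, since with $u=(\log n)^{3}/n$ one has $r^{2}u=u^{1-2t^{\ast}/(4\beta^{\ast}+2t^{\ast})}=u^{2\beta^{\ast}/(2\beta^{\ast}+t^{\ast})}$. The bound for $\bar f_{\lambda,\rho}$ follows identically from \ref{cor:mean}, and the optimization of the exponent $2\beta_{i}^{\ast}/(2\beta_{i}^{\ast}+t_{i})$ over levels $i=0,\dots,q$ selects $(\beta^{\ast},t^{\ast})$.

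The main obstacle is the approximation step. I expect the delicate points to be extracting from the Kohler--Langer construction a network with uniformly bounded weights (so that $\theta^{\ast}$ genuinely ranges only over $[-B,B]^{P}$ with $B$ fixed), matching their quantification of network size to the constant-width parametrization used here so that the tradeoff reads cleanly as $r^{-4\beta^{\ast}/t^{\ast}}$ versus $r^{2}(\log n)^{3}/n$ with all logarithmic factors accounted for, and ensuring that a single architecture works simultaneously for every $f$ in the class. Once the architecture is fixed, everything reduces to the routine substitution indicated above.
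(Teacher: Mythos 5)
Your proposal is correct and follows essentially the same route as the paper: the oracle inequality of \ref{thm:oracleinequality} (resp.\ \ref{cor:mean}), the Kohler--Langer approximation theorem combined with the composition argument of Schmidt-Hieber, the same two delicate points (weights staying in $[-B,B]^{P}$, which the paper handles by noting the construction's weights grow only logarithmically, and clipping to $[-C,C]$, which can only decrease the $L^2$-distance since $\Vert f\Vert_\infty\le C$), and the identical balancing of $r^{2}(\log n)^{3}/n$ against the approximation error. The only cosmetic difference is that the paper balances a separate size $M_i$ at each composition level and then takes $r=\max_i r_i d_{i+1}$, whereas you work with a single common width from the start; since the dominant level is exactly $i^{\ast}$ and padding narrower levels to width $r$ is harmless, both yield the same architecture and rate.
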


It has been proved by \citet{schmidthieber2020} that this is the minimax optimal rate of convergence for the nonparametric estimation of $f$ up to logarithmic factors. Studying the special case of classical H\"older balls 
$\mathcal{C}_{\inputdim}^{\beta}([0,1]^{\inputdim},C_{0})$, a contraction rate of order $n^{-2\beta/(2\beta+\inputdim)}$ has been derived for deep neural networks by \citet{polsonRockova2018}, \citet{cherief2020}. Hierarchical regression functions have also been studied by \cite{castillo2024posterior}.

\subsection{Adaptive choice of the network width\label{sec:Adaptation}}
To balance the approximation error term and the stochastic error term in \ref{eq:r2}, we have to choose an optimal network width. In this section we present a fully data-driven approach to this hyperparameter optimization problem which avoids evaluating competing network architectures on a validation set.
To account for the model selection problem, we augment the approach with a mixing prior, which prefers narrower neural networks. Equivalently, this approach can be understood as a hierarchical Bayes method where we put a geometric distribution on the hyperparameter $r$. While this method has interesting theoretical properties, an efficient implementation is challenging and left for future research. Still, it can be seen as a link to the literature on model selection through hierarchical priors, see, \emph{e.g.}, \cite{Guedj2013} for additive models and \cite{SteffenTrabs2025} for sparse neural networks.

We set
\[
\widecheck{\Pi}=\sum_{r=1}^n 2^{-r}\Pi_r\Big/(1-2^{-n}),
\]
where $\Pi_{r}=\mathcal{U}([-B,B]^{\pardim_r})$ with 
\[
\pardim_r\coloneqq (\inputdim+1)r+(L-1)(r+1)r+r+1.
\]
The basis $2$ of the geometric weights is arbitrary and can be replaced by a larger constant to assign even less weight to wide networks, but the theoretical results remain the same up to constants.

We obtain our adaptive estimator $\widecheck{f}_{\lambda,\rho}$ by drawing a parameter $\theta$  from the surrogate-posterior distribution with respect to this prior, \emph{i.e.}, 
\begin{equation}
    \widecheck{f}_{\lambda,\rho}\coloneqq f_{\widecheck{\theta}_{\lambda,\rho}} \qquad \text{for} \qquad \widecheck{\theta}_{\lambda,\rho}\mid\mathcal{D}_n \sim \widecheck{\Pi}_{\lambda,\rho}(\cdot\mid\mathcal{D}_n)\qquad\text{with}\qquad \widecheck{\Pi}_{\lambda,\rho}(\theta\mid\mathcal{D}_n)\propto\exp\big(-\lambda \tilde{R}_{n,\rho}(\theta)\big)\widecheck{\Pi}(\d \theta).
\end{equation}
This modification allows the estimator to adapt to the optimal network width and we can compare its performance against that of the network corresponding the oracle choice of the parameter
\[\theta^{\ast}_r\in\argmin{\theta\in[-B,B]^{\pardim_r}}R(\theta)\label{eq:oracleL}
\]
given any width $r$. We obtain the following adaptive version of \ref{thm:oracleinequality}:

\begin{thm}[Width-adaptive oracle inequality]\label{thm:learningthewidth}
Under \ref{assu:bounded} and if $\E[\vert\mathbf{X}\vert^2]<\infty$, there is a constant
 $\K{1}>0$ depending only on $\Cf,\Ceps,\sigma$ such that for $\lambda=n/\K{0}$ (with $\K{0}$ from \ref{thm:oracleinequality}) and sufficiently
large $n$ we have for all $\delta\in(0,1)$ with probability  at least $1-\delta$ that
\begin{equation}
\mathcal{E}(\widecheck{f}_{\lambda,\rho})\le\min_{r=1,\dots,n} \Big(12\mathcal{E}(f_{\theta^{\ast}_r})+\frac{\K{1}}{n}\big({\pardim_r} L\log(rBn)+\log(2/\delta)\big)\Big).
\end{equation}

\end{thm}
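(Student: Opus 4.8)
The plan is to run the PAC-Bayes argument underlying \ref{thm:oracleinequality} essentially verbatim, changing only the reference prior from $\Pi$ to the mixing prior $\widecheck\Pi$ and the localizing comparison measure. The decisive observation is that the concentration machinery in the proof of \ref{thm:oracleinequality}---the compatibility bound between $\tilde R_{n,\rho}$ and $R_n$ from \ref{prop:compatibility} together with the sub-Gaussian/Bernstein control of $R_n-R$ granted by \ref{assu:bounded}---is \emph{pointwise in $\theta$} and hence prior-agnostic. The prior enters the analysis only through the Kullback--Leibler term in the Donsker--Varadhan variational formula for the Gibbs-type posterior $\widecheck\Pi_{\lambda,\rho}(\cdot\mid\mathcal D_n)$. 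Consequently, the exponential-moment step produces a \emph{single} event of probability at least $1-\delta$ on which the resulting inequality
\[
\mathcal E(\widecheck f_{\lambda,\rho})\le 12\int\mathcal E\,\d\nu+\frac{\Xi_5}{n}\big(\KL(\nu\mid\widecheck\Pi)+\log(2/\delta)\big)
\]
holds \emph{simultaneously for every} comparison measure $\nu\ll\widecheck\Pi$, exactly as in the non-adaptive case; no union bound over the model index is required, which is what keeps the $\log(2/\delta)$ term free of any $\log n$ inflation.

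Second, I would fix an arbitrary width $r\in\{1,\dots,n\}$ and localize inside the $r$-th model: take $\nu_r$ to be the uniform distribution on a small $\ell^\infty$-ball $B_\infty(\theta_r^\ast,\eta)\cap[-B,B]^{P_r}$, regarded as a measure on the $r$-th component of $\widecheck\Pi$. As in the proof of \ref{thm:oracleinequality}, the radius $\eta$ is chosen polynomially small so that the Lipschitz estimate \ref{lem:Lipschitz}, $\Vert f_\theta-f_{\theta'}\Vert_{L^2(\P^{\mathbf X})}=\mathcal O(\Delta(L,r)\,|\theta-\theta'|_\infty)$ with $\Delta(L,r)=(2rB)^L$, forces $\int\mathcal E\,\d\nu_r\le\mathcal E(f_{\theta_r^\ast})+\mathcal O(P_rL\log(n)/n)$; the factor $L\log n$ in the penalty is produced here, through $\log(1/\eta)\sim L\log(2rB)+\log n\lesssim L\log n$ (using $r\le n$).

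Third, I would compute the Kullback--Leibler cost of the mixing prior. Since $\nu_r$ is supported on the $r$-th component, which carries prior mass $w_r=2^{-r}/(1-2^{-n})$,
\[
\KL(\nu_r\mid\widecheck\Pi)=\KL(\nu_r\mid\Pi_r)-\log w_r=\KL(\nu_r\mid\Pi_r)+r\log 2+\log(1-2^{-n}).
\]
The term $\KL(\nu_r\mid\Pi_r)=P_r\log(B/\eta)=\mathcal O(P_rL\log n)$ is exactly the penalty already appearing in \ref{thm:oracleinequality}, while $\log(1-2^{-n})\le0$ and the genuinely new model-complexity term obeys $r\log 2\le P_r\log 2\le P_rL\log n$ for $n\ge 2$. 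Hence the mixing weight contributes only a constant-factor inflation, absorbed into $\Xi_5$, and on the event of the first step we obtain, for every fixed $r$,
\[
\mathcal E(\widecheck f_{\lambda,\rho})\le 12\mathcal E(f_{\theta_r^\ast})+\frac{\Xi_5}{n}\big(P_rL\log(n)+\log(2/\delta)\big).
\]
Because this holds on one and the same event for all $r$ simultaneously and the left-hand side does not depend on $r$, I may pass to $\min_{r=1,\dots,n}$ on the right-hand side, which is the claim; the conversion from the posterior-integrated bound to the single-draw estimator $\widecheck f_{\lambda,\rho}$ carries over unchanged from the proof of \ref{thm:oracleinequality}, being itself independent of the prior.

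The main obstacle I anticipate is the bookkeeping that guarantees the concentration event is genuinely uniform over the whole mixture---i.e. that the pointwise exponential-moment bounds underpinning \ref{prop:compatibility} can be integrated against the mixture posterior $\widecheck\Pi_{\lambda,\rho}(\cdot\mid\mathcal D_n)$ without degrading as $r$ (and thus $P_r$ and the typical fluctuation size) grows up to $n$. Verifying that the range $r\le n$ costs at most the innocuous $\log n$ already present, and that the single-draw conversion still holds uniformly across models, is the delicate point; the Kullback--Leibler accounting itself is routine.
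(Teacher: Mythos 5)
Your proof is correct and follows essentially the same route as the paper: the PAC-Bayes bound of \ref{prop:mainpaclemma} is prior-agnostic, so it transfers verbatim to $\widecheck\Pi$, after which you localize with the uniform measure on an $\eta$-ball around $\theta^\ast_r$ in the $r$-th model and pay the extra Kullback--Leibler cost $r\log 2+\log(1-2^{-n})\le r\le P_rL\log n$, exactly as in \ref{lem:aux_adaptive}. The only cosmetic difference is that you invoke simultaneity of the PAC-Bayes inequality over all comparison measures to pass to $\min_{r}$, whereas the paper simply applies the bound once with the deterministic minimizer $r^\ast$ of the non-random right-hand side --- both justifications are valid.
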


Since the modified estimator mimics the performance of the optimal network choice regardless of width, we obtain the following width-adaptive version of \ref{prop:rate} with no additional loss in the convergence rate:
\begin{cor}[Width-adaptive rates of convergence]\label{cor:rate_adaptive}
Let $\mathbf{X}\in[0,1]^{\inputdim}$. In the situation of \ref{thm:learningthewidth},
there exists a network depth $L=C_4\log n$ with $C_4>0$ only depending on upper bounds for $\hdepth,\vert \mathbf{d}\vert_\infty,\vert\beta\vert_\infty ,\cholder$ such that the estimator $\widecheck{f}_{\lambda,\rho}$ satisfies for sufficiently large
$n$ and $B=C_5n$ uniformly over all hierarchical functions $f\in\mathcal{H}(\hdepth,\mathbf{d},\mathbf{t},\beta,\cholder)$
\begin{align*}
\mathcal{E}(\widecheck{f}_{\lambda,\rho}) & \le\K{2}\Big(\frac{(\log{n})^3}{n}\Big)^{2\beta^{\ast}/(2\beta^{\ast}+t^{\ast})}+\K{2}\frac{\log(2/\delta)}{n}
\end{align*}
with probability
  at least $1-\delta$, where $\beta^{\ast}$ and $t^{\ast}$ are as in \ref{prop:rate}.
The constant $\K{2}$ only depends on upper bounds for $\hdepth,\vert\mathbf{d}\vert_\infty,\vert \beta\vert_\infty$
and $\cholder$ as well as the constants $\Cf,\Ceps,\sigma$ from \ref{assu:bounded}.
\end{cor}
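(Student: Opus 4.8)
The plan is to derive the corollary directly from the width-adaptive oracle inequality of \ref{thm:learningthewidth} by evaluating its right-hand side at the single width that was used to establish \ref{prop:rate}; since that statement gives, with probability at least $1-\delta$,
\[
\mathcal{E}(\widecheck{f}_{\lambda,\rho})\le\min_{r=1,\dots,n} \Big(12\mathcal{E}(f_{\theta^{\ast}_r})+\frac{\Xi_5}{n}\big({P_r} L\log(n)+\log(2/\delta)\big)\Big),
\]
it suffices to exhibit one admissible index $r_n\in\{1,\dots,n\}$ for which the bracketed expression attains the claimed rate; the minimum can only be smaller. This reduction is really the whole point: the mixing prior $\widecheck{\Pi}$ makes the optimal width available ``for free'' through the minimum, so no prior knowledge of the unknown regularity is needed to select it.

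First I would fix the depth $L=C_3\log n$ and choose $r_n\asymp\big(n/(\log n)^3\big)^{t^\ast/(4\beta^\ast+2t^\ast)}$, i.e.\ exactly the width appearing in the architecture of \ref{prop:rate}. Because $t^\ast/(4\beta^\ast+2t^\ast)<1/2$ (as $2\beta^\ast+t^\ast>t^\ast$), we have $r_n\le n$ for $n$ large enough, so $r_n$ is a legitimate index in the minimum. Next I would bound the approximation error $\mathcal{E}(f_{\theta^\ast_{r_n}})$, which is precisely the quantity controlled in the proof of \ref{prop:rate}: invoking the approximation theorem of \citet{KohlerLanger2021} with depth of order $\log n$ and width $r_n$, every $f\in\mathcal{H}(q,\mathbf{d},\mathbf{t},\beta,C_0)$ is approximated in squared $L^2(\P^{\mathbf{X}})$-norm up to order $\big((\log n)^3/n\big)^{2\beta^\ast/(2\beta^\ast+t^\ast)}$, uniformly over the class, provided $C_3$ is large enough depending only on upper bounds for $q,\vert\mathbf{d}\vert_\infty,\vert\beta\vert_\infty,C_0$; since $\mathcal{E}(f_{\theta^\ast_{r_n}})$ is the minimal excess risk over $[-B,B]^{P_{r_n}}$, it is dominated by this bound.

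It remains to bound the stochastic term. As $L\asymp\log n$ is held fixed across all $r$, the parameter count obeys $P_{r_n}\asymp L\,r_n^2$, whence $P_{r_n}L\log(n)/n\asymp(\log n)^3 r_n^2/n$; substituting $r_n$ reproduces exactly the balance $\big((\log n)^3/n\big)^{2\beta^\ast/(2\beta^\ast+t^\ast)}$ achieved in \ref{prop:rate}, using $1-t^\ast/(2\beta^\ast+t^\ast)=2\beta^\ast/(2\beta^\ast+t^\ast)$ in the exponents. Adding the term $\log(2/\delta)/n$, which carries through unchanged, and collecting all constants into a single $\Xi_6$ finishes the argument.

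The computations in the last two steps are routine and identical to those already carried out for \ref{prop:rate}, so the genuinely new content is the reduction itself. The one point to check carefully is that a single depth $L=C_3\log n$, not tailored to $r_n$, still satisfies the hypotheses of the Kohler--Langer approximation theorem uniformly for the chosen width; this is unproblematic because that theorem only requires the depth to exceed a threshold of order $\log n$ fixed by the class parameters, which is compatible with our choice of $C_3$.
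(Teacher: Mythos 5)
Your proposal is correct and follows essentially the same route as the paper: the paper's proof likewise plugs the architecture $(L,r)=(C\log n, C'(n/(\log n)^3)^{t^\ast/(4\beta^\ast+2t^\ast)})$ from \ref{prop:rate} into the minimum in \ref{thm:learningthewidth} and controls $\mathcal{E}(f_{\theta^\ast_{r_n}})$ by the same Kohler--Langer approximation result. Your write-up merely makes explicit the routine checks (admissibility $r_n\le n$, the balance $P_{r_n}L\log(n)/n\asymp((\log n)^3/n)^{2\beta^\ast/(2\beta^\ast+t^\ast)}$, and the uniformity of the depth threshold) that the paper leaves implicit.
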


For sparse neural networks, contraction rates for hierarchical Bayes procedures have been analyzed by \citet{polsonRockova2018} and \citet{SteffenTrabs2025}. Adaptivity with heavy-tailed weights has been achieved by \cite{castillo2024posterior}. It should be noted that we cannot hope to construct credible sets with coverage as in \ref{thm:credibleSet} based on the adaptive posterior
distribution. It is well known that adaptive honest confidence sets
are only possible under additional assumptions, \emph{e.g.},  self-similarity
or polished tail conditions, on the regularity of the regression function, see \citet{HoffmannNickl2011}
and we remark that such conditions with respect to the network parametrization
seem infeasible.

\section{Numerical examples\label{sec:Numerics}}

\begin{figure}[h]
\includegraphics[width=\textwidth]{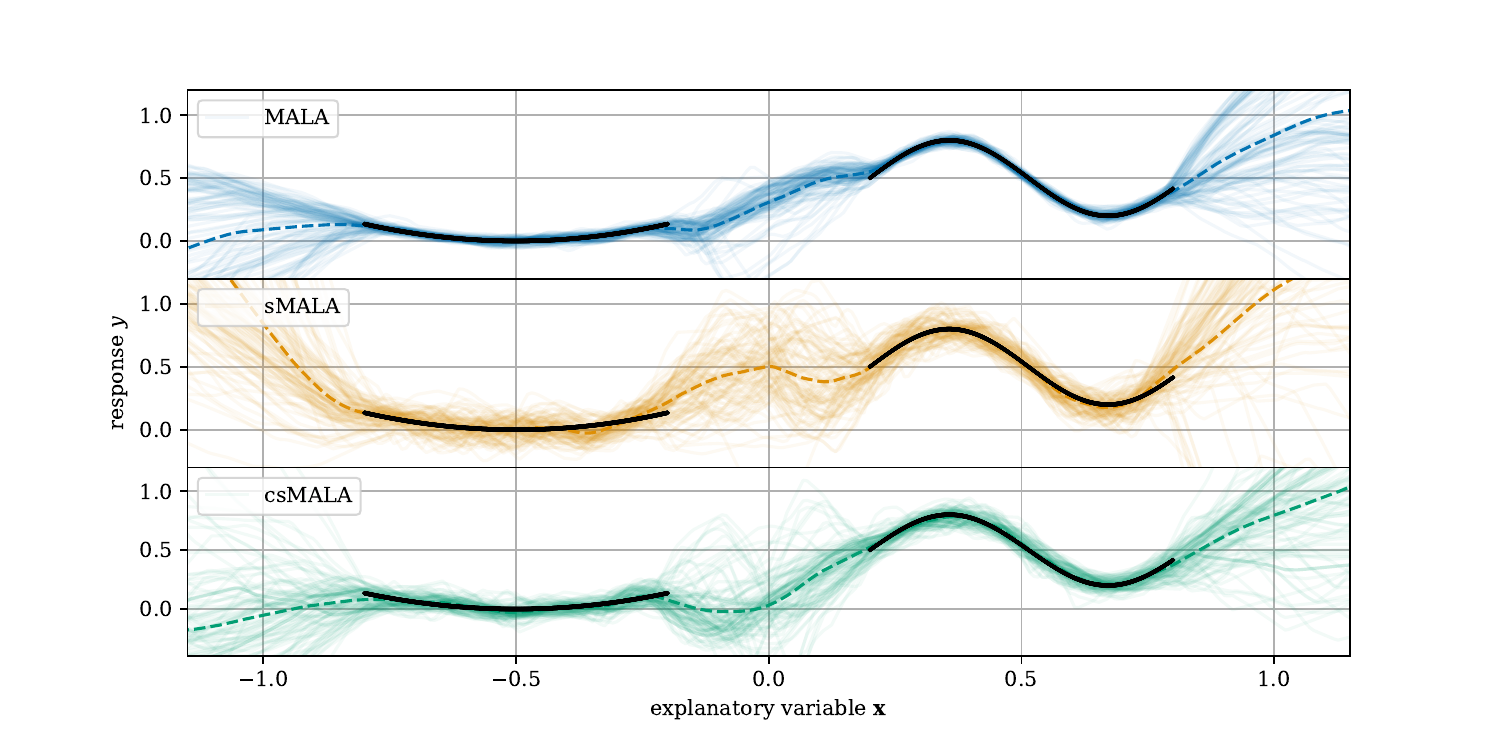}
\caption{$100$ samples drawn from the different MALA chains, given a training sample (black markers) of $10000$ points. Random variables are drawn for $\rho = 0.1$. The dashed line shows the corresponding posterior mean $\bar{f}_\lambda$.}
\label{fig:fit}
\end{figure}

\ref{ssec:csMALA} introduces a correction to the batch-wise approximation of the empirical risk when calculating the MH step.
In the following, we will show the merit of this correction for learning a one-dimensional regression function using a feed-forward neural network of $L=2$ layers of $r=100$ nodes each and ReLU activation.
The neural network has a total number of $10401$ parameters.
The training sample of size $10000$ consist of two equally populated intervals $ [-0.8, -0.2]$ and $ [0.2, 0.8]$ with $\mathbf{X}_i\sim\mathcal{U}([-0.8,-0.2]\cup [0.2,0.8])$ and true regression function 
\begin{equation*}
    f(x) = \begin{cases*}
            1.5 (x+0.5)^2  & for  $x < 0$  \\
            0.3 \sin{(10x-2)} + 0.5 & for $x\ge 0$.
            \end{cases*}
\end{equation*}
We generate the training sample following $Y=f(\mathbf{X})+\eps$ by adding an observation error $\eps \sim \mathcal{N}(0,0.02^2)$. 
In the interval between $-0.2$ and $0.2$ no data is produced in order to illustrate whether the methods recover the resulting large uncertainty due to missing data. 
For a sufficiently flexible model we expect a large spread between samples from each Markov chain in this region.
\ref{fig:fit} depicts exactly this behaviour, as well as the training sample.

To compare the convergence of MALA, stochastic MALA (sMALA) and our corrected stochastic MALA (csMALA) within reasonable computation time, we initialize the chains with network parameters obtained through optimization of the empirical risk with stochastic gradient descent for $2000$ steps. 
For this pre-training, we use a learning rate of $10^{-3}$. 
The hyperparameters of the subsequent chains are listed in \ref{tab:params}.
The inverse temperature is chosen to counteract the different normalization terms of the risk for (s)MALA and csMALA, as well as the reduction of the learning rate by $(2-\rho)$ through the correction term from \ref{ssec:csMALA}.
The proposal noise level per parameter dimension is normalized with respect to the number of network parameters such that the total length of the noise vector is independent of the parameter space dimension.

To further improve the efficiency of the sampling, we restart \ref{alg:MCMCV2} with $\vartheta^{(0)}$ set to the last accepted parameters whenever no proposal has been accepted for $100$ steps.
Especially for small $\rho$ and large $\eps$, the stochastic MH algorithms exhibit the tendency to get stuck after accepting an outlier batch with low risk.

\begin{figure}[t]
\centering
\begin{minipage}{0.4\linewidth}
    \begin{tabular}{l|ccc}
     & MALA               & sMALA              & csMALA             \\ \hline
    $\lambda$    & $n$ & $n \cdot \rho$  & $n \cdot (2-\rho)$  \\
    $\gamma$    & $10^{-4}$ & $10^{-4}$ & $10^{-4}/\rho$  \\
    $s$ & & $0.2/\sqrt{\pardim}$ & \\
    $b$ & & $100000/\rho$ &  \\
    $c$ & & $5000$ & \\
    $N$ & & $20$ & 
    \end{tabular}
    \vspace{1.25cm}
    \captionof{table}{Parameter choice for the different MALA chains. For $\rho = 0.1$, we chose a burn-in of $b = 50000$ to keep computation costs low.}
    \label{tab:params}
\end{minipage}
\hfill
\begin{minipage}{0.57\linewidth}\centering
    \includegraphics[width=1.\linewidth]{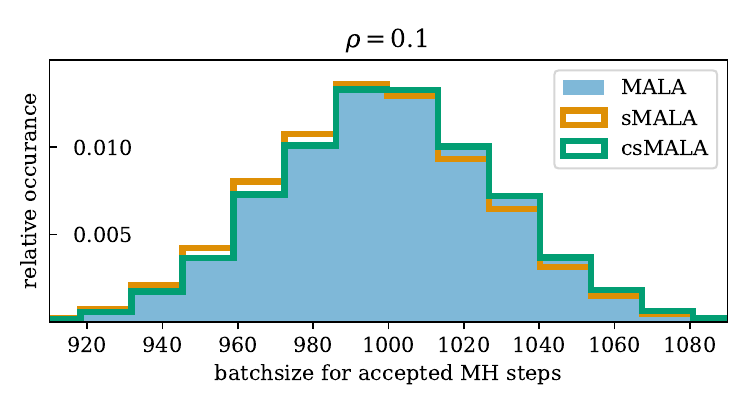}
  \caption{Histogram of the summed auxiliary variables, that is the number of training samples contributing to the stochastic risk, for all accepted steps.
  For MALA the MH acceptance step is calculated on the full sample and the distribution of the samples contribution to the risk gradients is thus unbiased by the batch size.}
  \label{fig:correction}
\end{minipage}
\end{figure}

It is also important to adapt $\zeta$ such that 
\begin{equation*}
    -\zeta\frac{\log{\rho}}{\lambda} \approx \frac{1}{n}\sum_{i=1}^{n}\ell_{i}(\theta^{(k)}).
\end{equation*}
For $\zeta$ lower than this, a bias is introduced towards accepting updates where many points of the data sample contributed to the stochastic risk approximation due to the Bernoulli distributed auxiliary variables.
Conversely, for higher values updates are preferably accepted for low amounts of points in the risk approximation.
This bias to small batches, note the minus sign due to $\log{\rho}$, can also be observed for the uncorrected sMALA.
It arises from the dependence of $R_n$ on the sum of the drawn auxiliary variables $Z_i$.
\ref{fig:correction} shows a histogram of this sum for all accepted steps.
A clear bias for sMALA towards small batches can be seen.
To achieve a good correction, we update $\zeta$ every $100$ steps to fulfill the preceding correspondence.
Over the chain, the correction factor thus falls like the empirical risk with $\zeta \ll 1$ due to the proportionality to $n^{-1}$.

\begin{figure}[t]
\includegraphics[width=\textwidth]{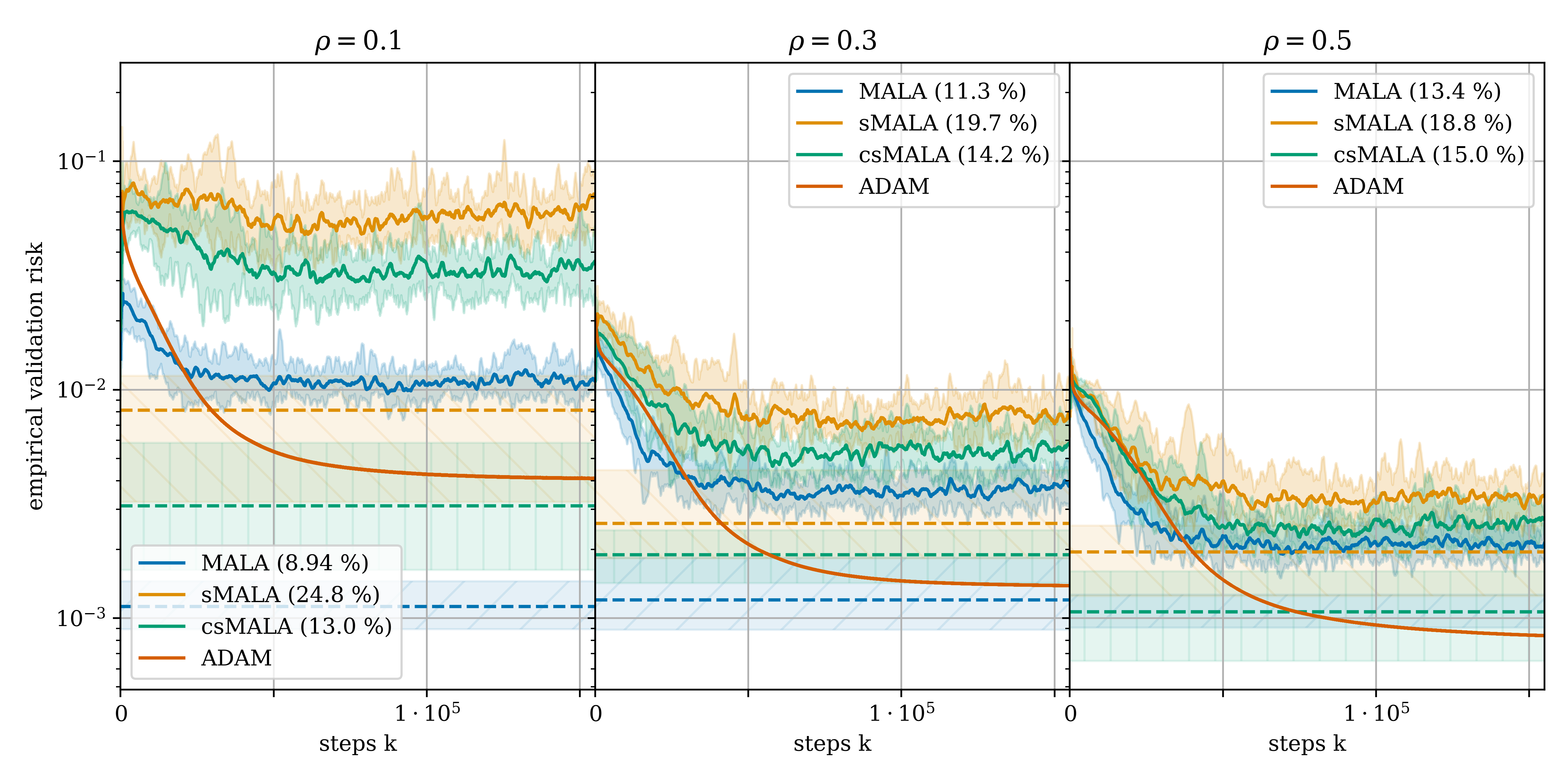}
\caption{Average empirical risk on a validation set of $10000$ points during running of the MALA chains. We show different batch probabilities $\rho$, as well as the values of the posterior mean (dashed lines). Uncertainties correspond to the minimum and maximum values of $10$ identical chains. For clarity, a the simple moving average over 1501 steps is plotted. In the legend, the average acceptance probability over all $10$ chains is given. For easier interpretation of the risk values, we also show the behavior of a gradient-based optimization using \texttt{ADAM}. For a fair comparison, we ran all algorithms, including MALA, with stochastic gradients in the proposals.}
\label{fig:risks}
\end{figure}

We quantify the performance of the estimators gathered from the different chains with an independent validation sample $\mathcal{D}^\text{val}_{n_\text{val}}\coloneqq(\mathbf{X}_i^\text{val},Y^\text{val}_{i})_{i=1,\dots,n_\text{val}}\subset\R^{\inputdim}\times\R$ of size $n_\text{val} = 10000$ drawn from the same intervals as the training sample and calculate the empirical validation risk
\begin{equation*}
R_{n}(\hat f)=\frac{1}{n_\text{val}}\sum_{i=1}^{n_\text{val}}\big(Y^\text{val}_{i}-\hat f(\mathbf{X}_i^\text{val})\big)^{2}
\end{equation*}
during running of the chain.
\ref{fig:risks} illustrates the behaviour of the empirical validation risk for the different MALA algorithms, as well as for a simple inference fit using \texttt{ADAM} \cite{Adam2014} with a learning rate of $10^{-3}$. 
For a fair comparison, we calculate the gradient updates for all algorithms, including MALA and \texttt{ADAM}, from Bernoulli drawn batches, and only calculate the MH step for MALA using the full training sample.
We can  see, the individual samples of MALA outperform those of the sMALA chains, while the samples from the corrected chain achieve substantially better values than those of the uncorrected stochastic algorithm.
On a level of individual samples, all chains are outperformed by the gradient-based optimization using \texttt{ADAM}.
Investigating the posterior means, MALA outperforms \texttt{ADAM} for small $\rho$ where our corrected algorithm reaches similar risk values as the gradient-based optimization.
For moderate values of $\rho$ the corrected stochastic MALA restores the performance of the full MH step for both, posterior samples and posterior means, at a level similar to \texttt{ADAM}.
While the acceptance rates of MALA decrease for low $\rho$ and those of sMALA increase, the acceptance rates of the corrected algorithm are stable under variation of the average batch size.

To study the empirical coverage properties, we calculate $10$ individual chains per algorithm and $\rho$ and estimate the credible sets and their average radii.
As radius of our credible balls, we approximate the $99.5\%$ quantile $q_{1-\alpha}$ of the mean squared distance to the posterior mean via
\begin{equation*}
    \tau_{\alpha,N} = q_{1-\alpha}\big((h_1,...,h_N)\big) 
    \qquad\text{ with }\qquad
    h_k^2 =\frac{1}{n_\text{val}}\sum_{i=1}^{n_\text{val}} \left| f_{\theta^{(b+ck)}}(\mathbf{X}_i^\text{val}) -\bar{f}_{\lambda,\rho}(\mathbf{X}_i^\text{val})\right|^2.
\end{equation*}
To determine the coverage probability, we then calculate the number of chains with a mean squared distance of the posterior mean to the true regression function not exceeding this radius.
The results are shown in \ref{tab:radii}. 
While the uncertainty estimates of all algorithms remain conservative, we find the correction term leads to considerably more precise credible sets.

To illustrate \ref{thm:oracleinequality} and \ref{thm:oracleinequalityStochMH}, we also investigate the scaling behavior of the empirical validation risk of the posterior means with the training sample size $n$ while keeping $n\rho$ constant.
We expect the risk of MALA to fall with growing $n$, while sMALA should not decay due to the constant $n \rho$.
The numerical simulation of \ref{fig:scaling} coincides with the theoretical expectations.
For our corrected algorithm, we regain the scaling behavior of MALA as expected.

\begin{figure}[t]
\centering
\begin{minipage}{.53\textwidth}
  \centering
  \includegraphics[width=1.\linewidth]{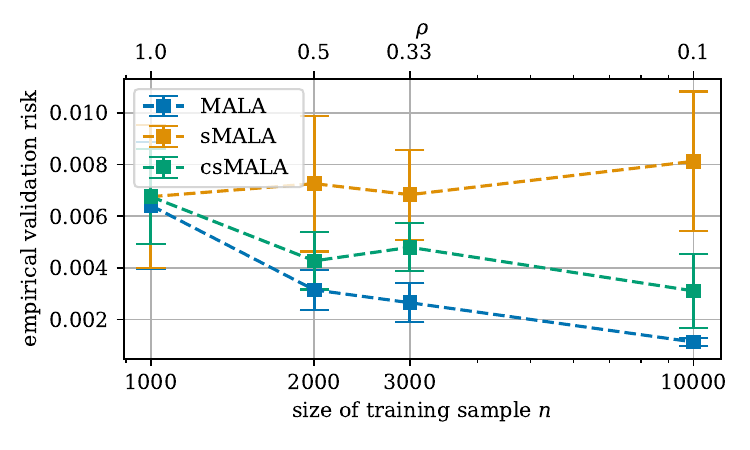}
  \caption{Scaling of the empirical risk of the posterior mean $\bar{f}$ on a $10000$ point validation set with the size of the training sample. We scale $\rho$ to keep the average batch size $n\rho =1000$ constant. Errorbars report the standard deviation of $10$ identical chains.}
  \label{fig:scaling}
\end{minipage}
\hfill
\begin{minipage}{0.45\linewidth}
\vspace{2.2cm}
\begin{tabular}{l|lll}
$\rho$ & MALA               & sMALA              & csMALA             \\ \hline
$0.1$    & $1.42 ± 0.16$ & $13.5 ± 1.4$  & $7.72 ± 0.82$  \\
$0.3$    & $1.10 ± 0.15$ & $3.70 ± 0.51$ & $2.15 ± 0.23$  \\
$0.5$    & $1.28 ± 0.11$ & $2.76 ± 0.19$ & $1.91 ± 0.36$
\end{tabular}
\vspace{.8cm}
\captionof{table}{Average radii $\tau_\alpha \cdot 10^3$ of credible sets for $\alpha = 0.005$ calculated from $10$ Monte Carlo chains. All sets show a coverage probability $\hat{C}(\tau_\alpha)$ of $100\%$.}
\label{tab:radii}
\end{minipage}
\end{figure}

\section{Conclusion}\label{sec:conclusion}
Motivated by MALAs lack of scalability, we considered a stochastic variant, sMALA. It turned out that the corresponding surrogate Gibbs-posterior does not benefit from the full sample size, which is in line with the literature. To remedy this drawback, we introduced a simple correction to sMALA, namely csMALA. Subsequently, we studied its surrogate Gibbs-posterior and verified that it does indeed take advantage of the full sample. Thereby, we refute the conjecture in the literature that a stochastic Metropolis-Hastings step necessarily reduces the effective sample size.

We quantified this phenomenon in terms of the distance to MALAs posterior as well as with oracle inequalities in a nonparametric regression under classical assumptions. Further, we investigated credible sets based on the surrogate Gibbs-posterior which are theoretically valid even without the correction term since MALA is a special case of csMALA. Overall, the quality of the uncertainty quantification depends on the correspondence between the parameters of the network and its output. In a simple linear model, the size of our credible sets depends on the eigenvalues of the design matrix. 
With a focus on Bayesian neural networks, we derived credible sets for shallow neural networks and optimal contraction rates for deep neural networks. 
In our simulation study, we demonstrated that the theoretically desirable properties of the surrogate Gibbs-posterior of csMALA carry over to an estimator directly drawn from csMALA.

\section{Proofs\label{sec:Proofs}}
Before proving our results, we provide an overview of their interplay with respect to the proofs. The proofs of our main results use our PAC-Bayes oracle inequality for the surrogate Gibbs-posterior of csMALA (\ref{thm:oracleinequality}) as a starting point. To prove \ref{thm:oracleinequality}, we first need to ensure compatibility between the corrected risk $\tilde{R}_{n,\rho}$ and the true excess risk, see \ref{prop:compatibility} in \ref{subsec:compatibility}. Together with the Legendre transform of the Kullback-Leibler divergence (\ref{lem:classicallemma}), this leads to a PAC-Bayes bound, see \ref{prop:mainpaclemma} in \ref{subsec:pacbayesbound}. The proof of \ref{thm:oracleinequality} is then completed in \ref{subsec:ProofOracle} by balancing the terms from this PAC-Bayes bound. 
A simplified version of this overall proof strategy leads to the PAC-Bayes oracle inequality for the surrogate Gibbs-posterior of sMALA (\ref{thm:oracleinequalityStochMH}) as we sketch in \ref{subsec:uncorrectedposteriorproof}. 

With \ref{thm:oracleinequality} at hand, we can verify the remaining main results. As previously mentioned, our analogous results to those following \ref{thm:oracleinequality} could also be proved for the original Gibbs-posterior of MALA as well as the surrogate Gibbs-posterior of sMALA, but we focus on the surrogate Gibbs-posterior of csMALA for clarity.
A first application of \ref{thm:oracleinequality} is its extension to the posterior mean, see \ref{cor:mean}. Together with this corollary, we are ready to  prove our first contribution towards uncertainty quantification (\ref{thm:credibility}) in \ref{subsec:proofofcredibility}. We postpone the proof of \ref{cor:mean} as well as the remaining results from \ref{sec:Oracle-inequality} to \ref{sec:remainingProofs}.

In \ref{subsec:proofsNN}, we apply \ref{thm:oracleinequality} and \ref{thm:credibility} to Bayesian neural networks by verifying \ref{assu:F} \labelcref{assu:lipschitz} (\ref{lem:Lipschitz}) and exploiting the approximation properties of ReLU nets. The remaining proofs of auxiliary lemmas used along the way are postponed to \ref{sec:AuxProofs}.

\subsection[Compatibility between the empirical risk and the excess risk]{Compatibility between $\protect\tilde R_{n,\rho}$ and the excess risk}\label{subsec:compatibility}

The first step in our analysis is to verify that the corrected empirical risk
$\tilde{R}_{n,\rho}$ which arises from the stochastic MH step is compatible with the excess risk $\mathcal{E}(\theta)=\E\big[\big(f(\mathbf{X}_{1})-f_{\theta}(\mathbf{X}_{1})\big)^{2}\big]$.
More precisely, we require the following concentration inequality.
A concentration inequality for the empirical risk $R_{n}(\theta)-R_{n}(f)$
follows as the special case where $\rho=1$. 
\begin{prop}
\label{prop:compatibility}Grant \ref{assu:bounded} and \ref{assu:F} \labelcref{assu:bounded_functions}. Define 
\[
\tilde{\mathcal{E}}_{n}(\theta)\coloneqq\tilde R_{n,\rho}(\theta)-\tilde R_{n,\rho}(f),
\]
and set $C_{n,\lambda}\coloneqq\frac{\lambda}{n}\frac{8(\Cf^{2}+\sigma^{2})}{1-w\lambda/n}$,
$w\coloneqq16\Cf(\Ceps\vee2\Cf)$. Then for all $\lambda\in[0,n/w)\cap \big[0, \frac{n\log{2}}{8(\Cf^2+\sigma^2)}\big]$, $\rho\in(0,1]$ and $n\in\N$ we have
\begin{align*}
\E\big[\exp\big(\lambda\big(\tilde{\mathcal{E}}_{n}(\theta)-\mathcal{E}(\theta)\big)\big)\big] & \le\exp\big(\big(C_{n,\lambda}+\tfrac{\lambda}{n}(\sigma \Cf+\sigma^{2})\big)\lambda\mathcal{E}(\theta)\big)\qquad\text{and}\\
\E\big[\exp\big(-\lambda\big(\tilde{\mathcal{E}}_{n}(\theta)-\mathcal{E}(\theta)\big)\big)\big] & \le\exp\big(\big(C_{n,\lambda}+\tfrac{3}{4}+\tfrac{\lambda}{n}(\sigma \Cf+\sigma^{2})\big)\lambda\mathcal{E}(\theta)\big).
\end{align*}
\end{prop}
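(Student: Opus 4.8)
The plan is to exploit the product structure created by the independence of the observations, after rewriting $\tilde R_{n,\rho}$ through the scalar function $g(x)\coloneqq-\tfrac1a\log(\e^{-ax}+1-\rho)$ with $a\coloneqq\lambda/n$, so that $\tilde{\mathcal E}_n(\theta)=\tfrac1n\sum_{i=1}^n\big(g(\ell_i(\theta))-g(\ell_i(f))\big)$ and $\exp(ag(x))=(\e^{-ax}+1-\rho)^{-1}$. First I would record the three elementary properties $0\le g'\le1$ and $g$ concave with $|g''|\le a/4$, all of which follow from $g'(x)=(1+(1-\rho)\e^{ax})^{-1}$. Writing $\psi\coloneqq a\big(g(\ell(\theta))-g(\ell(f))\big)$ for a single generic observation and using independence, both expectations factorize as $\E[\exp(\pm\lambda\tilde{\mathcal E}_n(\theta))]=(\E[\e^{\pm\psi}])^n$. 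With $\log(1+x)\le x$ it then suffices to bound the single-observation quantity $\E[\e^{\pm\psi}]-1=\pm\E[\psi]+\E[\e^{\pm\psi}-1\mp\psi]$, and the whole argument reduces to controlling the mean $\E[\psi]$ and the second-order remainder separately.

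The mean term is the delicate part and is where \ref{assu:bounded}(iv) enters. I would condition on $\mathbf X$, so that $\Delta\coloneqq f_\theta(\mathbf X)-f(\mathbf X)$ is fixed with $|\Delta|\le2C$ by \ref{assu:bounded}(i) and the clipping, and $\ell(\theta)=(\eps-\Delta)^2$, $\ell(f)=\eps^2$. Conditional symmetry of $\eps$ lets me replace $g((\eps-\Delta)^2)$ by the even part $\tfrac12\big(g((\eps-\Delta)^2)+g((\eps+\Delta)^2)\big)$, which kills the odd-in-$\Delta$ contribution that is otherwise uncontrolled. An upper bound then follows from concavity and the $1$-Lipschitz property, $\tfrac12 g((\eps-\Delta)^2)+\tfrac12 g((\eps+\Delta)^2)-g(\eps^2)\le g(\eps^2+\Delta^2)-g(\eps^2)\le\Delta^2$, giving $\E[\psi\mid\mathbf X]\le a\Delta^2$. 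For the matching lower bound I would split off the concavity defect, controlled by $|g''|\le a/4$ and $(\,(\eps-\Delta)^2-(\eps+\Delta)^2)^2=16\eps^2\Delta^2$, against the monotone increment $g(\eps^2+\Delta^2)-g(\eps^2)\ge\Delta^2 g'(\eps^2+\Delta^2)\ge\Delta^2\tfrac{\e^{-a(\eps^2+\Delta^2)}}{2-\rho}$; together with $\E[\e^{-a\eps^2}\mid\mathbf X]\ge1-a\sigma^2$ this yields $\E[\psi\mid\mathbf X]\ge\tfrac{a}{2-\rho}\Delta^2-\mathcal O(a^2)\Delta^2$. Integrating over $\mathbf X$ and using $\E[\Delta^2]=\mathcal E(\theta)$ produces the two-sided estimate $\tfrac{a}{2-\rho}\mathcal E(\theta)-\mathcal O(a^2)\mathcal E(\theta)\le\E[\psi]\le a\mathcal E(\theta)$; the factor $(2-\rho)^{-1}$ is exactly the effective temperature reduction already visible in \ref{lem:KLtilde}.

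For the remainder I would use $|\psi|\le a|\ell(\theta)-\ell(f)|=a|\Delta^2-2\eps\Delta|$ together with $\e^{x}-1-x\le\tfrac{x^2}2\e^{|x|}$ to get $\E[\e^{\pm\psi}-1\mp\psi\mid\mathbf X]\le\tfrac12\E[\psi^2\e^{|\psi|}\mid\mathbf X]$, after which the sub-exponential moment bound \ref{assu:bounded}(iii) controls the expectations $\E[|\eps|^k\e^{2a|\Delta||\eps|}\mid\mathbf X]$; these converge precisely because $2a|\Delta|\Gamma\le4aC\Gamma<1$ by the assumption $a<1/w$ with $w=16C(\Gamma\vee2C)$, and the dominant term $4\Delta^2\eps^2$ reproduces the factor $\tfrac{8(C^2+\sigma^2)}{1-wa}$ in $C_{n,\lambda}$ while the cross term in $|\Delta|^3|\eps|$ accounts for the $\tfrac{\lambda}n(\sigma C+\sigma^2)$ contribution. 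The constraint $\lambda\le\tfrac{n\log2}{8(C^2+\sigma^2)}$ serves only to keep the auxiliary factors $\e^{a\Delta^2}$ and the $\e^{a\eps^2}$-type terms bounded by absolute constants.

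Finally I would assemble the pieces. For the $+$ sign, the upper mean bound cancels the subtracted $\mathcal E(\theta)$: $n\log\E[\e^{\psi}]-\lambda\mathcal E(\theta)\le n\E[\psi]-\lambda\mathcal E(\theta)+\tfrac n2\E[\psi^2\e^{|\psi|}]\le\big(C_{n,\lambda}+\tfrac\lambda n(\sigma C+\sigma^2)\big)\lambda\mathcal E(\theta)$. For the $-$ sign the cancellation is only partial: the lower mean bound leaves the residual $\lambda\mathcal E(\theta)\big(1-\tfrac1{2-\rho}\big)=\tfrac{1-\rho}{2-\rho}\lambda\mathcal E(\theta)\le\tfrac12\lambda\mathcal E(\theta)$, which is comfortably absorbed by the $\tfrac34$ in the claimed bound, the extra slack covering the $\mathcal O(a)$ corrections. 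The main obstacle throughout is the two-sided control of $\E[\psi]$: without the symmetrization step the odd term $-2a\eps\Delta$ propagated through the nonlinear $g$ cannot be bounded by $\mathcal O(\Delta^2)$, which is precisely why for $\rho=1$, where $g$ is the identity and $\E[\psi\mid\mathbf X]=a\Delta^2$ holds by $\E[\eps\mid\mathbf X]=0$ alone, the conditional symmetry assumption can be dropped.
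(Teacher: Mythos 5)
Your proposal is correct in substance but takes a genuinely different route from the paper's. The paper applies the mean value theorem to $\psi_{\rho}(x)=-\log(\e^{-x}+1-\rho)$ and bounds $\E[\tilde{\mathcal{E}}_{n}(\theta)]$ from both sides by splitting into a quadratic term and a cross term: the lower bound comes from $\psi_{\rho}'\ge\frac13$ on $\{\vert\eps\vert\le2\sigma\}$ plus Chebyshev (yielding the constant $\frac14$), and symmetry enters only through $\E[\eps_{1}\psi_{\rho}'(\tfrac{\lambda}{n}\eps_{1}^{2})\mid\mathbf{X}_{1}]=0$ combined with a Taylor expansion of $\psi_{\rho}'$; the concentration step is then outsourced to Bernstein's inequality with the moment bounds $\E[Z_{i}^{2}]\le U$ and $\E[(Z_{i})_{+}^{k}]\le k!Uw^{k-2}$. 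You instead factorize the exponential moment by independence, symmetrize $g((\eps-\Delta)^{2})$ fully to its even part, and exploit the midpoint concavity $\tfrac12 g(u)+\tfrac12 g(v)\le g(\tfrac{u+v}{2})$ with $\tfrac{u+v}{2}=\eps^{2}+\Delta^{2}$; this gives the clean upper bound $\E[\psi\mid\mathbf{X}]\le a\Delta^{2}$ and the sharper lower coefficient $\tfrac{1}{2-\rho}(1-\mathcal{O}(a))$ in place of the paper's $\tfrac14$, which both makes the temperature reduction $(2-\rho)^{-1}$ of \ref{lem:KLtilde} visible at the level of the first moment and explains transparently why \ref{assu:bounded}(iv) is dispensable at $\rho=1$ (the paper only remarks this). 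Your residual accounting for the minus sign is also sound: $\tfrac{1-\rho}{2-\rho}+2\tfrac{\lambda}{n}(C^{2}+\sigma^{2})$-type corrections stay below $\tfrac34$ precisely because $\lambda\le\tfrac{n\log2}{8(C^{2}+\sigma^{2})}$. One quantitative caveat: the remainder bound $\e^{x}-1-x\le\tfrac{x^{2}}{2}\e^{\vert x\vert}$ is too crude to reproduce the stated $C_{n,\lambda}$ — combined with the sub-exponential moments it yields a denominator of order $(1-w\lambda/n)^{3}$ rather than $(1-w\lambda/n)$, which genuinely exceeds the claimed constant as $\lambda\uparrow n/w$. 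The fix is immediate and stays within your per-observation framework: expand the remainder as $\E[\e^{\pm\psi}-1\mp\psi]\le\sum_{k\ge2}\tfrac{a^{k}}{k!}\E[\vert\ell(\theta)-\ell(f)\vert^{k}]\le\tfrac{a^{2}U}{1-aw}$ using $\vert\psi\vert\le a\vert\ell(\theta)-\ell(f)\vert$ and the moment bounds $\E[\vert\ell(\theta)-\ell(f)\vert^{k}]\le k!Uw^{k-2}$ — this is exactly the Bernstein computation the paper imports from Massart, and it recovers $C_{n,\lambda}$ verbatim. In short: the paper's proof is shorter by citation and delivers the exact constants; yours is more self-contained, sharper in the mean analysis, and needs only this one-line repair in the remainder step.
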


\begin{proof}
Define $\psi_{\rho}(x)\coloneqq-\log\big(\e^{-x}+1-\rho\big)$ such that
\[
\tilde{\mathcal{E}}_{n}(\theta)=\frac{1}{\lambda}\sum_{i=1}^{n}\big(\psi_{\rho}\big(\tfrac{\lambda}{n}\ell_{i}(\theta)\big)-\psi_{\rho}\big(\tfrac{\lambda}{n}\ell_{i}(f)\big)\big).
\]
We have
\[
\tilde{\mathcal{E}}_{n}(\theta)=\frac{1}{n}\sum_{i=1}^{n}\big(\ell_{i}(\theta)-\ell_{i}(f)\big)\psi_{\rho}'\big(\xi_{i}\tfrac{\lambda}{n}\ell_{i}(\theta)+(1-\xi_{i})\tfrac{\lambda}{n}\ell_{i}(f)\big)
\label{eq:excess_tilde}\]
with some random variables $\xi_{i}\in[0,1]$. Using $\ell_{1}(\theta)-\ell_{1}(f)=\big(f(\mathbf{X}_{1})-f_{\theta}(\mathbf{X}_{1})\big)^{2}+2\eps_{1}\big(f(\mathbf{X}_{1})-f_{\theta}(\mathbf{X}_{1})\big)$,
we can decompose the expectation of \ref{eq:excess_tilde}:
\begin{align*}
\E\big[\tilde{\mathcal{E}}_{n}(\theta)\big]&=  \E\big[\big(f(\mathbf{X}_{1})-f_{\theta}(\mathbf{X}_{1})\big)^{2}\psi_{\rho}'\big(\xi_{1}\tfrac{\lambda}{n}\ell_{1}(\theta)+(1-\xi_{1})\tfrac{\lambda}{n}\ell_{1}(f)\big)\big]\\
 & \qquad\qquad +2\E\big[\eps_{1}\big(f(\mathbf{X}_{1})-f_{\theta}(\mathbf{X}_{1})\big)\psi_{\rho}'\big(\xi_{1}\tfrac{\lambda}{n}\ell_{1}(\theta)+(1-\xi_{1})\tfrac{\lambda}{n}\ell_{1}(f)\big)\big]\\
& \eqqcolon  E_{1}+E_{2}.
\end{align*}
We treat both terms separately. We have 
\begin{align*}
1\ge\psi_{\rho}'(x) & =(1+(1-\rho)\e^{x})^{-1}\\
 & \ge\frac{1}{1+2(1-\rho)}\ge\frac{1}{3}\qquad\text{for }x\in[0,\log2]
\end{align*}
and $\psi_{\rho}'(x)\in(0,1]$ for all $x\ge0$. In particular,
we observe
\[
E_{1}\le\E\big[\big(f_{\theta}(\mathbf{X}_{1})-f(\mathbf{X}_{1})\big)^{2}\big]=\mathcal{E}(\theta).
\]
 If $\vert \eps_{1}\vert\le2\sigma$, we have $\frac{\lambda}{n}\ell_{1}(\cdot)\le\frac{\lambda}{n}8(\Cf^{2}+\sigma^{2})\le\log2$
for $\frac{\lambda}{n}\le\frac{\log2}{8(\Cf^{2}+\sigma^{2})}$. Hence,
\begin{align*}
E_{1} & \ge\E\big[\big(f(\mathbf{X}_{1})-f_{\theta}(\mathbf{X}_{1})\big)^{2}\psi_{\rho}'\big(\xi_{1}\tfrac{\lambda}{n}\ell_{1}(\theta)+(1-\xi_{1})\tfrac{\lambda}{n}\ell_{1}(f)\big)\1_{\{\vert\eps_{1}\vert\le2\sigma\}}\big]\\
 & \ge\frac{1}{3}\E\big[\big(f(\mathbf{X}_{1})-f_{\theta}(\mathbf{X}_{1})\big)^{2}\P(\vert\eps_{1}\vert\le2\sigma\mid \mathbf{X}_{1})\big]\\
 & =\frac{1}{3}\E\big[\big(f(\mathbf{X}_{1})-f_{\theta}(\mathbf{X}_{1})\big)^{2}\big(1-\P(\vert\eps_{1}\vert>2\sigma\mid\mathbf{X}_{1})\big)\big]\\
 & \ge\frac{1}{4}\E\big[\big(f(\mathbf{X}_{1})-f_{\theta}(\mathbf{X}_{1})\big)^{2}\big]
\end{align*}
where we used Chebyshev's inequality in the last estimate. Hence,
$\frac{1}{4}\mathcal{E}(\theta)\le E_{1}\le\mathcal{E}(\theta).$
For $E_{2}$ we use $\E[\eps_{1}\psi'_{\rho}(\frac{\lambda}{n}\eps_{1}^{2})\mid\mathbf{X}_{1}]=0$
by symmetry together with $\ell_{1}(f)=\eps_{1}^{2}$ to obtain for
some random $\xi_{1}'\in[0,1]$
\begin{align*}
E_{2} & =2\E\big[\eps_{1}\big((f(\mathbf{X}_{1})-f_{\theta}(\mathbf{X}_{1})\big)\big(\psi_{\rho}'\big(\tfrac{\lambda}{n}\ell_{1}(f)+\xi_{1}\tfrac{\lambda}{n}\big(\ell_{1}(\theta)-\ell_{1}(f)\big)\big)-\psi_{\rho}'\big(\tfrac{\lambda}{n}\ell_{1}(f)\big)\big)\big]\\
 & =\frac{2\lambda}{n}\E\big[\eps_{1}\big(f(\mathbf{X}_{1})-f_{\theta}(\mathbf{X}_{1})\big)\xi_{1}\big(\ell_{1}(\theta)-\ell_{1}(f)\big)\psi_{\rho}''\big(\xi_{1}'\tfrac{\lambda}{n}\ell_{1}(\theta)+(1-\xi_{1}')\tfrac{\lambda}{n}\ell_{1}(f)\big)\big]\\
 & =\frac{\lambda}{n}\E\big[2\xi_{1}\big(\eps_{1}\big(f(\mathbf{X}_{1})-f_{\theta}(\mathbf{X}_{1})\big)^{3}+2\eps_{1}^{2}\big(f(\mathbf{X}_{1})-f_{\theta}(\mathbf{X}_{1})\big)^{2}\big)\psi_{\rho}''\big(\xi_{1}'\tfrac{\lambda}{n}\ell_{1}(\theta)+(1-\xi_{1}')\tfrac{\lambda}{n}\ell_{1}(f)\big)\big].
\end{align*}
Since $\max_{y\ge0}\frac{y}{(1+y)^{2}}=\frac{1}{4}$,
we have 
\[
\vert\psi_{\rho}''(x)\vert=\frac{(1-\rho)\e^{x}}{(1+(1-\rho)\e^{x})^{2}}\le\frac{1}{4}\qquad\text{for }x\ge0.
\]
Therefore, 
\begin{align*}
\vert E_{2}\vert & \le\frac{\lambda}{n}\big(\tfrac{1}{2}\E\big[\vert\eps_{1}\vert\vert f_{\theta}(\mathbf{X}_{1})-f(\mathbf{X}_{1})\vert^{3}+2\eps_{1}^{2}\big(f(\mathbf{X}_{1})-f_{\theta}(\mathbf{X}_{1})\big)^{2}\big]\big)\\
 & \le\frac{\lambda}{n}\big(\sigma \Cf+\sigma^{2}\big)\mathcal{E}(\theta).
\end{align*}
In combination with the bounds for $E_{1}$ we obtain
\[
\big(\tfrac{1}{4}-\tfrac{\lambda}{n}(\sigma \Cf+\sigma^{2})\big)\mathcal{E}(\theta)\le\E\big[\tilde{\mathcal{E}}_{n}(\theta)\big]\le\big(1+\tfrac{\lambda}{n}(\sigma \Cf+\sigma^{2})\big)\mathcal{E}(\theta).
\]
Define $Z_{i}(\theta)\coloneqq\frac{n}{\lambda}\big(\psi_{\rho}\big(\frac{\lambda}{n}\ell_{i}(\theta)\big)-\psi_{\rho}\big(\frac{\lambda}{n}\ell_{i}(f)\big)\big)$
such that $\tilde{\mathcal{E}}_{n}(\theta)=\frac{1}{n}\sum_{i=1}^{n}Z_{i}(\theta)$.
The previous bounds for $\E[\tilde{\mathcal{E}}_{n}(\theta)]$ yield
\begin{align*}
\E\big[\exp\big(\lambda\tilde{\mathcal{E}}_{n}(\theta)-\lambda\mathcal{E}(\theta)\big)\big] & =\E\big[\e^{\frac{\lambda}{n}\sum_{i=1}^n(Z_{i}(\theta)-\E[Z_{i}(\theta)])}\big]\e^{\lambda(\E[\tilde{\mathcal{E}}_{n}(\theta)]-\mathcal{E}(\theta))}\\
 & \le\E\big[\e^{\frac{\lambda}{n}\sum_{i=1}^n(Z_{i}(\theta)-\E[Z_{i}(\theta)])}\big]\e^{\frac{\lambda^{2}}{n}(\sigma \Cf+\sigma^{2})\mathcal{E}(\theta)}
\end{align*}
and
\begin{align*}
\E\big[\exp\big(-\lambda\tilde{\mathcal{E}}_{n}(\theta)+\lambda\mathcal{E}(\theta)\big)\big] & =\E\big[\e^{\frac{\lambda}{n}\sum_{i=1}^n(-Z_{i}(\theta)-\E[-Z_{i}(\theta)])}\big]\e^{\lambda(\mathcal{E}(\theta)-\E[\tilde{\mathcal{E}}_{n}(\theta)])}\\
 & \le\E\big[\e^{\frac{\lambda}{n}\sum_{i=1}^n(-Z_{i}(\theta)-\E[-Z_{i}(\theta)])}\big]\e^{(\frac{3\lambda}{4}+\frac{\lambda^{2}}{n}(\sigma \Cf+\sigma^{2}))\mathcal{E}(\theta)}.
\end{align*}
To bound the centered exponential moments, we use Bernstein's inequality. The required bounds for the regular moments are obtained as in \cite{alquier2013,Guedj2013}:
The second moments are bounded by
\begin{align*}
\E[Z_{i}^{2}]&= \E\big[\big(\tfrac{n}{\lambda}\big(\psi_{\rho}\big(\tfrac{\lambda}{n}\ell_{1}(\theta)\big)-\psi_{\rho}\big(\tfrac{\lambda}{n}\ell_{1}(f)\big)\big)\big)^{2}\big]\\
&=  \E\big[\big(\big(\ell_{1}(\theta)-\ell_{1}(f)\big)\psi_{\rho}'\big(\xi_{1}\tfrac{\lambda}{n}\ell_{1}(\theta)+(1-\xi_{1})\tfrac{\lambda}{n}\ell_{1}(f)\big)\big)^{2}\big]\\
&=  \E\big[\big((f_{\theta}(\mathbf{X}_{1})-f(\mathbf{X}_{1}))^{2}+2\eps_{1}(f_{\theta}(\mathbf{X}_{1})-f(\mathbf{X}_{1}))\big)^{2}(\psi_{\rho}')^{2}\big(\xi_{1}\tfrac{\lambda}{n}\ell_{1}(\theta)+(1-\xi_{1})\tfrac{\lambda}{n}\ell_{1}(f)\big)\big]\\
&\le  2\E\big[\big(f_{\theta}(\mathbf{X}_{1})-f(\mathbf{X}_{1})\big)^{4}+4\eps_{1}^{2}\big(f_{\theta}(\mathbf{X}_{1})-f(\mathbf{X}_{1})\big)^{2}\big]\\
&\le  8\big(\Cf^{2}+\sigma^{2}\big)\mathcal{E}(\theta)\eqqcolon U.
\end{align*}
Moreover, we have for $k\ge3$
\begin{align*}
\E\big[(Z_{i})_{+}^{k}\big] & \le\E\big[\big\vert\ell_{1}(\theta)-\ell_{1}(f)\big\vert^{k}\big\vert\psi_{\rho}'\big(\xi_{1}\tfrac{\lambda}{n}\ell_{1}(\theta)+(1-\xi_{1})\tfrac{\lambda}{n}\ell_{1}(f)\big)\big\vert^{k}\big]\\
 & \le\E\big[\big\vert\ell_{1}(\theta)-\ell_{1}(f)\big\vert^{k}\big]\\
 & =\E[\vert f(\mathbf{X}_{1})-f_{\theta}(\mathbf{X}_{1})+2\eps_{1}\vert^{k}\vert f(\mathbf{X}_{1})-f_{\theta}(\mathbf{X}_{1})\vert^{k-2}(f(\mathbf{X}_{1})-f_{\theta}(\mathbf{X}_{1}))^{2}]\\
 & \le(2\Cf)^{k-2}\E[\vert f(\mathbf{X}_{1})-f_{\theta}(\mathbf{X}_{1})+2\eps_{1}\vert^{k}(f(\mathbf{X}_1)-f_{\theta}(\mathbf{X}_1))^{2}]\\
 & \le(2\Cf)^{k-2}2^{k-1}((2\Cf)^{k}+k!2^{k-1}\sigma^{2}\Ceps^{k-2})\mathcal{E}(\theta)\\
 & \le(2\Cf)^{k-2}k!8^{k-2}\big((2\Cf)^{k-2}\vee\Ceps^{k-2}\big)U\\
 & =k!Uw^{k-2}.
\end{align*}
Hence, Bernstein's inequality \citep[inequality (2.21)]{Massart2007}
yields 
\[
\E\big[\e^{\frac{\lambda}{n}\sum_{i=1}^n(Z_{i}(\theta)-\E[Z_{i}(\theta)])}\big]\le\exp\Big(\frac{U\lambda^{2}}{n(1-w\lambda/n)}\Big)=\exp\big(C_{n,\lambda}\lambda\mathcal{E}(\theta)\big)
\]
for $C_{n,\lambda}$ as defined in \ref{prop:compatibility}. The
same bound remains true if we replace $Z_{i}$ by $-Z_{i}$. We conclude 
\[
\E\big[\exp\big(\lambda\tilde{\mathcal{E}}_{n}(\theta)-\lambda\mathcal{E}(\theta)\big)\big]\le\exp\big(\big(C_{n,\lambda}+\tfrac{\lambda}{n}(\sigma \Cf+\sigma^{2})\big)\lambda\mathcal{E}(\theta)\big)
\]
and
\[
\E\big[\exp\big(-\lambda\tilde{\mathcal{E}}_{n}(\theta)+\lambda\mathcal{E}(\theta)\big)\big]\le\exp\big(\big(C_{n,\lambda}+\tfrac{3}{4}+\tfrac{\lambda}{n}(\sigma \Cf+\sigma^{2})\big)\lambda\mathcal{E}(\theta)\big).\tag*{{\qedhere}}
\]
\end{proof}
\begin{rem}
\label{rem:stochMH}Replacing $\psi_{\rho}$ by $\bar{\psi}_{\rho}(x)\coloneqq-\log\big(\rho \e^{-x/\rho}+1-\rho\big)$,
$x\ge0$, and using 
\begin{align*}
1\ge\bar{\psi}_{\rho}'(x) & =(\rho+(1-\rho)\e^{x/\rho})^{-1}\\
 & \ge\frac{1}{\rho+3(1-\rho)}\ge\frac{1}{3}\qquad\text{for }x\in[0,\rho\log 3],
\end{align*}
we can analogously prove under \ref{assu:bounded} that $\bar{\mathcal{E}}_{n}(\theta)\coloneqq\bar{R}_{n,\rho}(\theta)-\bar{R}_{n,\rho}(f)$
with $\bar{R}_{n,\rho}$ from \ref{eq:rBar} satisfies for all $\lambda\in[0,n/w)\cap\big[0,\frac{n\log 3}{8(\Cf^2+\sigma^2)}\big],\rho\in(0,1]$
and $n\in\N$:
\begin{align*}
\E\big[\exp\big(\lambda\big(\bar{\mathcal{E}}_{n}(\theta)-\mathcal{E}(\theta)\big)\big)\big] & \le\exp\big(\big(C_{n,\lambda}+\tfrac{\lambda}{n\rho}4(\sigma \Cf+\sigma^{2})\big)\lambda\mathcal{E}(\theta)\big)\qquad\text{and}\\
\E\big[\exp\big(-\lambda\big(\bar{\mathcal{E}}_{n}(\theta)-\mathcal{E}(\theta)\big)\big)\big] & \le\exp\big(\big(C_{n,\lambda}+\tfrac{1}{4}+\tfrac{\lambda}{n\rho}4(\sigma \Cf+\sigma^{2})\big)\lambda\mathcal{E}(\theta)\big).
\end{align*}
\end{rem}

\subsection{A PAC-Bayes bound}\label{subsec:pacbayesbound}

Let $\mu,\nu$ be probability measures on a measurable space $(E,\mathscr{A})$.
The \emph{Kullback-Leibler divergence} of $\mu$ with respect to $\nu$
is defined via 
\begin{equation}
\KL(\mu\mid\nu)\coloneqq\begin{cases}
\int\log\big(\diff{\mu}{\nu}\big)\,\d\mu, & \text{if }\mu\ll\nu\\
\infty, & \text{otherwise}
\end{cases}.\label{eq:KL}
\end{equation}
The aforementioned Legendre transform of the Kullback-Leibler divergence is a key ingredient for PAC-Bayes bounds,
cf. \cite{csiszar1975}, \cite{donsker1976}, \cite{catoni2004,catoni2007}. We include
the short proof for the sake of completeness. 
\begin{lem}
\label{lem:classicallemma}Let $h\colon E\to\R$ be a measurable function
such that $\int\exp\circ h\,\d\mu<\infty$. With the convention $\infty-\infty=-\infty$
it then holds that 
\begin{equation}
\log\Big(\int\exp\circ h\,\d\mu\Big)=\sup_{\nu}\Big(\int h\,\d\nu-\KL(\nu\mid\mu)\Big),\label{eq:gibbsequality}
\end{equation}
where the supremum is taken over all probability
measures $\nu$ on $(E,\mathscr{A})$. If additionally, $h$ is bounded
from above on the support of $\mu$, then the supremum in \ref{eq:gibbsequality}
is attained for $\nu=g$ with the Gibbs distribution $g$, \emph{i.e.},  $\diff g{\mu}:\propto\exp\circ h$.
\end{lem}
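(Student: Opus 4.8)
The plan is to reduce the identity to the nonnegativity of a Kullback--Leibler divergence via a change-of-measure computation, after first disposing of the degenerate cases with the stated convention. Write $Z\coloneqq\int\exp\circ h\,\d\mu$, which is finite by hypothesis and strictly positive because $\exp\circ h>0$ everywhere and $\mu$ is a probability measure; hence $\log Z$ is a well-defined real number. Define the Gibbs measure $g$ by $\diff{g}{\mu}=Z^{-1}\exp\circ h$, a probability measure equivalent to $\mu$ since its density is strictly positive. The claim then splits into an inequality valid for every competitor $\nu$ and an attainment statement for $\nu=g$.

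First I would establish the upper bound $\int h\,\d\nu-\KL(\nu\mid\mu)\le\log Z$ for every probability measure $\nu$. If $\KL(\nu\mid\mu)=\infty$, then by the convention $\infty-\infty=-\infty$ the left-hand side is set to $-\infty\le\log Z$ irrespective of the value of $\int h\,\d\nu$. It remains to treat $\nu\ll\mu$ with $\KL(\nu\mid\mu)<\infty$. Writing $f\coloneqq\diff{\nu}{\mu}$ and using that $f\log f\ge-1/\e$ pointwise, finiteness of $\KL(\nu\mid\mu)=\int f\log f\,\d\mu$ forces $\log f$ to be $\nu$-integrable, so $\KL(\nu\mid\mu)=\int\log f\,\d\nu$ is finite and the combination $\int h\,\d\nu-\int\log f\,\d\nu=\int(h-\log f)\,\d\nu$ is unambiguous. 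Jensen's inequality applied to the concave function $\log$ and the nonnegative $\nu$-random variable $\exp\circ h/f$ then gives
\[
\int h\,\d\nu-\KL(\nu\mid\mu)=\int\log\frac{\exp\circ h}{f}\,\d\nu\le\log\int\frac{\exp\circ h}{f}\,\d\nu=\log\int_{\{f>0\}}\exp\circ h\,\d\mu\le\log Z,
\]
which is the desired bound.

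For attainment at $\nu=g$ I would use the boundedness-above hypothesis crucially. On $\supp\mu$ we have $h\le M$ for some $M$, and since the map $x\mapsto x\e^{x}$ is bounded on $(-\infty,M]$ (bounded below by $-1/\e$ and, having limit $0$ at $-\infty$, bounded above), the integrand in $\int h\,\d g=Z^{-1}\int h\,(\exp\circ h)\,\d\mu$ is bounded, so $\int h\,\d g$ is finite. Consequently $\KL(g\mid\mu)=\int\log\diff{g}{\mu}\,\d g=\int(h-\log Z)\,\d g=\int h\,\d g-\log Z$ is finite as well, and rearranging yields $\int h\,\d g-\KL(g\mid\mu)=\log Z$. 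Combined with the upper bound, this shows the supremum equals $\log Z$ and is attained at $g$.

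The main obstacle is purely the measure-theoretic bookkeeping: ensuring every manipulation of $\int h\,\d\nu$ and of the divergence is legitimate despite possible infinities, and pinpointing where the boundedness-above assumption enters. The only place it is genuinely used is the attainment claim, to guarantee that $\int h\,\d g$ (and hence $\KL(g\mid\mu)$) is finite so that the change-of-measure identity does not collapse into an $\infty-\infty$; the inequality half needs no such hypothesis, since Jensen's inequality holds as an inequality of extended reals.
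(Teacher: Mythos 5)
Your proof is correct and takes essentially the same route as the paper: both rest on the change of measure to the Gibbs distribution $g$, the paper phrasing the key step as the nonnegativity $0\le\KL(\nu\mid g)$ while you unpack that same fact via Jensen's inequality applied to $\log(\exp\circ h/f)$. If anything, your version is more careful than the paper's one-line computation, since you dispose of the degenerate cases under the $\infty-\infty$ convention explicitly and correctly isolate that the boundedness-above hypothesis is needed only to make $\int h\,\d g$, and hence $\KL(g\mid\mu)$, finite in the attainment step.
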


\begin{proof}
For $D\coloneqq\int \e^{h}\,\d\mu$, we have $\d g=D^{-1}\e^{h}\d\mu$ and obtain
for all $\nu\ll\mu$:
\begin{align*}
0\le\KL(\nu\mid g)=\int\log\frac{\d\nu}{\d g}\,\d\nu & =\int\log\frac{\d\nu}{\e^{h}\d\mu/D}\,\d\nu\\
 & =\KL(\nu\mid\mu)-\int h\,\d\nu+\log\Big(\int \e^{h}\,\d\mu\Big).\tag*{{\qedhere}}
\end{align*}
\end{proof}
Note that no generality is lost by considering only those probability
measures $\nu$ on $(E,\mathscr{A})$ such that $\nu\ll\mu$ and thus
\[
\log\Big(\int\exp\circ h\,\d\mu\Big)=-\inf_{\nu\ll\mu}\Big(\KL(\nu\mid\mu)-\int h\,\d\nu\Big).
\]
In combination with \ref{prop:compatibility} we can verify a PAC-Bayes
bound for the excess risk. The basic proof strategy is in line with
the PAC-Bayes literature, see e.g.\ \citet{alquier2013}.
\begin{prop}[PAC-Bayes bound]
\label{prop:mainpaclemma} Grant \ref{assu:bounded} and  \ref{assu:F} \labelcref{assu:bounded_functions}. For any sample-dependent
(in a measurable way) probability measure $\varrho\ll\Pi$ and any $\lambda\in(0,n/w)$
and $\rho\in(0,1]$ such that $C_{n,\lambda}+\frac{\lambda}{n}(\sigma \Cf+\sigma^{2})\le\frac{1}{8}$,
we have
\begin{equation}
\mathcal{E}(\tilde{\theta}_{\lambda,\rho})\le9\int\mathcal{E}\,\d\varrho+\frac{16}{\lambda}\big(\KL(\varrho\mid\Pi)+\log(2/\delta)\big)\label{eq:mainpacbound}
\end{equation}
with probability
 at least $1-\delta$. 
\end{prop}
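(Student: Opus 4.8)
The plan is to run the standard PAC-Bayes machinery, feeding in the two exponential-moment bounds of \ref{prop:compatibility} and the variational identity of \ref{lem:classicallemma}. First I would record that the surrogate posterior is the Gibbs measure $\hat\varrho:=\tilde\Pi_{\lambda,\rho}(\cdot\mid\mathcal D_n)$ with $\frac{\d\hat\varrho}{\d\Pi}(\theta)\propto\exp(-\lambda\tilde{\mathcal E}_n(\theta))$, since the additive constant $\tilde R_{n,\rho}(f)$ drops out of the normalization. Abbreviating $c_0:=C_{n,\lambda}+\tfrac\lambda n(\sigma C+\sigma^2)$ and $c_1:=c_0+\tfrac34$, the standing hypothesis $c_0\le\tfrac18$ yields $1-c_1\ge\tfrac18$ and $1+c_0\le\tfrac98$, which are exactly the numbers that will produce the constants $9$ and $16$.

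Next I would turn the pointwise bounds of \ref{prop:compatibility} into two global high-probability events. Integrating the second (lower) bound against the prior and using Fubini gives $\E\big[\int\exp(\lambda(1-c_1)\mathcal E(\theta)-\lambda\tilde{\mathcal E}_n(\theta))\,\Pi(\d\theta)\big]\le1$, so by Markov's inequality, with probability at least $1-\delta/2$ the inner integral $A$ is at most $2/\delta$. Symmetrically, the first (upper) bound produces an event of probability at least $1-\delta/2$ on which $B:=\int\exp(\lambda\tilde{\mathcal E}_n(\theta)-\lambda(1+c_0)\mathcal E(\theta))\,\Pi(\d\theta)\le2/\delta$. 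On the intersection of these two events, which has probability at least $1-\delta$, the argument becomes deterministic in the empirical quantities.

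On this event I would apply \ref{lem:classicallemma} twice. Applied to $h=\lambda(1-c_1)\mathcal E-\lambda\tilde{\mathcal E}_n$ with the measure $\hat\varrho$, the variational inequality $\int h\,\d\hat\varrho-\KL(\hat\varrho\mid\Pi)\le\log A\le\log(2/\delta)$ gives $\lambda(1-c_1)\int\mathcal E\,\d\hat\varrho-\lambda\int\tilde{\mathcal E}_n\,\d\hat\varrho\le\KL(\hat\varrho\mid\Pi)+\log(2/\delta)$; the Gibbs optimality of $\hat\varrho$ (the attainment clause of \ref{lem:classicallemma}) then replaces $\lambda\int\tilde{\mathcal E}_n\,\d\hat\varrho+\KL(\hat\varrho\mid\Pi)$ by the larger quantity $\lambda\int\tilde{\mathcal E}_n\,\d\varrho+\KL(\varrho\mid\Pi)$. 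Applying the variational inequality a second time to the reference measure $\varrho$ together with $B\le2/\delta$ converts $\lambda\int\tilde{\mathcal E}_n\,\d\varrho$ into $\lambda(1+c_0)\int\mathcal E\,\d\varrho+\KL(\varrho\mid\Pi)+\log(2/\delta)$. Collecting terms gives $\lambda(1-c_1)\int\mathcal E\,\d\hat\varrho\le\lambda(1+c_0)\int\mathcal E\,\d\varrho+2\KL(\varrho\mid\Pi)+2\log(2/\delta)$, and dividing by $\lambda(1-c_1)\ge\lambda/8$ while inserting $1+c_0\le9/8$ yields the asserted inequality with constants $9$ and $16$.

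The one genuinely delicate point is that \ref{eq:mainpacbound} concerns the single random draw $\tilde\theta_{\lambda,\rho}$, whereas the argument above produces the bound for the posterior-\emph{averaged} excess risk $\int\mathcal E\,\d\hat\varrho$. The hard part will be to pass from the average to the draw: I would instead bound the posterior tail $\hat\varrho(\mathcal E>s)$ by writing $\hat\varrho(\mathcal E>s)\le e^{-\lambda(1-c_1)s}A/Z$ from the explicit Gibbs density, lower-bounding the data-dependent normalizing constant $Z$ via $\log Z\ge-\lambda\int\tilde{\mathcal E}_n\,\d\varrho-\KL(\varrho\mid\Pi)$ (again \ref{lem:classicallemma}), and taking the threshold $s$ equal to the right-hand side of \ref{eq:mainpacbound}. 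Because the random ratio $A/Z$ resists a direct expectation bound, this step must be run on the same high-probability event $\{A,B\le 2/\delta\}$ and the failure probability carefully apportioned across the two Markov steps and the posterior tail; this bookkeeping is exactly what fixes the logarithmic term at $\log(2/\delta)$, and it is the same estimate that underlies the contraction-rate reformulation stated just after \ref{thm:oracleinequality}.
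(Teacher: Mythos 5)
Your first two paragraphs reproduce the paper's machinery faithfully: integrating the two exponential-moment bounds of \ref{prop:compatibility} against the prior and applying the variational inequality of \ref{lem:classicallemma} twice does give the \emph{posterior-averaged} bound $\int\mathcal{E}\,\d\widehat{\varrho}\le9\int\mathcal{E}\,\d\varrho+\frac{16}{\lambda}\big(\KL(\varrho\mid\Pi)+\log(2/\delta)\big)$ with exactly the right constants, since $1-c_1=\frac14-c_0\ge\frac18$ and $1+c_0\le\frac98$. The gap is in the final step, which you rightly flag as delicate but then resolve incorrectly. The probability in \ref{eq:mainpacbound} is over the joint law of $(\mathcal{D}_n,\tilde{\theta}_{\lambda,\rho})$, so $\P\big(\mathcal{E}(\tilde{\theta}_{\lambda,\rho})>s\big)=\E\big[\widehat{\varrho}(\mathcal{E}>s)\big]$; even on your good event $\{A\le 2/\delta,\,B\le2/\delta\}$ the posterior tail mass is positive and \emph{adds} to the two Markov failures. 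A three-way split of $\delta$ is therefore forced, and chasing your own estimates through (three log-terms, each inflated by the factor $(1-c_1)^{-1}\le 8$) yields a threshold of the form $9\int\mathcal{E}\,\d\varrho+\frac{16}{\lambda}\KL(\varrho\mid\Pi)+\frac{24}{\lambda}\log(3/\delta)$, which is strictly weaker than the claimed $\frac{16}{\lambda}\log(2/\delta)$ for every $\delta\in(0,1)$. So your assertion that the bookkeeping ``fixes the logarithmic term at $\log(2/\delta)$'' does not hold along this route.

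Moreover, the ratio $A/Z$ does not in fact resist a direct expectation bound: after a change of measure by the posterior density itself one has $\E_{\mathcal{D}_n}\E_{\tilde{\theta}\sim\widehat{\varrho}}\big[\exp\big(\lambda(1-c_1)\mathcal{E}(\tilde{\theta})-\lambda\tilde{\mathcal{E}}_{n}(\tilde{\theta})\big)\tfrac{\d\Pi}{\d\widehat{\varrho}}(\tilde{\theta})\big]=\E[A]\le1$, and this is precisely how the paper proceeds. It subtracts $\log\frac{\d\tilde{\Pi}_{\lambda,\rho}(\cdot\mid\mathcal{D}_{n})}{\d\Pi}(\tilde{\theta})$ inside the exponent and applies a single Chernoff/Markov step to the joint law; by \ref{eq:posteriorDensity} this contributes $\sum_{i}\psi_{\rho}\big(\tfrac{\lambda}{n}\ell_{i}(\tilde{\theta})\big)+\log\tilde D_{\lambda}$, and since $\lambda\tilde{\mathcal{E}}_{n}(\tilde{\theta})=\sum_{i}\psi_{\rho}\big(\tfrac{\lambda}{n}\ell_{i}(\tilde{\theta})\big)-\sum_{i}\psi_{\rho}\big(\tfrac{\lambda}{n}\ell_{i}(f)\big)$, every $\tilde{\theta}$-dependent term outside $\mathcal{E}(\tilde{\theta})$ cancels, leaving the draw-free quantity $-\sum_{i}\psi_{\rho}\big(\tfrac{\lambda}{n}\ell_{i}(f)\big)-\log\tilde D_{\lambda}$, to which \ref{lem:classicallemma} applies as in \ref{eq:constant}. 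The single-draw bound then holds off only two events of mass $\delta/2$ each, which is what produces the stated constants $9$ and $\frac{16}{\lambda}\log(2/\delta)$. The missing idea in your proposal is thus not a tail event for $\widehat{\varrho}$ but the cancellation achieved by folding the posterior density ratio into the exponential-moment bound.
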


\begin{proof}
\ref{prop:compatibility} yields
\begin{align*}
\E\big[\exp\big(\lambda\tilde{\mathcal{E}}_{n}(\theta)-\big(1+C_{n,\lambda}+\tfrac{\lambda}{n}(\sigma \Cf+\sigma^{2})\big)\lambda\mathcal{E}(\theta)-\log\delta^{-1}\big)\big] & \le\delta\qquad\text{and}\\
\E\big[\exp\big(\lambda\big(\tfrac{1}{4}-C_{n,\lambda}-\tfrac{\lambda}{n}(\sigma \Cf+\sigma^{2})\big)\mathcal{E}(\theta)-\lambda\tilde{\mathcal{E}}_{n}(\theta)-\log\delta^{-1}\big)\big] & \le\delta.
\end{align*}
Integrating in $\theta$ with respect to the prior probability measure
$\Pi$ and applying Fubini's theorem, we conclude
\begin{align}
\E\Big[\int\exp\big(\lambda\tilde{\mathcal{E}}_{n}(\theta)-\big(1+C_{n,\lambda}+\tfrac{\lambda}{n}(\sigma \Cf+\sigma^{2})\big)\lambda\mathcal{E}(\theta)-\log\delta^{-1}\big)\,\d\Pi(\theta)\Big] & \le\delta\qquad\text{and}\label{eq:BernsteinConsequence1} \\
\E\Big[\int\exp\big(\lambda\big(\tfrac{1}{4}-C_{n,\lambda}-\tfrac{\lambda}{n}(\sigma \Cf+\sigma^{2})\big)\mathcal{E}(\theta)-\lambda\tilde{\mathcal{E}}_{n}(\theta)-\log\delta^{-1}\big)\,\d\Pi(\theta)\Big] & \le\delta.\label{eq:BersteinConsequence}
\end{align}
The Radon-Nikodym density of the posterior distribution $\tilde{\Pi}_{\lambda,\rho}(\cdot\mid\mathcal{D}_{n})\ll\Pi$
with respect to $\Pi$ is given by 
\begin{equation}
\frac{\d\tilde{\Pi}_{\lambda,\rho}(\theta\mid\mathcal{D}_{n})}{\d\Pi}=\tilde D_{\lambda}^{-1}\exp\Big(-\sum_{i=1}^{n}\psi_{\rho}\big(\tfrac{\lambda}{n}\ell_{i}(\theta)\big)\Big)\label{eq:posteriorDensity}
\end{equation}
with 
\[
\tilde D_{\lambda}\coloneqq\int \e^{-\lambda\tilde R_{n,\rho}(\theta)}\,\Pi(\d\theta)=\int\exp\Big(-\sum_{i=1}^{n}\psi_{\rho}\big(\tfrac{\lambda}{n}\ell_{i}(\theta)\big)\Big)\,\Pi(\d\theta).\label{eq:D_tilde}
\]
We obtain
\begin{align*}
\delta & \ge\E_{\mathcal{D}_{n}}\Big[\int\exp\big(\lambda\big(\tfrac{1}{4}-C_{n,\lambda}-\tfrac{\lambda}{n}(\sigma \Cf+\sigma^{2})\big)\mathcal{E}(\theta)-\lambda\tilde{\mathcal{E}}_{n}(\theta)-\log\delta^{-1}\big)\,\d\Pi(\theta)\Big]\\
 & =\E_{\mathcal{D}_{n},\tilde{\theta}\sim\tilde{\Pi}_{\lambda,\rho}(\cdot\mid\mathcal{D}_{n})}\Big[\exp\Big(\lambda\big(\tfrac{1}{4}-C_{n,\lambda}-\tfrac{\lambda}{n}(\sigma \Cf+\sigma^{2})\big)\mathcal{E}(\tilde{\theta})-\lambda\tilde{\mathcal{E}}_{n}(\tilde{\theta})\\
 & \qquad\qquad\qquad\qquad\qquad\qquad\qquad\qquad-\log\delta^{-1}-\log\Big(\frac{\d\tilde{\Pi}_{\lambda,\rho}(\tilde{\theta}\mid\mathcal{D}_{n})}{\d\Pi}\Big)\Big)\Big]\\
 & =\E_{\mathcal{D}_{n},\tilde{\theta}\sim\tilde{\Pi}_{\lambda,\rho}(\cdot \mid\mathcal{D}_{n})}\Big[\exp\Big(\lambda\big(\tfrac{1}{4}-C_{n,\lambda}-\tfrac{\lambda}{n}(\sigma \Cf+\sigma^{2})\big)\mathcal{E}(\tilde{\theta})-\lambda\tilde{\mathcal{E}}_{n}(\tilde{\theta})\\
 & \qquad\qquad\qquad\qquad\qquad\qquad\qquad\qquad-\log\delta^{-1}+\sum_{i=1}^{n}\psi_{\rho}\big(\tfrac{\lambda}{n}\ell_{i}(\tilde{\theta})\big)+\log\tilde D_{\lambda}\Big)\Big].
\end{align*}
Since $\1_{[0,\infty)}(x)\le \e^{\lambda x}$ for all $x\in\R$, we
deduce with probability not larger than $\delta$ that
\[
\big(\tfrac{1}{4}-C_{n,\lambda}-\tfrac{\lambda}{n}(\sigma \Cf+\sigma^{2})\big)\mathcal{E}(\tilde{\theta})-\tilde{\mathcal{E}}_{n}(\tilde{\theta})+\frac{1}{\lambda}\sum_{i=1}^{n}\psi_{\rho}\big(\tfrac{\lambda}{n}\ell_{i}(\tilde{\theta})\big)-\frac{1}{\lambda}\big(\log\delta^{-1}-\log\tilde D_{\lambda}\big)\ge0.
\]
Provided $C_{n,\lambda}+\frac{\lambda}{n}(\sigma \Cf+\sigma^{2})\le\frac{1}{8}$,
we thus have for  $\tilde{\theta}\sim \tilde{\Pi}_{\lambda,\rho}(\cdot\mid\mathcal{D}_{n})$ with probability
 at least $1-\delta$:
\begin{align*}
\mathcal{E}(\tilde{\theta}) & \le8\Big(\tilde{\mathcal{E}}_{n}(\tilde{\theta})-\frac{1}{\lambda}\sum_{i=1}^{n}\psi_{\rho}\big(\tfrac{\lambda}{n}\ell_{i}(\tilde{\theta})\big)+\frac{1}{\lambda}\big(\log\delta^{-1}-\log\tilde D_{\lambda}\big)\Big)\\
 & \le8\Big(-\frac{1}{\lambda}\sum_{i=1}^{n}\psi_{\rho}\big(\tfrac{\lambda}{n}\ell_{i}(f)\big)+\frac{1}{\lambda}\big(\log\delta^{-1}-\log\tilde D_{\lambda}\big)\Big)
\end{align*}
\ref{lem:classicallemma} with $h=-\sum_{i=1}^{n}\psi_{\rho}(\frac{\lambda}{n}\ell_{i}(\theta))$
yields
\begin{equation}
\log\tilde D_{\lambda}=\log\Big(\int\exp\Big(-\sum_{i=1}^{n}\psi_{\rho}\big(\tfrac{\lambda}{n}\ell_{i}(\theta)\big)\Big)\,\d\Pi(\theta)\Big)=-\!\inf_{\varrho\ll\Pi}\Big(\KL(\varrho\mid\Pi)+\int\sum_{i=1}^{n}\psi_{\rho}\big(\tfrac{\lambda}{n}\ell_{i}(\theta)\big)\,\d\varrho(\theta)\Big).\label{eq:constant}
\end{equation}
Therefore, we have with probability
 at least $1-\delta$:
\begin{align*}
\mathcal{E}(\tilde{\theta}) & \le8\inf_{\varrho\ll\Pi}\Big(\int\frac{1}{\lambda}\sum_{i=1}^{n}\big(\psi_{\rho}\big(\tfrac{\lambda}{n}\ell_{i}(\theta)\big)-\psi_{\rho}\big(\tfrac{\lambda}{n}\ell_{i}(f)\big)\big)\,\d\varrho(\theta)+\frac{1}{\lambda}\big(\log\delta^{-1}+\KL(\varrho\mid\Pi)\big)\Big)\\
 & \le8\inf_{\varrho\ll\Pi}\Big(\int\tilde{\mathcal{E}}_{n}(\theta)\,\d\varrho(\theta)+\frac{1}{\lambda}\big(\log\delta^{-1}+\KL(\varrho\mid\Pi)\big)\Big).
\end{align*}
In order to reduce the integral $\int\tilde{\mathcal{E}}_{n}(\theta)\,\d\varrho(\theta)$
to $\int\mathcal{E}(\theta)\,\d\varrho(\theta)$, we use $C_{n,\lambda}+\frac{\lambda}{n}(\sigma \Cf+\sigma^{2})\le\frac{1}{8}$,
Jensen's inequality and \ref{eq:BernsteinConsequence1} to obtain for
any probability measure $\varrho\ll\Pi$ (which may depend on $\mathcal{D}_{n}$)
\begin{align*}
\E_{\mathcal{D}_{n}}\Big[\exp\Big( & \int\big(\lambda\tilde{\mathcal{E}}_{n}(\theta)-\tfrac{9}{8}\lambda\mathcal{E}(\theta)\big)\,\d\varrho(\theta)-\KL(\varrho\mid\Pi)-\log\delta^{-1}\Big)\Big]\\
&=\E_{\mathcal{D}_{n}}\Big[\exp\Big(\int\lambda\tilde{\mathcal{E}}_{n}(\theta)-\tfrac{9}{8}\lambda\mathcal{E}(\theta)-\log\Big(\frac{\d\varrho}{\d\Pi}(\theta)\Big)-\log\delta^{-1}\,\d\varrho(\theta)\Big)\Big]\\
&\le\E_{\mathcal{D}_{n},\theta\sim\varrho}\Big[\exp\Big(\lambda\tilde{\mathcal{E}}_{n}(\theta)-\tfrac{9}{8}\lambda\mathcal{E}(\theta)-\log\Big(\frac{\d\varrho}{\d\Pi}(\theta)\Big)-\log\delta^{-1}\Big)\Big]\\
&\le \E_{\mathcal{D}_{n}}\Big[\int\exp\big(\lambda\tilde{\mathcal{E}}_{n}(\theta)-\big(1+C_{n,\lambda}+\tfrac{\lambda}{n}(\sigma \Cf+\sigma^{2})\big)\lambda\mathcal{E}(\theta)-\log\delta^{-1}\big)\,\d\Pi(\theta)\Big]\le\delta.
\end{align*}
Using $\1_{[0,\infty)}(x)\le \e^{\lambda x}$ again, we conclude with
probability at least $1-\delta$:
\[
\int\tilde{\mathcal{E}}_{n}(\theta)\,\d\varrho(\theta)\le\frac{9}{8}\int\mathcal{E}(\theta)\,\d\varrho(\theta)+\lambda^{-1}\big(\KL(\varrho\mid\Pi)+\log\delta^{-1}\big).
\]
Therefore, we conclude with probability  at least $1-2\delta$
\[
\mathcal{E}(\tilde{\theta})\le9\int\mathcal{E}(\theta)\,\d\varrho(\theta)+\frac{16}{\lambda}\big(\KL(\varrho\mid\Pi)+\log\delta^{-1}\big).\tag*{{\qedhere}}
\]
\end{proof}

\subsection{Proof of \ref{thm:oracleinequality}\label{subsec:ProofOracle}}

We fix a radius $\eta\in(0,1]$ and apply \ref{prop:mainpaclemma} with
$\varrho=\varrho_{\eta}$ defined via
\[
\diff{\varrho_{\eta}}{\Pi}(\theta)\propto\1_{\{\vert\theta-\theta^{\ast}\vert_{\Theta}\le\eta\}}
\]
with $\theta^{*}$ from \ref{eq:oracle}. Note that indeed $C_{n,\lambda}+\frac{\lambda}{n}(\sigma \Cf+\sigma^{2})\le\frac{1}{8}$
for $n$ sufficiently large. In order to control the integral
term, we decompose
\begin{align}
\int\mathcal{E}\,\mathrm{d}\varrho_\eta & =\mathcal{E}(\theta^{\ast})+\int\E\big[(f_{\theta}(\mathbf{X})-f(\mathbf{X}))^{2}-(f_{\theta^{*}}(\mathbf{X})-f(\mathbf{X}))^{2}\big]\,\d\varrho_\eta(\theta)\nonumber \\
 & =\mathcal{E}(\theta^{\ast})+\int\E\big[(f_{\theta^{\ast}}(\mathbf{X})-f_{\theta}(\mathbf{X}))^{2}\big]\,\mathrm{d}\varrho_\eta(\theta)+2\int\E\big[(f(\mathbf{X})-f_{\theta^{\ast}}(\mathbf{X}))(f_{\theta^{\ast}}(\mathbf{X})-f_{\theta}(\mathbf{X}))\big]\,\mathrm{d}\varrho_\eta(\theta)\nonumber \\
 & \le\mathcal{E}(\theta^{\ast})+\int\E\big[(f_{\theta^{\ast}}(\mathbf{X})-f_{\theta}(\mathbf{X}))^{2}\big]\,\mathrm{d}\varrho_\eta(\theta)\nonumber \\
 & \qquad+2\int\E\big[(f(\mathbf{X})-f_{\theta^{\ast}}(\mathbf{X}))^{2}\big]^{1/2}\E\big[(f_{\theta^{\ast}}(\mathbf{X})-f_{\theta}(\mathbf{X}))^{2}\big]^{1/2}\,\mathrm{d}\varrho_\eta(\theta)\nonumber \\
 & \le\frac{4}{3}\mathcal{E}(\theta^{\ast})+4\int\E\big[(f_{\theta^{\ast}}(\mathbf{X})-f_{\theta}(\mathbf{X}))^{2}\big]\,\mathrm{d}\varrho_\eta(\theta),\label{eq:intErho}
\end{align}
using $2ab\le\frac{a^{2}}{3}+3b^{2}$ in the last step. To bound the
remainder, we use the Lipschitz continuity of the map $\theta\mapsto f_\theta(\mathbf{x})$ from \ref{assu:F} \labelcref{assu:lipschitz}.
We obtain 
    \begin{equation}
    \int\mathcal{E}\,\d\varrho_\eta\le\frac{4}{3}\mathcal{E}(\theta^{\ast})+\frac{1}{n^{2}}\qquad\text{for}\qquad\eta=\frac{1}{2\Delta \,n}.\label{eq:integralterm}
    \end{equation}
    It remains to bound the Kullback-Leibler term in \ref{eq:mainpacbound}
which can be done with the following lemma: 
\begin{lem}
\label{lem:aux_a} Suppose $\mathcal B_\eta(\theta^*):=\{\theta\in\R^\pardim:|\theta-\theta^*|_\Theta\le\eta\}\subset\Theta$. Then the probability measure $\diff{\varrho_{\eta}}{\Pi}(\theta)\propto\1_{\mathcal B_\eta(\theta^*)}$ satisfies
\begin{equation}
    \KL(\varrho_{\eta}\mid\Pi)= \pardim\log(2B/\eta)-\log\operatorname{vol}(\mathcal B_1),
\end{equation}
where $\operatorname{vol}(\mathcal B_1)$ denotes the volume of the $|\cdot|_\Theta$-unit ball $\mathcal B_1(0)$.
\end{lem}
\noindent Indeed, $\mathcal{B}_\eta(\theta^\ast)\subseteq \Theta$ for sufficiently large $n$, so plugging \ref{eq:integralterm} and the bound from \ref{lem:aux_a}
into the PAC-Bayes bound \ref{eq:mainpacbound}, we conclude
\begin{align}
\mathcal{E}(\tilde{\theta}_{\lambda,\rho}) & \le12\mathcal{E}(\theta^{\ast})+\frac{9}{n^{2}}+\frac{16}{\lambda}\big( \pardim\log(4B\Delta n)-\log\operatorname{vol}(\mathcal B_1)+\log(2/\delta)\big)\nonumber \\
 & \le12\mathcal{E}(\theta^{\ast})+\frac{\K{1}}{n}\big( \pardim \log(B\Delta n)-\log\operatorname{vol}(\mathcal B_1)+\log(2/\delta)\big).\label{eq:preoracle}
\end{align}
for some constant $\K{1}$ only depending on $\Cf,\Ceps,\sigma$.\hfill\qed

\subsection{Proof of \ref{thm:oracleinequalityStochMH}}\label{subsec:uncorrectedposteriorproof}

Due to \ref{rem:stochMH} we can prove analogously to \ref{prop:mainpaclemma}
the following PAC-Bayes bound under \ref{assu:bounded}: For any sample-dependent
(in a measurable way) probability measure $\varrho\ll\Pi$ and any $\lambda\in(0,n/w)$
and $\rho\in(0,1]$ such that $C_{n,\lambda}+\frac{\lambda}{n\rho}4(\sigma \Cf+\sigma^{2})\le\frac{1}{4}$,
we have
\begin{equation}
\mathcal{E}(\widehat{\theta}_{\lambda})\le\frac{5}{2}\int\mathcal{E}\,\d\varrho+\frac{4}{\lambda}\big(\KL(\varrho\mid\Pi)+\log(2/\delta)\big)\label{eq:mainpacbound-2}
\end{equation}
with probability
 at least $1-\delta$. From here we can continue as in \ref{subsec:ProofOracle}.\hfill\qed

\subsection{Proof of \ref{thm:credibleSet}}\label{subsec:proofofcredibility}

Choosing $\lambda=\frac{n}{\qnorm\K{0}{}}$, \ref{thm:oracleinequality} and \ref{cor:mean} yield 
\[\min\big\{\E[\tilde{\Pi}_{\lambda,\rho}(\theta:\Vert f_{\theta}-f\Vert_{L^{2}(\P^{\mathbf{X}})}\le s_{n}\mid\mathcal{D}_{n})],\P(\Vert f-\bar{f}_{\lambda,\rho}\Vert_{L^{2}(\P^{\mathbf{X}})}\le s_{n})\big\}\ge1-\frac{\alpha^{2}}{2}\]
with $s_{n}^{2}\coloneqq \qnorm\rate^{2}+\frac{2\qnorm(\K{1}\vee\K{2})}{n}\log\frac{2}{\alpha}$. We conclude
\begin{align*}
\P\big(\diam(\widehat{C}(\tau_{\alpha}))\le4s{}_{n}\big) & =\P\Big(\sup_{g,h\in\widehat{C}(\tau_{\alpha})}\Vert g-h\Vert_{L^{2}(\P^{\mathbf{X}})}\le4s{}_{n}\Big)\\
 & \ge\P\Big(\sup_{g,h\in\widehat{C}(\tau_{\alpha})}\Vert g-\bar{f}_{\lambda,\rho}\Vert_{L^{2}(\P^{\mathbf{X}})}+\Vert\bar{f}_{\lambda,\rho}-h\Vert_{L^{2}(\P^{\mathbf{X}})}\le4s{}_{n}\Big)\\
 & \ge\P\big(\tau_{\alpha}\le2s{}_{n}\big)\\
 & =\P\big(\tilde{\Pi}_{\lambda,\rho}(\theta:\Vert f_{\theta}-\bar{f}_{\lambda,\rho}\Vert_{L^{2}(\P^{\mathbf{X}})}\le2s{}_{n}\mid\mathcal{D}_{n})>1-\alpha\big)\\
 & \ge\P\big(\tilde{\Pi}_{\lambda,\rho}(\theta:\Vert f_{\theta}-\bar{f}_{\lambda,\rho}\Vert_{L^{2}(\P^{\mathbf{X}})}>2s{}_{n}\mid\mathcal{D}_{n})<\alpha\big)\\
 & =1-\P\big(\tilde{\Pi}_{\lambda,\rho}(\theta:\Vert f_{\theta}-\bar{f}_{\lambda,\rho}\Vert_{L^{2}(\P^{\mathbf{X}})}>2s_{n}\mid\mathcal{D}_{n})\ge\alpha\big)\\
 & \ge1-\alpha^{-1}\E\big[\tilde{\Pi}_{\lambda,\rho}(\theta:\Vert f_{\theta}-\bar{f}_{\lambda,\rho}\Vert_{L^{2}(\P^{\mathbf{X}})}>2s{}_{n}\mid\mathcal{D}_{n})\big]\\
 & \ge1-\alpha^{-1}\big(\E\big[\tilde{\Pi}_{\lambda,\rho}(\theta:\Vert f_{\theta}-f\Vert_{L^{2}(\P^{\mathbf{X}})}>s{}_{n}\mid\mathcal{D}_{n})\big]+\P\big(\Vert \bar{f}_{\lambda,\rho}-f\Vert_{L^{2}(\P^{\mathbf{X}})}>s{}_{n}\big)\big)\\
 & \ge1-\alpha.
\end{align*}
The first statement in \ref{thm:credibleSet} is thus verified.

\noindent For the coverage statement, we denote $\bar\xi\coloneqq\xi\Delta$ and bound
\begin{align*}
\P\big(f\in\widehat{C}(\xi\tau_{\alpha}^{\theta_0})\big) & =\P\big(\Vert f-\bar{f}_{\lambda,\rho}\Vert_{L^{2}(\P^{\mathbf{X}})}\le\xi\tau_{\alpha}^{\theta_0}\big)\\
 & \ge\P\big(\tilde{\Pi}_{\lambda,\rho}(\theta:\vert\theta-\theta_0\vert_{\Theta}\le\bar\xi^{-1}\Vert f-\bar{f}_{\lambda,\rho}\Vert_{L^{2}(\P^{\mathbf{X}})}\mid\mathcal{D}_{n})<1-\alpha\big)\\
 & \ge\P\big(\tilde{\Pi}_{\lambda,\rho}(\theta:\vert\theta-\theta_0\vert_{\Theta}\le\bar\xi^{-1}s_{n}\mid\mathcal{D}_{n})<1-\alpha\big)-\alpha^{2}\\
 & =1-\alpha^{2}-\P\big(\tilde{\Pi}_{\lambda,\rho}(\theta:\vert\theta-\theta_0 \vert_{\Theta}\le\bar \xi^{-1}s_{n}\mid\mathcal{D}_{n})\ge 1-\alpha\big)\\
 & \ge1-\alpha^{2}-(1-\alpha)^{-1}\E\big[\tilde{\Pi}_{\lambda,\rho}(\mathcal{B}_{s_n/\bar\xi}(\theta_0)\mid\mathcal{D}_{n})\big]
\end{align*}
with balls
\[
\mathcal B_\kappa(\theta_0)=\big\{\theta\in\Theta:\vert \theta-\theta_0\vert_{\Theta}\le\kappa\big\},\qquad \kappa>0.
\]
In terms of $\tilde{\mathcal{E}}_{n}(\theta)=\tilde{R}_{n,\rho}(\theta)-\tilde{R}_{n,\rho}(f)$
and $\tilde{D}_{\lambda}=\int\exp\big(-\lambda\tilde{R}_{n,\rho}(\theta)\big)\,\Pi(\d\theta)$
the inequalities by Cauchy-Schwarz and H\"older imply for $\qnorm\ge2$
\begin{align*}
\E\big[\tilde{\Pi}_{\lambda,\rho}(\mathcal B_{s_n/\bar\xi}(\theta_0)&\mid \mathcal{D}_{n})\big]=  \E\Big[\tilde{D}_{\lambda}^{-1}\int_{\mathcal B_{s_n/\bar\xi}(\theta_0)}\e^{-\lambda\tilde{R}_{n,\rho}(\theta)}\,\Pi(\d\theta)\Big]\\
 &= \E\Big[\tilde{D}_{\lambda}^{-1}\e^{-\lambda\tilde{R}_{n,\rho}(f)}\int_{\mathcal B_{s_n/\bar\xi}(\theta_0)}\e^{-\lambda\tilde{\mathcal{E}}_{n}(\theta)}\,\Pi(\d\theta)\Big]\\
&\le \E\big[\tilde{D}_{\lambda}^{-2}\e^{-2\lambda\tilde{R}_{n,\rho}(f)}\big]^{1/2} \E\Big[\Big(\int_{\mathcal B_{s_n/\bar\xi}(\theta_0)}\e^{-\lambda\tilde{\mathcal{E}}_{n}(\theta)}\,\Pi(\d\theta)\Big)^{2}\Big]^{1/2}\\
&\le  \E\big[\tilde{D}_{\lambda}^{-2}\e^{-2\lambda\tilde{R}_{n,\rho}(f)}\big]^{1/2}\E\Big[\Pi(\mathcal B_{s_n/\bar\xi}(\theta_0))^{2(1-1/\qnorm)}\Big(\int_{\mathcal B_{s_n/\bar\xi}(\theta_0)}\e^{-\qnorm\lambda\tilde{\mathcal{E}}_{n}(\theta)}\,\Pi(\d\theta)\Big)^{2/\qnorm}\Big]^{1/2}.
\end{align*}
Abbreviating $\mathcal B_1=\mathcal B_1(0)$ as above, we note that the uniform prior yields 
\[
\Pi(\mathcal B_{s_n/\bar\xi}(\theta_0))\le\Pi(\mathcal B_{s_n/\bar\xi}(0))=\big(\frac{s_{n}}{\bar \xi}\big)^\pardim\Pi(\mathcal B_1)
\]
which is not random. To bound the expectation of the integral we extend the integration domain to $D=\Theta$ if $\theta_0$ is data dependent and keep $D=\mathcal B_{s_n/\bar\xi}(\theta_0)$ otherwise. 
The smaller choice of $\lambda=n/(\qnorm\K{0}{})$ instead of $n/\K{0}$ ensures $C_{n,\qnorm\lambda}+\frac{\qnorm\lambda}{n}(\sigma \Cf+\sigma^{2})\le\frac{1}{8}$ allowing us to apply \ref{prop:compatibility} with $\qnorm\lambda$. The second factor can thus be bounded using 
\begin{align*}
\E\Big[\int_{D}\e^{-\qnorm\lambda\tilde{\mathcal{E}}_{n}(\theta)}\,\Pi(\d\theta)\Big] & =\int_{D}\E\big[\e^{-\qnorm\lambda\tilde{\mathcal{E}}_{n}(\theta)}\big]\,\Pi(\d\theta)\\
 & \le\int_{D}\exp\big(\big(C_{n,\qnorm\lambda}+\tfrac{3}{4}+\tfrac{\qnorm\lambda}{n}(\sigma \Cf+\sigma^{2})-1\big)\qnorm\lambda\mathcal{E}(\theta)\big)\,\Pi(\d\theta)
  \\
  &\le\Pi(D).
\end{align*}
Based on \ref{eq:constant}, we conclude with $R_\qnorm=\big(\frac{s_{n}}{\bar \xi}\big)^{-\pardim/\qnorm}\Pi(\mathcal B_1)^{-1/\qnorm}$ for a data-dependent $\theta_0$ and $R_\qnorm=1$ otherwise that
\begin{align*}
\E\big[\tilde{\Pi}_{\lambda,\rho}(&\mathcal{B}_{s_n/\bar{\xi}}(\theta_0)\mid \mathcal{D}_{n})\big]\\
&\le  \Pi(\mathcal B_1)\big(\frac{s_{n}}{\bar \xi}\big)^\pardim R_q\E\big[\tilde{D}_{\lambda}^{-2}\e^{-2\lambda\tilde{R}_{n,\rho}(f)}\big]^{1/2}\\
&=  \Pi(\mathcal B_1)\big(\frac{s_{n}}{\bar \xi}\big)^\pardim R_q\E\Big[\exp\Big(\inf_{\varrho\ll\Pi}\Big(2\KL(\varrho\mid\Pi)+2\int\lambda\tilde{R}_{n,\rho}(\theta)\,\d\varrho(\theta)\Big)-2\lambda\tilde{R}_{n,\rho}(f)\Big)\Big]^{1/2}\\
&=  \Pi(\mathcal B_1)\big(\frac{s_{n}}{\bar \xi}\big)^\pardim R_q\E\Big[\exp\Big(\inf_{\varrho\ll\Pi}\Big(2\KL(\varrho\mid\Pi)+\int2\lambda\mathcal{\tilde{E}}_{n}(\theta)\,\d\varrho(\theta)\Big)\Big)\Big]^{1/2}.
\end{align*}
For $\varrho_{\eta'}$ defined via
\[
\diff{\varrho_{\eta'}}{\Pi}(\theta)\propto\1_{\{\vert\theta-\theta^{\ast}\vert_{\Theta}\le\eta'\}},\qquad\eta'=\frac{s_n}{4\Delta \sqrt{\log(B\Delta n)}}.
\]
we can moreover bound using \ref{eq:intErho}, \ref{assu:F}
and \ref{lem:aux_a}
\begin{align*}
  \inf_{\varrho\ll\Pi}\Big(\KL(\varrho\mid\Pi)+\int\lambda\mathcal{\tilde{E}}_{n}(\theta)\,\d\varrho(\theta)\Big)
 & \le\KL(\varrho_{\eta'}\mid\Pi)+\frac{4}{3}\lambda\mathcal{E}(\theta^{\ast})+4\lambda\int\E\big[\big(f_{\theta^{\ast}}(\mathbf{X})-f_{\theta}(\mathbf{X})\big)^{2}\big]\,\mathrm{d}\varrho_{\eta'}(\theta)\\
 & \qquad \qquad+\lambda\int\big(\tilde{\mathcal{E}}_{n}(\theta)-\mathcal{E}(\theta)\big)\,\d\varrho_{\eta'}(\theta)\\
 & \le \pardim\log\frac{1}{\eta'}-\log\Pi(\mathcal B_1)+\frac{4}{3}\lambda\mathcal{E}(\theta^{\ast})+\frac{\lambda s_{n}^2}{\log(B\Delta n)}\\
 &\qquad\qquad+\lambda\int\big(\mathcal{\tilde{E}}_{n}(\theta)-\mathcal{E}(\theta)\big)\,\d\varrho_{\eta'}(\theta).
\end{align*}
In the sequel $\K{i}>0,i=5,6,\dots$, are numerical constants which
may depend on $\Cf,\Ceps,\sigma,\inputdim$ and $\alpha$. Since $\log(B\Delta n)\mathcal{E}(\theta^{\ast})\le s_{n}^2\le\K{5} \pardim\log(B\Delta n)/\lambda$
by assumption, we obtain
\begin{align*}
\E\big[\tilde{\Pi}_{\lambda,\rho}(\mathcal{B}_{s_n/\bar{\xi}}(\theta_0)\mid \mathcal{D}_{n})\big] & \le R_q\exp\big(- \pardim\log\bar\xi+ \pardim\log\big(4\Delta\sqrt{\log(B\Delta n)}\big)+3\K{5} \pardim\big)\\
 & \qquad\qquad\times\E\Big[\exp\Big(2\lambda\int\big(\tilde{\mathcal{E}}_{n}(\theta)-\mathcal{E}(\theta)\big)\,\d\varrho_{\eta'}(\theta)\Big)\Big]^{1/2}\\
 & \le R_q\exp\big(- \pardim\log\xi+\pardim(\K{6}+\log\sqrt{\log(B\Delta n)})\big)\\
 &\qquad\qquad \times\E\Big[\int\exp\big(2\lambda\big(\tilde{\mathcal{E}}_{n}(\theta)-\mathcal{E}(\theta)\big)\big)\,\d\varrho_{\eta'}(\theta)\Big]^{1/2}
\end{align*}
applying Jensen's inequality in the last line. To bound the expectation
in the previous line, Fubini's theorem, \ref{prop:compatibility}
with $C_{n,2\lambda}+\frac{2\lambda}{n}(\sigma \Cf+\sigma^{2})\le\frac{1}{8}$
and \ref{lem:Lipschitz} imply
\begin{align*}
 \E\Big[\int\exp\big(2\lambda\big(\tilde{\mathcal{E}}_{n}(\theta)-\mathcal{E}(\theta)\big)\big)\,\mathrm{d}\varrho_{\eta'}(\theta)\Big]
 & =\int\E\big[\exp\big(2\lambda\big(\tilde{\mathcal{E}}_{n}(\theta)-\mathcal{E}(\theta)\big)\big)\big]\,\d\varrho_{\eta'}(\theta)\\
 & \le\int\exp\big(2\lambda\big(C_{n,2\lambda}+\tfrac{2\lambda}{n}(\sigma \Cf+\sigma^{2})\big)\mathcal{E}(\theta)\big)\,\d\varrho_{\eta'}(\theta)\\
 & \le\int\exp\big(\tfrac{1}{4}\lambda\mathcal{E}(\theta)\big)\,\d\varrho_{\eta'}(\theta)\\
 & \le\int\exp\big(\tfrac{1}{2}\lambda\big(\mathcal{E}(\theta^{*})+\Vert f_{\theta}-f_{\theta^{*}}\Vert_{L^{2}(\P^{\mathbf{X}})}^{2}\big)\big)\,\d\varrho_{\eta'}(\theta)\\
 & \le\int\exp\big(\tfrac{1}{2}\lambda\big(\mathcal{E}(\theta^{*})+s_{n}^2/(\log(B\Delta n)\big)\big)\,\d\varrho_{\eta'}(\theta)\\
 & \le \e^{\K{5} \pardim}.
\end{align*}
We conclude
\begin{align*}
\E\big[\tilde{\Pi}_{\lambda,\rho}(\mathcal{B}_{s_n/\bar{\xi}}(\theta_0)\mid \mathcal{D}_{n})\big] & \le R_\qnorm\exp\big(- \pardim\big(\log\xi-\K{6}-\K{5}-\log\sqrt{ \log(B\Delta n)}\big)\big).
\end{align*}
If $R_\qnorm=1$,  we obtain for a sufficiently large $\K{7}>0$ and $\xi=\K{7} \sqrt{\log(B\Delta  n)}$ that $\E\big[\tilde{\Pi}_{\lambda,\rho}(\mathcal{B}_{s_n/\bar{\xi}}(\theta_0)\mid \mathcal{D}_{n})\big]\le\alpha(1-\alpha)^{2}$
and thus
\[
\P\big(f\in\widehat{C}(\xi \tau_{\alpha}^{\theta_0})\big)\ge1-\alpha^{2}-\alpha(1-\alpha)\ge1-\alpha.
\]
If $R_\qnorm=\big(s_{n}/\bar \xi\big)^{-\pardim/\qnorm}\Pi(\mathcal B_1)^{-1/\qnorm}$, we obtain
\begin{equation*}
\E\big[\tilde{\Pi}_{\lambda,\rho}(\mathcal{B}_{s_n/\bar{\xi}}(\theta_0)\mid \mathcal{D}_{n})\big] \le \Pi(\mathcal B_1)^{-1/\qnorm}\exp\big(- \pardim\big(\log\xi-\log\big((\Delta\xi/s_n)^{1/\qnorm}\big)-\K{6}-\K{5}-\log\sqrt{ \log(B\Delta n)}\big)\big)
\end{equation*}
such that we need $\xi=\big(\Delta/s_n\big)^{1/\qnorm'}$ for any $\qnorm'<\qnorm$ and sufficiently large $n$. The claim the follows by choosing $\qnorm'=\qnorm/2$.\qed

\subsection{Proofs for \ref{sec:NN}}\label{subsec:proofsNN}

To prove \ref{thm:oracleinequalityShallow}, \ref{prop:UQ} and \ref{thm:oracleinequalityNN}, we need the following bound:
\begin{lem}
\label{lem:Lipschitz}Let $\theta,\tilde{\theta}\in[-B,B]^{ \pardim}$.
Then we have for $\mathbf{x}\in\R^{\inputdim}$ that
\[
\vert f_{\theta}(\mathbf{x})-f_{\tilde{\theta}}(\mathbf{x})\vert\le 4(2rB)^{L}(\vert\mathbf{x}\vert_{1}\lor1)\vert \theta-\tilde{\theta}\vert_\infty.
\]
If $L=1$, we also have for all $\mathbf{x}\in\R^{\inputdim}$
\[
\vert f_{\theta}(\mathbf{x})-f_{\tilde{\theta}}(\mathbf{x})\vert\le B(\vert\mathbf{x}\vert_{1}+1)\vert \theta-\tilde{\theta}\vert_1\le (\inputdim+3)rB(\vert\mathbf{x}\vert_{1}+1)\vert \theta-\tilde{\theta}\vert_\infty.
\]
\end{lem}
\subsubsection{Proof of \ref{thm:oracleinequalityShallow} and \ref{thm:oracleinequalityNN}}\label{subsec:networkproofs}
It remains to verify \ref{assu:F} for the considered class of neural networks where we choose $|\cdot|_\Theta=|\cdot|_\infty$. While the boundedness is ensured by construction of $\mathcal F(\inputdim,L,r,\Cf)$, the Lipschitz continuity with respect to $|\cdot|_\Theta=|\cdot|_\infty$ is verified by \ref{lem:Lipschitz}. In particular,
\[
\|f_\theta-f_{\tilde\theta}\|_{L^2(\P^{\mathbf X})}\le 4(2rB)^{L}|\theta-\tilde\theta|_\infty\E[|\mathbf X|_1^2\vee 1]^{1/2}\le 8\E[\vert \mathbf{X}\vert^2]\inputdim(2rB)^L|\theta-\tilde\theta|_\infty
\]
or $\|f_\theta-f_{\tilde\theta}\|_{L^2(\P^{\mathbf X})}\le  (\inputdim+3)rB(\E[\vert \mathbf{X}\vert^2]+1)|\theta-\tilde\theta|_\infty$ for $L=1$.
The statement is then an immediate consequence of \ref{thm:oracleinequality}.\hfill\qed
\subsubsection{Proof of \ref{prop:UQ}}
    This is a straight-forward application of \ref{thm:credibleSet} in combination with \ref{lem:Lipschitz}, where we used $|\cdot|_\Theta=|\cdot|_1$ with $\Delta=B
    \E[(\vert \mathbf{X}\vert_1+1)^2]^{1/2}$ for the credibility statement. \hfill \qed

\subsubsection{Proof of \ref{thm:learningthewidth}}

The outline of the proof \ref{thm:learningthewidth} is similar to that of \ref{thm:oracleinequality}. To bound the Kullback-Leibler term in \ref{eq:pac_adaptive}, we employ the following modification of \ref{lem:aux_a}:
\begin{lem}\label{lem:aux_adaptive}
    We have $\KL(\varrho_{r,\eta}\mid\widecheck{\Pi})\le {\pardim_r}\log(2B/\eta)+r$.
\end{lem}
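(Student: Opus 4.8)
The plan is to reduce the statement to \ref{lem:aux_a} by means of a chain rule for the Kullback--Leibler divergence. First I would make the mixing prior precise by viewing the full weight space as the disjoint union $\bigsqcup_{r=1}^{n}[-B,B]^{P_r}$ over the admissible widths, so that $\widecheck\Pi$ is a genuine mixture which, restricted to the width-$r$ component $[-B,B]^{P_r}$, equals the scalar multiple $\frac{2^{-r}}{1-2^{-n}}\Pi_r$ of the uniform prior $\Pi_r$. The measure $\varrho_{r,\eta}$ is supported entirely on this single component, so $\varrho_{r,\eta}\ll\Pi_r\ll\widecheck\Pi$ there and the Radon--Nikodym derivatives chain together.

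The key observation is that on the width-$r$ component the factor $\frac{\mathrm d\Pi_r}{\mathrm d\widecheck\Pi}$ is \emph{constant}, equal to $(1-2^{-n})2^{r}$, because $\widecheck\Pi$ restricted to that component is merely a rescaling of $\Pi_r$. Consequently
\[
\frac{\mathrm d\varrho_{r,\eta}}{\mathrm d\widecheck\Pi}(\theta)=(1-2^{-n})2^{r}\,\frac{\mathrm d\varrho_{r,\eta}}{\mathrm d\Pi_r}(\theta),\qquad\theta\in[-B,B]^{P_r},
\]
and integrating the logarithm of this identity against $\varrho_{r,\eta}$ gives the clean decomposition
\[
\KL(\varrho_{r,\eta}\mid\widecheck\Pi)=\KL(\varrho_{r,\eta}\mid\Pi_r)+\log\big((1-2^{-n})2^{r}\big).
\]
The first term is exactly the quantity bounded in \ref{lem:aux_a} applied with $P=P_r$, hence at most $P_r\log(2B/\eta)$; concretely one repeats the volume computation there, noting that $\varrho_{r,\eta}$ and $\Pi_r$ both have constant Lebesgue densities and that the support $\{\vert\theta-\theta^\ast_r\vert_\infty\le\eta\}\cap[-B,B]^{P_r}$ has volume at least $\eta^{P_r}$ (each coordinate interval has length $\ge\eta$ since $\eta\le1\le B$). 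For the second term I would use $\log(1-2^{-n})\le0$ together with $\log 2<1$ to obtain $\log\big((1-2^{-n})2^{r}\big)\le r\log 2\le r$. Adding the two bounds yields $\KL(\varrho_{r,\eta}\mid\widecheck\Pi)\le P_r\log(2B/\eta)+r$, as claimed.

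The only delicate point, and the one I would state carefully, is the measure-theoretic setup of a prior that mixes over parameter spaces of different dimensions: once the disjoint-union structure is fixed and one checks that $\varrho_{r,\eta}$ charges a single component, the constancy of $\frac{\mathrm d\Pi_r}{\mathrm d\widecheck\Pi}$ on that component makes the chain rule immediate, and the penalty for the geometric weight $2^{-r}$ enters purely additively as the term $r$, with no further estimation required.
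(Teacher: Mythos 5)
Your proposal is correct and follows essentially the same route as the paper: both establish the identity $\frac{\d\varrho_{r,\eta}}{\d\widecheck{\Pi}}=2^{r}(1-2^{-n})\frac{\d\varrho_{r,\eta}}{\d\Pi_r}$ on the width-$r$ component, split the divergence as $\KL(\varrho_{r,\eta}\mid\Pi_r)+\log\big(2^{r}(1-2^{-n})\big)$, and bound the first term by $P_r\log(2B/\eta)$ via \ref{lem:aux_a} and the second by $r$. Your explicit disjoint-union formulation of the mixed-dimension parameter space is a welcome clarification of a point the paper treats informally, but it is the same argument.
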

\noindent Note that the only property of the prior that we used in the proof of  \ref{prop:mainpaclemma} is that $\Pi$ is a probability measure on the space of network weights. Hence, it is straightforward to see that the analogous statement still holds when replacing $\Pi$ with $\widecheck{\Pi}$. We obtain with probability at least $1-\delta$ 
\begin{equation}
\mathcal{E}(\widecheck{\theta}_{\lambda,\rho})\le9\int\mathcal{E}\,\d\varrho+\frac{16}{\lambda}\big(\KL(\varrho\mid\widecheck{\Pi})+\log(2/\delta)\big).\label{eq:pac_adaptive}
\end{equation}
For a width $r\in\N$ and some radius $\eta\in(0,1]$, we now choose  $\varrho=\varrho_{r,\eta}$ defined via
\[
\diff{\varrho_{r,\eta}}{\Pi_r}(\theta)\propto\1_{\{\vert\theta-\theta^{\ast}_r\vert_{\infty}\le\eta\}}
\]
with $\theta_r^{*}$ from \ref{eq:oracleL}. 
Replacing $\theta^{\ast}$ with $\theta^{\ast}_r$ in the arguments from \ref{subsec:ProofOracle} and verifying \ref{assu:F} \labelcref{assu:lipschitz} as in \ref{subsec:networkproofs}, we find
\begin{equation}
\int\mathcal{E}\,\d\varrho_{r,\eta}\le\frac{4}{3}\mathcal{E}(\theta^{\ast}_r)+\frac{1}{n^{2}}\qquad\text{for}\qquad\eta=\frac{1}{8\E[\vert\mathbf{X}\vert^2](2rB)^Lp\,n}.\label{eq:integralterm_adaptive}
\end{equation}
Owing to \ref{lem:aux_adaptive}, we thus have with probability $1-\delta$
\begin{equation}
\mathcal{E}(\widecheck{\theta}_{\lambda,\rho})\le 12\mathcal{E}(f_{\theta^{\ast}_r})+\frac{\K{1}{}}{n}\big({\pardim_r} L\log(rBn)+\log(2/\delta)\big),
\end{equation}
for some $\K{1}>0$ only depending on $\Cf,\Ceps,\sigma$. Choosing $r$ to minimize the upper bound in the last display yields the assertion.\hfill\qed

\subsubsection{Proof of \ref{lem:ApproxShallow}}
  By \citet[Theorem 2]{yang2024nonparametric} for any $f\in\mathcal C^\beta([0,1]^\inputdim,c_0)$ there is a shallow network $g_\theta(\mathbf x)=\sum_{i=1}^rW_i^{(2)}\phi(W_{i,\cdot}^{(1)}\mathbf x+v_i^{(1)})$ with $|W^{(1)}_{i,\cdot}|^2+|v_i^{(1)}|^2=1$ for all $i=1,\dots,r$ ($v^{(2)}=0$) and $|W^{(2)}|_1\le M$ such that
  \[\|g_\theta-f\|_\infty\le\cshallow r^{-\beta/\inputdim}\vee M^{-2\beta/(\inputdim+3-2\beta)}.\]
  Choosing $M=r^{(\inputdim+3-2\beta)/(2 \inputdim)}$, we obtain $\|g_\theta-f\|_\infty\le\cshallow r^{-\beta/\inputdim}$. Note that the normalization of $W^{(1)}_{i,\cdot}$ and $v_i^{(1)}$ implies $|W_{ij}^{(1)}|\le1$ and $|v_i^{(1)}|\le1$ for all $i=1,\dots,r,j=1,\dots,\inputdim$. 
  
  To obtain a uniform bound on the entries of the vector $W^{(2)}\in\R^\inputdim$, we note that for each entry with $|W^{(2)}_i|\ge B$ we can reproduce $W_i^{(2)}\phi(W_{i,\cdot}^{(1)}\mathbf x+v_i^{(1)})$ by summing over $\lceil |W^{(2)}_i|/B\rceil$ copies of the neuron $\phi(W_{i,\cdot}^{(1)}\mathbf x+v_i^{(1)})$ weighted with weights bounded by $B$. As a result we can reproduce $g_\theta$ with a shallow network with at most $r+|W^{(2)}|_1/B\le r+M/B$ neurons and weights uniformly bounded by $B$. Choosing $B=1\vee r^{(3-\inputdim-2\beta)/(2 \inputdim)}$ ensures that $M/B\le r$. \hfill\qed

\subsection{Remaining proofs for \ref{sec:Oracle-inequality}}\label{sec:remainingProofs}

\subsubsection{Proof of \ref{lem:KLApprox}}

Define
\begin{equation*}
D_{\lambda}\coloneqq\int\exp\big(-\lambda R_{n}(\theta)\big)\,\Pi(\d\theta),\qquad\bar{D}_{\lambda}\coloneqq\int\exp\big(-\lambda\bar{R}_{n,\rho}(\theta)\big)\,\Pi(\d\theta).\label{eq:D_standardize}
\end{equation*}
For the first part of the lemma, we write
\begin{align}
\KL\big(\bar{\Pi}_{\lambda,\rho}(\cdot\mid \mathcal{D}_{n})\,\big\vert\,\Pi_{\lambda}(\cdot\mid \mathcal{D}_{n})\big) & =\int\log\frac{\d\bar{\Pi}_{\lambda,\rho}(\theta\mid \mathcal{D}_{n})}{\d\Pi_{\lambda}(\cdot\mid \mathcal{D}_{n})}\,\bar{\Pi}_{\lambda,\rho}(\d\theta\mid \mathcal{D}_{n})\nonumber \\
 & =\lambda\int S_{n}(\theta)\,\bar{\Pi}_{\lambda,\rho}(\d\theta\mid \mathcal{D}_{n})+\log\frac{D_{\lambda}}{\bar{D}_{\lambda}}\qquad\text{with}\nonumber \\
S_{n}(\theta) & \mspace{-3.7mu}\coloneqq R_{n}(\theta)-\bar{R}_{n,\rho}(\theta).\label{eq:Sn}
\end{align}
By concavity of the logarithm we have
\[
\frac{1}{\lambda}\sum_{i=1}^{n}\log\big(\rho \e^{-\frac{\lambda}{n\rho}\ell_{i}(\theta)}+1-\rho\big)\ge\frac{1}{\lambda}\sum_{i=1}^{n}\rho\log \e^{-\frac{\lambda}{n\rho}\ell_{i}(\theta)}+(1-\rho)\log1=-\frac{1}{n}\sum_{i=1}^{n}\ell_{i}(\theta)=-R_{n}(\theta).
\]
Hence, $S_{n}(\theta)\ge\text{0}$ and $D_{\lambda}\le\bar{D}_{\lambda}$.
We conclude
\[
\KL\big(\bar{\Pi}_{\lambda,\rho}(\cdot\mid \mathcal{D}_{n})\,\big\vert\,\Pi_{\lambda}(\cdot\mid \mathcal{D}_{n})\big)\le\lambda\int S_{n}(\theta)\,\bar{\Pi}_{\lambda,\rho}(\d\theta\mid \mathcal{D}_{n}).
\]
Moreover, $\log(x+1)\le x$ for all $x>-1$ and a second order Taylor
expansion of $x\mapsto \e^{x}$ yields
\begin{align*}
S_{n}(\theta) & =\frac{1}{\lambda}\sum_{i=1}^{n}\big(\log\big(\rho(\e^{-\frac{\lambda}{n\rho}\ell_{i}(\theta)}-1)+1\big)+\frac{\lambda}{n}\ell_{i}(\theta)\big)\\
 & \le\frac{\rho}{\lambda}\sum_{i=1}^{n}\big(\e^{-\frac{\lambda}{n\rho}\ell_{i}(\theta)}-1+\tfrac{\lambda}{n\rho}\ell_{i}(\theta)\big)\\
 & \le\frac{\rho}{2\lambda}\sum_{i=1}^{n}\big(\tfrac{\lambda}{n\rho}\ell_{i}(\theta)\big)^{2}\e^{-\frac{\lambda}{n\rho}\ell_{i}(\theta)}\\
 & \le\frac{\lambda}{n\rho}\cdot\frac{1}{2n}\sum_{i=1}^{n}\vert\ell_{i}(\theta)\vert^{2}.
\end{align*}
For $\ell_{i}(\theta)=\vert Y_{i}-f_{\theta}(\mathbf{X}_{i})\vert^{2}\le2\vert f(\mathbf{X}_{i})-f_{\theta}(\mathbf{X}_{i})\vert^{2}+2\eps_{i}^{2}\le8\Cf^{2}+2\eps_{i}^{2}$
we obtain 
\[
S_{n}(\theta)\le\frac{\lambda}{n\rho}\Big(64\Cf^{4}+\frac{4}{n}\sum_{i=1}^{n}\eps_{i}^{4}\Big)
\]
and thus
\begin{align*}
\frac{1}{\lambda}\KL\big(\bar{\Pi}_{\lambda,\rho}(\cdot\mid \mathcal{D}_{n})\,\big\vert\,\Pi_{\lambda}(\cdot\mid \mathcal{D}_{n})\big) & \le\frac{\lambda}{n\rho}\Big(64\Cf^{4}+\frac{4}{n}\sum_{i=1}^{n}\eps_{i}^{4}\Big)\int\bar{\Pi}(\d\theta\mid \mathcal{D}_{n})\\
 & =\frac{\lambda}{n\rho}\Big(64\Cf^{4}+\frac{4}{n}\sum_{i=1}^{n}\eps_{i}^{4}\Big).
\end{align*}

\noindent In the regime $\rho\to0$, define 
\[
T_{n}(\theta)\coloneqq-\rho n\frac{1}{n}\sum_{i=1}^{n}\e^{-\frac{\lambda}{n\rho}\ell_{i}(\theta)}\qquad\text{and}\qquad D_{\varpi,\lambda}\coloneqq\int\exp\big(-T_n(\theta)\big)\,\Pi(\d \theta)
\]
such that
\[
\KL\big(\bar{\Pi}_{\lambda,\rho}(\cdot\mid \mathcal{D}_{n})\,\big\vert\,\varpi_{\lambda,\rho}(\cdot\mid \mathcal{D}_{n})\big)=\int\big(T_{n}(\theta)-\lambda\bar{R}_{n,\rho}(\theta)\big)\,\bar{\Pi}_{\lambda,\rho}(\d\theta\mid \mathcal{D}_{n})+\log\frac{D_{\varpi,\lambda}}{\bar{D}_{\lambda}}.
\]
We have 
\begin{align*}
\lambda\bar{R}_{n,\rho}(\theta)-T_{n}(\theta) & =-\sum_{i=1}^{n}\log\big(\rho \e^{-\frac{\lambda}{n\rho}\ell_{i}(\theta)}+1-\rho\big)-T_{n}(\theta)\\
 & =-n\log(1-\rho)-\sum_{i=1}^{n}\big(\log\big(\rho \e^{-\frac{\lambda}{n\rho}\ell_{i}(\theta)}+1-\rho\big)-\log(1-\rho)\big)-T_{n}(\theta)\\
 & =-n\log(1-\rho)-\sum_{i=1}^{n}\rho \e^{-\frac{\lambda}{n\rho}\ell_{i}(\theta)}\int_{0}^{1}\big(t\rho \e^{-\frac{\lambda}{n\rho}\ell_{i}(\theta)}+1-\rho\big)^{-1}\,\d t-T_{n}(\theta)\\
 & =-n\log(1-\rho)-\sum_{i=1}^{n}\rho \e^{-\frac{\lambda}{n\rho}\ell_{i}(\theta)}\int_{0}^{1}\Big(\frac{1}{t\rho \e^{-\frac{\lambda}{n\rho}\ell_{i}(\theta)}+1-\rho}-1\Big)\,\d t,
\end{align*}
where $(t\rho \e^{-\frac{\lambda}{n\rho}\ell_{i}(\theta)}+1-\rho)^{-1}-1\in[0,\frac{\rho}{1-\rho}]$.
Therefore,
\begin{equation}
-\frac{\rho^{2}}{(1-\rho)}\sum_{i=1}^{n}\e^{-\frac{\lambda}{n\rho}\ell_{i}(\theta)}\le\lambda\bar{R}_{n,\rho}(\theta)-T_{n}(\theta)+n\log(1-\rho)\le0.\label{eq:kullbackEst}
\end{equation}
This implies $\log\frac{D_{\varpi,\lambda}}{\bar{D}_{\lambda}}\le-n\log(1-\rho)$
and thus
\begin{equation}
\KL\big(\bar{\Pi}_{\lambda,\rho}(\cdot\mid \mathcal{D}_{n})\,\big\vert\,\varpi_{\lambda,\rho}(\cdot\mid \mathcal{D}_{n})\big)  \le\frac{\rho^{2}}{1-\rho}\int\sum_{i=1}^{n}\e^{-\frac{\lambda}{n\rho}\ell_{i}(\theta)}\,\bar{\Pi}_{\lambda,\rho}(\d\theta\mid \mathcal{D}_{n})   \le\frac{\rho^{2}n}{1-\rho}.\tag*{{\qed}}
\end{equation}

\subsubsection{Proof of \ref{lem:KLtilde}}
Recall $\psi_{\rho}(x)=-\log(\e^{-x}+1-\rho)$, $\psi_{\rho}'(x)=\frac{1}{1+(1-\rho)\e^{x}}$ and $\psi_{\rho}''(x)=-\frac{(1-\rho)\e^{x}}{(1+(1-\rho)\e^{x})^{2}}\in[-1/4,0]$. Since
\begin{align*}
\tilde R_{n,\rho}(\theta)	
    &=\frac{1}{\lambda}\sum_{i=1}^{n}\psi_{\rho}\big(\tfrac{\lambda}{n}\ell_{i}(\theta)\big)\\
	&=\frac{n}{\lambda}\psi_{\rho}(0)+\frac{1}{\lambda}\sum_{i=1}^{n}\tfrac{\lambda}{n}\ell_{i}(\theta)\psi'_{\rho}\big(\xi_{i}\tfrac{\lambda}{n}\ell_{i}(\theta)\big)\\
	&=\frac{n}{\lambda}\psi_{\rho}(0)+\frac{\psi_{\rho}'(0)}{n}\sum_{i=1}^{n}\ell_{i}(\theta)+\frac{1}{n}\sum_{i=1}^{n}\ell_{i}(\theta)\big(\psi'_{\rho}(\xi_{i}\tfrac{\lambda}{n}\ell_{i}(\theta))-\psi_{\rho}'(0)\big)\\
	&=-\frac{n}{\lambda}\log(2-\rho)+\frac{1}{2-\rho}R_{n}(\theta)+\frac{\lambda}{n^{2}}\sum_{i=1}^{n}\ell_{i}(\theta)^{2}\xi_i\psi''_{\rho}\big(\xi_{i}'\tfrac{\lambda}{n}\ell_{i}(\theta)\big),
\end{align*} 
we have
\[-\frac{\lambda^{2}}{4n^{2}}\sum_{i=1}^{n}\ell_{i}(\theta)^{2}\le\lambda\tilde R_{n,\rho}(\theta)-\frac{\lambda}{2-\rho}R_{n}(\theta)+n\log(2-\rho)\le0.\]
Therefore, we have with $\tilde{D}_\lambda$ from \ref{eq:D_tilde} that
\begin{align*}
\KL\big(\tilde{\Pi}_{\lambda,\rho}(\cdot\mid \mathcal{D}_{n})\,\big\vert\,\Pi_{\lambda/(2-\rho)}(\cdot\mid \mathcal{D}_{n})\big)
    &=\int\Big(\frac{\lambda}{2-\rho}R_{n}(\theta)-\lambda\tilde R_{n,\rho}(\theta)\Big)\,\tilde{\Pi}_{\lambda,\rho}(\d\theta\mid \mathcal{D}_{n})+\log\frac{D_{\lambda/(2-\rho)}}{\tilde D_{\lambda}}\\
	&\le\int\Big(\frac{\lambda}{2-\rho}R_{n}(\theta)-\lambda\tilde R_{n,\rho}(\theta)-n\log(2-\rho)\Big)\,\tilde{\Pi}_{\lambda,\rho}(\d\theta\mid \mathcal{D}_{n})\\
	&\le\frac{\lambda^{2}}{4n}\int\frac{1}{n}\sum_{i=1}^{n}\ell_{i}(\theta)^{2}\,\tilde{\Pi}_{\lambda,\rho}(\d\theta\mid \mathcal{D}_{n})\\
	&\le\frac{\lambda^{2}}{n}\Big(32\Cf^{4}+\frac{2}{n}\sum_{i=1}^{n}\eps_{i}^{4}\Big).\tag*{{\qed}}
\end{align*}
\subsubsection{Proof of \ref{cor:mean}}

Jensen's and Markov's inequality yield for $\rate^{2}$ from \ref{eq:r2}
that
\begin{align*}
\P\Big(\mathcal{E}(\bar{f}_{\lambda,\rho})>\rate^{2}+\frac{\K{1}}{n}+\frac{\K{1}}{n}\log(2/\delta)\Big) & =\P\Big(\Vert \E[f_{\tilde{\theta}_{\lambda,\rho}}\mid \mathcal{D}_{n}]-f\Vert_{L^{2}(\P^{\mathbf{X}})}^{2}>\rate^{2}+\frac{\K{1}}{n}+\frac{\K{1}}{n}\log(2/\delta)\Big)\\
 & \le\P\Big(\E\big[\Vert f_{\tilde{\theta}_{\lambda,\rho}}-f\Vert_{L^{2}(\P^{\mathbf{X}})}^{2}\,\big\vert\,\mathcal{D}_{n}\big]>\rate^{2}+\frac{\K{1}}{n}+\frac{\K{1}}{n}\log(2/\delta)\Big)\\
 & =\P\Big(\int_{\frac{\K{1}}{n}\log(2/\delta)}^{\infty}\tilde{\Pi}_{\lambda,\rho}\big(\Vert f_{\tilde{\theta}_{\lambda,\rho}}-f\Vert_{L^{2}(\P^{\mathbf{X}})}^{2}>\rate^{2}+t\,\big\vert\,\mathcal{D}_{n}\big)\,\d t>\frac{\K{1}}{n}\Big)\\
 & \le\frac{n}{\K{1}{}}\int_{\frac{\K{1}{}}{n}\log(2/\delta)}^{\infty}\E\big[\tilde{\Pi}_{\lambda,\rho}\big(\Vert f_{\tilde{\theta}_{\lambda}}-f\Vert_{L^{2}(\P^{\mathbf{X}})}^{2}>\rate^{2}+t\,\big\vert\,\mathcal{D}_{n}\big)\big]\,\d t.
\end{align*}
Using \ref{thm:oracleinequality}, we thus obtain
\[
\P\Big(\mathcal{E}(\bar{f}_{\lambda,\rho})>\rate^{2}+\frac{\K{1}{}}{n}+\frac{\K{1}{}}{n}\log(2/\delta)\Big)\le\frac{2n}{\K{1}{}}\int_{\frac{\K{1}{}}{n}\log(2/\delta)}^{\infty} \e^{-n t/\K{1}{}}\,\d t=\delta.\tag*{{\qed}}
\]

\subsubsection{Proof of \ref{prop:rate}}

We combine arguments from \cite{schmidthieber2020} with the approximation
results from \cite{KohlerLanger2021}. By rescaling,
we can rewrite
\[
f=f_{\hdepth}\circ\cdots\circ f_{0}=h_{\hdepth}\circ\cdots\circ h_{0}
\]
with $h_{i}=(h_{ij})_{j=1,\dots,d_{i+1}}$, where $\tilde h_{0j}\in\mathcal{C}_{t_{0}}^{\beta_{0}}([0,1]^{t_{0}},1)$,
$\tilde h_{ij}\in\mathcal{C}_{t_{i}}^{\beta_{i}}\big([0,1]^{t_{i}},(2\cholder)^{\beta_{i}}\big)$
for $i=1,\dots,\hdepth-1$ and $\tilde h_{\hdepth j}\in\mathcal{C}_{t_{\hdepth}}^{\beta_{q}}\big([0,1]^{t_{\hdepth}},\cholder(2\cholder)^{\beta_{\hdepth}}\big)$
and $h_{ij}$ is $\tilde h_{ij}$ understood as a function in $d_{i}$
instead of $t_{i}$ arguments. 

\bigskip{}

We want to show that there exists a constant $C_i$ such that for any $M_{i}\in\N$ we can find sufficiently
large $L_{i},r_{i}\in\N$ and
a neural network $\tilde g_{ij}\in\mathcal{G}(t_{i},L_{i},r_{i})$
with ${\pardim_{L_i,r_i}}=c_{i}M^{t_{i}}$ parameters and
\begin{equation}
\Vert\tilde h_{ij}-\tilde g_{ij}\Vert_{L^{\infty}([0,1]^{t_{i}})}\le C_{i}M_{i}^{-2\beta_{i}}.\label{eq:building_blocks_networks}
\end{equation}
To construct such $g_{ij}$, we use Theorem 2(a) from \cite{KohlerLanger2021}.
Their conditions
\begin{enumerate}
\item $L_{i}	\ge5+\lceil\log_{4}(M^{2\beta_{i}})\rceil\big(\lceil\log_{2}(\max\{\lfloor\beta_{i}\rfloor,t_{i}\}+1\rceil+1\big)$ and
\item $r_{i}	\ge2^{t_{i}+6}\binom{t_{i}+\lfloor\beta_{i}\rfloor}{t_{i}}t_{i}^{2}(\lfloor\beta_{i}\rfloor+1)M_{i}^{t_{i}}$
\end{enumerate}
can be satisfied for $L_{i}=C_{i}\log(M_i),r_i=C_iM_{i}^{t_{i}}$, where $C_{i}$ only depends on upper
bounds for $t_{i}$ and $\beta_{i}$. Hence, there exists a neural
network $\tilde g_{ij}\in\mathcal{G}(t_{i},L_{i},r_{i})$ with \ref{eq:building_blocks_networks}. Careful inspection of the proof of this  theorem reveals that the weights and shifts of $\tilde{g}_{ij}$ grow at most polynomially in $M$.
Since $t_{i}\le d_{i},r_{i}$, we can easily embed $\tilde g_{ij}$
into the class $\mathcal{G}(d_{i},L_{i},r_{i})$ by setting $g_{ij}=\tilde g_{ij}(W_{ij}\, \cdot )$,
where the matrix $W_{ij}\in\R^{t_{i}\times d_{i}}$ is chosen such
that $g_{ij}$ depends on the same $t_{i}$ many arguments as $h_{ij}$.
Note that the approximation accuracy of $\tilde g_{ij}$ carries over
to $g_{ij}$, that is 
\begin{equation}
\Vert h_{ij}-g_{ij}\Vert_{L^{\infty}([0,1]^{d_{i}})}\le\Vert\tilde h_{ij}-\tilde g_{ij}\Vert_{L^{\infty}([0,1]^{t_{i}})}\le C_{i}M_{i}^{-2\beta_{i}}.\label{eq:build_blocks_embedded}
\end{equation}

\noindent Setting $g=g_{\hdepth}\circ\cdots\circ g_{0}$ with $g_{i}=(g_{ij})_{j}$
we obtain a neural network $g\in\mathcal{G}(\inputdim,L,r)$ with $r=\max_{i=0,\dots,\hdepth}r_{i}d_{i+1}$
and $L=\sum_{i=0}^{\hdepth}L_{i}$.

\noindent Counting the number of parameters of $g$ and using $L_{i}=C_{i}M_{i}^{t_{i}}$,
we get
\[
\pardim_{L,r}\le \K{8}\sum_{i=0}^{\hdepth}L_ir_i^2
\]
for some $\K{8}>0$.

It follows from \citet[Lemma 3]{schmidthieber2020} and \ref{eq:build_blocks_embedded}
that 
\[
\Vert f-g\Vert_{L^{\infty}([0,1]^{\inputdim})}\le \cholder\prod_{l=0}^{\hdepth-1}(2\cholder)^{\beta_{l+1}}\sum_{i=0}^{\hdepth}\big\Vert\vert h_{i}-g_{i}\vert_{\infty}\big\Vert_{L^{\infty}([0,1]^{d_{i}})}^{\prod_{l=i+1}^{\hdepth}\beta_{l}\land1}\le \K{9}\sum_{i=0}^{\hdepth}M_{i}^{-2\beta_{i}},
\]
for some $\K{9} >0$.

\noindent Applying \ref{thm:oracleinequality} together with $\mathcal{E}(f_{\theta^{\ast}})\le\Vert f-g\Vert_{L^{\infty}([0,1]^{\inputdim})}^{2}$ we now obtain
\begin{equation*}
\mathcal{E}(\tilde f_{\lambda,\rho})\le \K{10}\sum_{i=0}^\hdepth M_i^{-4\beta_i}
+\frac{\K{10}}{n}\sum_{i=0}^{\hdepth}M_i^{2t_i}(\log n)^3+\K{10}\frac{\log(2/\delta)}{n}\label{eq:balance_decomposition}
\end{equation*}
with probability  at least $1-\delta$. Choosing 
\begin{equation}
    M_i=\Big\lceil\Big(\frac{n}{(\log n)^3}\Big)^{1/(4\beta_i+2t_i)}\Big\rceil
\end{equation}
ensures $L,r\le n$ for sufficiently large $n$, balances the first two terms in the upper bound \ref{eq:balance_decomposition} and thus yields the asserted convergence rate for $\tilde{f}_{\lambda,\rho}$.

\noindent The convergence rate for the posterior mean can be proved analogously using \ref{cor:mean}. \hfill\qed

\subsubsection{Proof of \ref{cor:rate_adaptive}}
The statement follows by choosing $L$ in the upper bound from \ref{thm:learningthewidth} as in the statement of \ref{prop:rate} and then using the same approximation result to control excess-risk of the corresponding oracle choice $\theta^\ast_r$.\hfill\qed

\subsection{Proofs of the auxiliary results\label{sec:RemainingProofs}}\label{sec:AuxProofs}

\subsubsection{Proof of \ref{lem:aux_a}}
Denoting $\mathcal B_\eta(\theta^*)=\{\theta\in\R^{\pardim}:|\theta-\theta^*|_\Theta\le\eta\}$, we have $\diff{\varrho_{\eta}}{\Pi}(\theta)=\1_{\mathcal B_\eta(\theta^*)}/\Pi(\mathcal B_\eta(\theta^*))$. If $\mathcal B_\eta(\theta^*)\subset[-B,B]^\pardim$, the uniformity of $\Pi$ yields
\[
\KL(\varrho_{\eta}\mid\Pi)=\int\log\Big(\diff{\varrho_{\eta}}{\Pi}\Big)\d\varrho_\eta=-\log\big(\Pi(\mathcal B_\eta(\theta^*)\big)
=-\log\big(\Pi(\mathcal B_\eta(0)\big)=\pardim\log(2B/\eta)-\log\operatorname{vol}(\mathcal B_1(0)).\tag*{\qed}
\]

\subsubsection{Proof of \ref{lem:Lipschitz}}
Set $\eta\coloneqq\vert\theta-\tilde{\theta}\vert_{\infty}$ and let $W^{(1)},\dots,W^{(L+1)},v^{(1)},\dots,v^{(L+1)}$
and $\tilde W^{(1)},\dots,\tilde W^{(L+1)},\tilde v^{(1)},\dots,\tilde v^{(L+1)}$
be the weights and shifts associated with $\theta$ and $\tilde{\theta}$,
respectively. Define $\tilde{\mathbf{x}}^{(l)}$, $l=0,\dots,L+1$, analogously
to \ref{eq:neurons}. We can recursively deduce from the Lipschitz-continuity
of $\phi$ that for $l=2,\dots,L$: 
\begin{align}
\vert\mathbf{x}^{(1)}\vert_{1} & \le\vert W^{(1)}\mathbf{x}\vert_{1}+\vert v^{(1)}\vert_{1}\\
 & \le2rB(\vert\mathbf{x}\vert_{1}\lor1),\\
\vert\mathbf{x}^{(1)}-\tilde{\mathbf{x}}^{(1)}\vert_{1} & \le\vert W^{(1)}\mathbf{x}^{(0)}+v^{(1)}-\tilde{W}^{(1)}\tilde{\mathbf{x}}^{(0)}-\tilde{v}^{(1)}\vert_{1}\\
 & \le\eta2r(\vert\mathbf{x}\vert_{1}\lor1),\\
\vert\mathbf{x}^{(l)}\vert_{1} & \le\vert W^{(l)}\mathbf{x}^{(l-1)}\vert_{1}+\vert v^{(l)}\vert_{1}\\
 & \le2rB(\vert\mathbf{x}^{(l-1)}\vert_{1}\vee1)\qquad\text{and}\\
\vert\mathbf{x}^{(l)}-\tilde{\mathbf{x}}^{(l)}\vert_{1} & \le\vert W^{(l)}\mathbf{x}^{(l-1)}+v^{(l)}-\tilde{W}^{(l)}\tilde{\mathbf{x}}^{(l-1)}-\tilde{v}^{(l)}\vert_{1}\\
 & \le\vert(W^{(l)}-\tilde{W}^{(l)})\mathbf{x}^{(l-1)}\vert_{1}+\vert\tilde{W}^{(l)}(\mathbf{x}^{(l-1)}-\tilde{\mathbf{x}}^{(l-1)})\vert_{1}+\vert v^{(l)}-\tilde{v}^{(l)}\vert_{1}\\
 & \le\eta2r(\vert\mathbf{x}^{(l-1)}\vert_{1}\lor1)+rB\vert\mathbf{x}^{(l-1)}-\tilde{\mathbf{x}}^{(l-1)}\vert_{1}.
\end{align}
Therefore, 
\begin{align}
\vert\mathbf{x}^{(L)}\vert_{1} & \le(2rB)^{L-1}(\vert\mathbf{x}^{(1)}\vert_{1}\lor1)\\
 & \le(2rB)^{L}(\vert\mathbf{x}\vert_{1}\lor1)\qquad\text{and}\\
\vert\mathbf{x}^{(L)}-\tilde{\mathbf{x}}^{(L)}\vert_{1} & \le\eta2r\sum_{k=1}^{L-1}(rB)^{k-1}(\vert\mathbf{x}^{(L-k)}\vert_{1}\lor1)+(rB)^{L-1}\vert\mathbf{x}^{(1)}-\tilde{\mathbf{x}}^{(1)}\vert_{1}\\
 & \le\eta2^{(L+1)}r(\vert\mathbf{x}\vert_{1}\lor1)(rB)^{L-1}.
\end{align}
Since the clipping function $y\mapsto(-\Cf)\vee(y\wedge \Cf)$ has Lipschitz
constant $1$, we conclude 
\begin{align}
\vert f_{\theta}(\mathbf{x})-f_{\tilde{\theta}}(\mathbf{x})\vert & \le\vert g_{\theta}(\mathbf{x})-g_{\tilde{\theta}}(\mathbf{x})\vert\\
 & =\vert\mathbf{\mathbf{x}}^{(L+1)}-\tilde{\mathbf{x}}^{(L+1)}\vert\\
 & =\vert W^{(L+1)}\mathbf{x}^{(L)}+v^{(L+1)}-\tilde{W}^{(L+1)}\tilde{\mathbf{x}}^{(L)}-\tilde{v}^{(L+1)}\vert\\
 & \le\vert(W^{(L+1)}-\tilde{W}^{(L+1)})\mathbf{x}^{(L)}\vert+\vert\tilde{W}^{(L+1)}(\mathbf{x}^{(L)}-\tilde{\mathbf{x}}^{(L)})\vert+\vert v^{(L+1)}-\tilde{v}^{(L+1)}\vert\\
 & \le r\vert W^{(L+1)}-\tilde{W}^{(L+1)}\vert_{\infty}\vert\mathbf{x}^{(L)}\vert_{1}+r\vert\tilde{W}^{(L+1)}\vert_{\infty}\vert\mathbf{x}^{(L)}-\tilde{\mathbf{x}}^{(L)}\vert_{1}+\vert v^{(L+1)}-\tilde{v}^{(L+1)}\vert\\
 & \le\eta r(2rB)^{L}(\vert \mathbf{x}\vert_{1}\lor1)+\eta(rB)^{L}2^{L+1}(\vert\mathbf{x}\vert_{1}\lor1)+\eta\\
 & \le\eta4(2rB)^{L}(\vert\mathbf{x}\vert_{1}\lor1).
\end{align}
For shallow neural networks we proceed slightly differently. We have
\begin{align*}
    |\mathbf x^{(1)}|_\infty&\le|W^{(1)}|_\infty|\mathbf x|_1+|v^{(1)}|_\infty\le B(|\mathbf x|_1+1),\\
     |\mathbf x^{(1)}-\tilde{\mathbf x}^{(1)}|_1&\le|W^{(1)}-\tilde W^{(1)}|_1|\mathbf x|_\infty+|v^{(1)}-\tilde v^{(1)}|_1
\end{align*}
and thus
\begin{align*}
    \vert f_{\theta}(\mathbf{x})-f_{\tilde{\theta}}(\mathbf{x})\vert & \le\vert g_{\theta}(\mathbf{x})-g_{\tilde{\theta}}(\mathbf{x})\vert\\
 &\le\vert(W^{(2)}-\tilde{W}^{(2)})\mathbf{x}^{(1)}\vert+\vert\tilde{W}^{(2)}(\mathbf{x}^{(1)}-\tilde{\mathbf{x}}^{(1)})\vert+\vert v^{(2)}-\tilde{v}^{(2)}\vert\\
 &\le \vert W^{(2)}-\tilde{W}^{(2)}\vert_1|\mathbf{x}^{(1)}|_\infty+\vert\tilde{W}^{(2)}\vert_\infty\vert \mathbf{x}^{(1)}-\tilde{\mathbf{x}}^{(1)}\vert_1+\vert v^{(2)}-\tilde{v}^{(2)}\vert\\
 &\le B(|\mathbf x|_1+1)|\theta-\tilde\theta|_1.\tag*{\qed}
\end{align*}

\subsubsection{Proof of \ref{lem:aux_adaptive}}
We will show that
\begin{equation}
    \frac{\d \varrho_{r,\eta}}{\d\widecheck{\Pi}}=2^r(1-2^{-n})\frac{\d \varrho_{r,\eta}}{\d\Pi_r},\label{eq:rhodensity}
\end{equation}
from which we can deduce
\begin{align}
\KL(\varrho_{r,\eta}\mid\widecheck{\Pi})=\int\log\Big(\frac{\d\varrho_{r,\eta}}{\d\widecheck{\Pi}}\Big)\,\d\varrho_{r,\eta}=\int\log\Big(\frac{\d\varrho_{r,\eta}}{\d\Pi_r}\Big)\,\d\varrho_{r,\eta}+\log(2^{r}(1-2^{-n}))\le\KL(\varrho_{r,\eta}\mid\Pi_{r})+r.
\end{align}
Since $\varrho_{r,\eta}$ and $\Pi_r$ are product measures, their KL-divergence is equal to the sum of the KL-divergences in each of the $ \pardim_r$ factors. For each such factor, we are comparing
\[
\mathcal{U}([(\theta^{\ast}_r)_{i}-\eta,(\theta^{\ast}_r)_{i}+\eta]\cap[-B,B])\qquad\text{with}\qquad\mathcal{U}([-B,B]),
\]
where $(\theta^{\ast}_r)_i$ denotes the $i$-th entry of $\theta^\ast_r$.
The KL-divergence of these distributions is equal to
\[
\log\Big(\frac{\lebesgue([-B,B])}{\lebesgue([(\theta^{\ast}_r)_{i}-\eta,(\theta^{\ast}_r)_{i}+\eta]\cap[-B,B])}\Big)\le\log\Big(\frac{\lebesgue([-B,B])}{\lebesgue([0,\eta])}\Big)=\log(2B/\eta)
\]
and the lemma follows.

For \ref{eq:rhodensity}, note that $\varrho_{r, \eta}$ can only assign a positive probability to subsets $A\subseteq[-B, B]^{\pardim_r}$. Hence,
\begin{align}
    \varrho_{r,\eta}(A)=\int_A\frac{\d \varrho_{r,\eta}}{\d \widecheck{\Pi}}\,\d\widecheck{\Pi}=(1-2^{-n })^{-1}\sum_{l=1}^n 2^{-l}\int_A \frac{\d \varrho_{r,\eta}}{\d\widecheck{\Pi}}\,\d\Pi_l=(1-2^{-n })^{-1}2^{-r}\int_A \frac{\d\varrho_{r,\eta}}{\d\widecheck{\Pi}}\,\d\Pi_r.\tag*{{\qed}}
\end{align}
\bibliographystyle{apalike2}
\bibliography{references}

\end{document}